\documentclass{article}

\usepackage{microtype}
\usepackage{graphicx}
\usepackage{subcaption}
\usepackage{booktabs} 

\usepackage{hyperref}
\usepackage{xurl}

\usepackage{algorithm}
\usepackage{algpseudocode}

\usepackage[accepted]{icml2026}
\usepackage{wan}

\usepackage{amsmath}
\usepackage{amssymb}
\usepackage{mathtools}
\usepackage{amsthm}

\usepackage[capitalize,noabbrev]{cleveref}

\theoremstyle{plain}

\newtheorem{theorem}{Theorem}
\newtheorem{lemma}{Lemma}
\newtheorem{remark}{Remark}

\newtheorem{proposition}{Proposition}
\newtheorem{definition}{Definition}

\newtheorem{assumption}{Assumption}

\crefname{theorem}{Theorem}{Theorems}
\crefname{lemma}{Lemma}{Lemmas}
\crefname{remark}{Remark}{Remarks}
\crefname{corollary}{Corollary}{Corollaries}
\crefname{observation}{Observation}{Observations}
\crefname{proposition}{Proposition}{Propositions}
\crefname{definition}{Definition}{Definitions}
\crefname{claim}{Claim}{Claims}
\crefname{fact}{Fact}{Facts}
\crefname{assumption}{Assumption}{Assumptions}
\crefname{example}{Example}{Examples}
\crefname{conjecture}{Conjecture}{Conjectures}

\usepackage[textsize=tiny]{todonotes}

\icmltitlerunning{Corrected Samplers for Discrete Flow Models}

\begin{document}

\twocolumn[
  \icmltitle{Corrected Samplers for Discrete Flow Models}



  \icmlsetsymbol{equal}{*}

  \begin{icmlauthorlist}
    \icmlauthor{Zhengyan Wan}{ecnu}
    \icmlauthor{Yidong Ouyang}{ucla}
    \icmlauthor{Liyan Xie}{umn}
    \icmlauthor{Hongyuan Zha$^\dagger$}{cuhksz}
    \icmlauthor{Fang Fang$^\dagger$}{ecnu}
    \icmlauthor{Guang Cheng}{ucla}
  \end{icmlauthorlist}

  \icmlaffiliation{ecnu}{School of Statistics, East China Normal University}
  \icmlaffiliation{ucla}{Department of Statistics and Data Science, University of California, Los Angeles}
  \icmlaffiliation{umn}{Department of Industrial and Systems Engineering, University of Minnesota}
  \icmlaffiliation{cuhksz}{School of Data Science, The Chinese University of Hong Kong, Shenzhen}

  \icmlcorrespondingauthor{Fang Fang}{ffang@sfs.ecnu.edu.cn}
  \icmlcorrespondingauthor{Hongyuan Zha}{zhahy@cuhk.edu.cn}

  \icmlkeywords{Machine Learning, ICML}

  \vskip 0.3in
]



\printAffiliationsAndNotice{}  

\begin{abstract}
  Discrete flow models (DFMs) have been proposed to learn the data distribution on finite state space, offering a flexible framework as an alternative to discrete diffusion models. A line of recent work has studied samplers for discrete diffusion models, such as tau-leaping and Euler solver. However, these samplers require a large number of iterations to control discretization error, since the transition rates are frozen in time and evaluated at the initial state within each time interval.  Moreover, theoretical results for these samplers often require boundedness conditions of the transition rate or they focus on a specific type of source distributions. To address those limitations, we establish non-asymptotic discretization error bounds for those samplers without any restriction on transition rates and source distributions, under the framework of discrete flow models. Furthermore, by analyzing a one-step lower bound of the Euler sampler, we propose two corrected samplers: \textit{time-corrected sampler} and \textit{location-corrected sampler}, which can reduce the discretization error of tau-leaping and Euler solver with almost no additional computational cost. We rigorously show that the location-corrected sampler has a lower complexity than existing parallel samplers. We validate the effectiveness of the proposed method by achieving better generation quality with reduced inference time on simulations and text-to-image generation tasks. Code can be found in \url{https://github.com/WanZhengyan/Corrected-Samplers-for-Discrete-Flow-Models}.
\end{abstract}

\section{Introduction}

Discrete flow models (DFMs) \citep{campbell2024generative,gat2024discrete,shaul2024flow} arise in various practical applications, including but not limited to graph generation \citep{yimingdefog}, visual generation and multimodal understanding \citep{wang2025fudoki}, and video generation \citep{fuest2025maskflow,deng2025uniform}. Unlike discrete diffusion models, which learn the time-reversal of a forward continuous-time Markov chain (CTMC), DFMs learn transition rates by matching conditional rate and estimated rate, parallel to continuous flow matching \citep{liu2022flow,lipman2022flow,albergo2022building}. Despite their practical success \citep{wang2025fudoki}, the theoretical analysis for DFMs remains limited. Existing efforts primarily focus on error analysis with the uniformization technique \citep{wan2025error} or on the estimation error of empirical risk minimization \citep{su2025theoretical}. Although uniformization \citep{chen2024convergence,zhang2024convergence,ren2024discrete,liang2025absorb} can simulate CTMCs in an exact way, it is inefficient due to the lack of parallelism. On the other hand, tau-leaping and Euler solvers require a large number of function evaluation to control discretization error \citep{ren2024discrete,liang2025absorb,liang2025discrete}. Additionally, \citet{ren2025fast} developed high-order tau-leaping \citep{hu2011weak} for discrete diffusion models and established theoretical results in an asymptotic regime; however, they do not provide iteration complexity in terms of dimension for the sampler.

\begin{figure*}[h!]
    \centering    \includegraphics[width=1\linewidth]{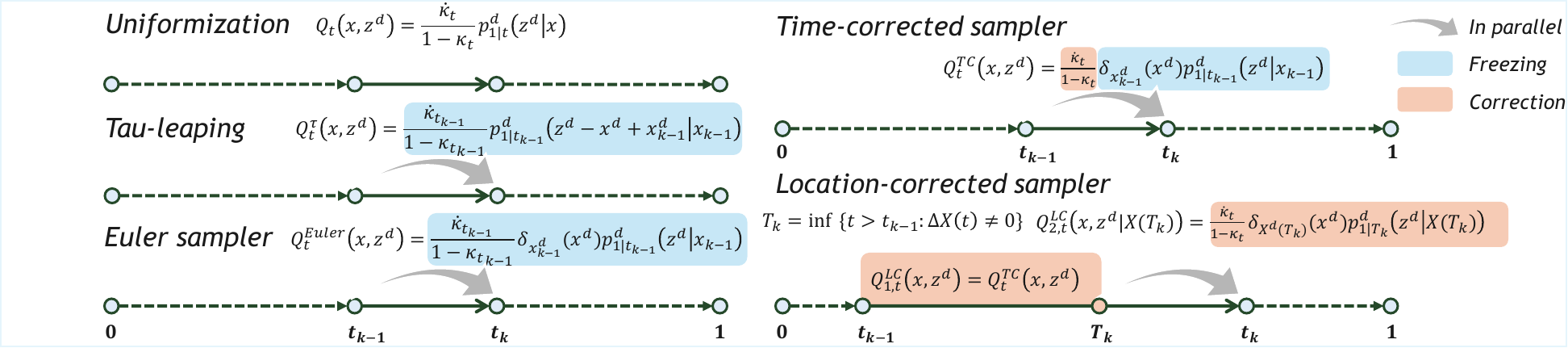}
    \caption{\small Overview of the transition rates of samplers in the $k$-th time interval ($z^d\neq x^d$). Tau-leaping and Euler sampler freeze the time variables of both time schedule and the posterior in the transition rates. For the time-corrected sampler, we correct the time variable of the time schedule without additional computational cost. For the location-corrected sampler, we not only correct the time variable of the schedule but also correct the location after the first jump in each timestep by constructing a two-stage transition rate; this sampler requires at most 2 function calls in each timestep.}
    \label{fig:overview}
    \vspace{-12pt}
\end{figure*}

In this paper, we rigorously establish the non-asymptotic total variation bound of tau-leaping and Euler sampler for discrete flow models in a unified framework. Rather than the conditions on the transition rate imposed in existing works, we only require the condition of finite estimation error. After analyzing a one-step lower bound of the Euler sampler, we propose two novel samplers: \textit{time-corrected sampler} and \textit{location-corrected sampler}. The time-corrected sampler corrects the time variable of the time schedule in the transition structure of the Euler sampler without additional computational cost. For the location-corrected sampler, we adjust the location of function evaluation after the first jump in each timestep (like a randomized midpoint sampler) by considering a (two-stage) jump process in each time interval, which only requires \textit{at most two} function calls in each timestep. We also derive the non-asymptotic upper bounds for the proposed corrected samplers, and show that the location-corrected sampler achieves smaller iteration complexity than other samplers; that is, it requires fewer steps to reach the same level of accuracy. See \cref{fig:overview} for the overview of those samplers. The proposed method achieves high generation quality with reduced inference time on simulation and text-to-image generation tasks.

\noindent{\bf Contribution.} Our main contributions are as follows.

\begin{enumerate}
    \item We establish the non-asymptotic upper bounds for tau-leaping and Euler samplers without imposing any constraints on the data/source distributions or boundedness conditions on transition rates, and further systematically analyze the one-step lower bound for the Euler sampler.
    \item We propose two corrected samplers: \textit{time-corrected sampler} and \textit{location-corrected sampler}. The non-asymptotic bounds for the proposed corrected samplers are derived, which reveal that the location-corrected sampler achieves lower iteration complexity than tau-leaping and Euler samplers.
    \item We evaluate the proposed samplers on the low-dimensional simulation and high-dimensional text-to-image generation tasks, demonstrating that the corrected samplers achieve competitive performance compared to other parallel samplers.
\end{enumerate}

All technical proofs, notations, and some further detailed discussions are deferred to the Appendix.

\section{Discrete Flow-Based Models and Continuous-Time Jump Processes}

In this section, we briefly review background on discrete flow-based models and introduce a class of two-stage jump processes for constructing samplers with history-dependent transition rates. 

\subsection{Discrete Flow Models}

We consider a target data distribution with discrete support $\mc{S}^\mc{D}$, where $\mc{S}=\set{1,2,\dots,\abs{\mc{S}}}$. Let $p_1(x)$ denote its probability mass function. Our goal is to construct a flow-based model that produces samples from this distribution. To this end, we first define continuous-time Markov chains \citep[CTMCs,][]{norris1998markov} on the probability space $(\Omega,\mathcal{F},\mathbbm{P})$ as follows.


\begin{definition}[CTMC]\label{def:CTMC}
Let $Q_t(x,z)_{x,z\in \mc{S}^\mc{D}}$ be a time-varying rate matrix satisfying: (a) $Q_t(x,z)$ is continuous in $t$; (b) $Q_t(x,z)\ge 0$ for any $z\neq x$; (c) $\sum_{z\in\mc{S}^\mc{D}}Q_t(x,z)=0$ for any $x\in\mc{S}^\mc{D}$. A stochastic process $\set{X(t)}_{t\ge0}$ taking values in $\mc{S}^\mc{D}$ is called a continuous time Markov chain with transition rate matrix $Q_t(x,z)_{z,x\in \mc{S}^\mc{D}}$ if for any $z,x\in\mc{S}^\mc{D}$ and $h>0$, we have
\[
\P(X(t+h)=z|X(t)=x)=\delta_x(z)+Q_t(x,z)h+o(h),
\]
and
\[
\P(X(t+h)=z|\mc{F}_t)=\P(X(t+h)=z|X(t)),
\]
where $\mc{F}_t=\sigma(\set{X(s):0\leq s\leq t})$ denotes the natural filtration.
\end{definition}

Given a CTMC $\set{X(t)}_{t\ge0}$ with transition rate $Q_t$, its marginal densities $\set{p_t}_{t\ge0}$ satisfy the following Kolmogorov forward equation:
\begin{align*}
\dot{p}_t(x)=\underbrace{\sum_{z\neq x}p_t(z)Q_t(z,x)}_{\text{incoming flux}}-\underbrace{\sum_{z\neq x}p_t(x)Q_t(x,z)}_{\text{outgoing flux}}.
\end{align*}
In this work, we say $Q_t$ can generate the probability path $p_t$, if they satisfy the above Kolmogorov forward equation.

We aim to learn a transition rate $Q_t$ of a CTMC $\set{X(t)}_{0\le t \le 1}$ that can transport from a source distribution $p_0$ to a target data distribution $p_1$. To obtain such a transition rate for sampling, a natural method is to learn the conditional expectation of the conditional transition rate $Q_t(x,z|x_1)$ that generates the conditional probability path $p_{t|1}(\cdot|x_1)$, since $Q_t(x,z)=\E\brac{Q_\rvt(x,z|X(1))|X(\rvt)=x,\rvt=t}$ can generate the target probability path $p_t$ \citep[see Proposition 3.1 of][]{campbell2024generative}, i.e., it satisfies the Kolmogorov forward equation, where $X(\rvt)\sim p_{\rvt|1}(\cdot|X(1))$ given $X(1)$ and $\rvt\sim\mc{U}([0,1])$. Therefore, it suffices to specify the conditional probability path and conditional transition rate.

It is worth noting that in the sampling stage, considering a $\abs{\mc{S}}^\mc{D}$-dimensional vector-valued function $\paren{Q_t(x,z)}_{z\in\mc{\mc{S}}^\mc{D}}$ of current time $t$ and state $x$ is intractable when $\mc{D}$ is relatively large. To handle such a high-dimensional scenario, a common approach is to construct a coordinate-wise conditional probability path and transition rate as follows:
\begin{equation}\label{eq:coordinate-wise conditional path}\begin{aligned}
   p_{t|1}(x|x_1)=&~\prod_{d=1}^{\mc{D}}p^d_{t|1}(x^d|x_1^d);\\
   Q_t(x,z|x_1)=&~\sum_{d=1}^\mc{D}\delta_{x^{\bsl d}}(z^{\bsl d})Q_t^{d}(x^d,z^d|x_1^d),
\end{aligned}\end{equation}
which means that the elements of the vector $X(t)$ are independent conditional on $X(1)$. Here, $Q_t^{d}(x^d,z^d|x_1^d)$ is the conditional transition rate that generates the conditional probability path $p_{t|1}^d$; then $Q_t(x,z|x_1)$ can generate $p_{t|1}$ by the following proposition.
\begin{proposition}[Independent coupling]\label{prop:independent-coupling} 
    If the marginal transition rate $Q_t^d(x^d,z^d)$ can generate marginal probability path $p^d_t(x^d)$ for any $d\in\brac{\mc{D}}$, then the joint transition rate $Q_t(x,z)=\sum_{d=1}^\mc{D}\delta_{x^{\bsl d}}(z^{\bsl d})Q_t^d(x^d,z^d)$ can generate $p_t(x_t)=\prod_{d=1}^\mc{D}p^d_t(x^d_t)$.
\end{proposition}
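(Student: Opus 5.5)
The plan is to verify the Kolmogorov forward equation directly for the product path $p_t(x)=\prod_{d=1}^{\mc{D}}p_t^d(x^d)$ and the coordinate-wise rate $Q_t(x,z)=\sum_{d=1}^{\mc{D}}\delta_{x^{\bsl d}}(z^{\bsl d})Q_t^d(x^d,z^d)$. Two easy checks come first. Conditions (a)--(c) of \cref{def:CTMC} pass from the $Q_t^d$ to $Q_t$. Row sums vanish because $\sum_z \delta_{x^{\bsl d}}(z^{\bsl d})Q_t^d(x^d,z^d)=\sum_{z^d}Q_t^d(x^d,z^d)=0$ for each $d$. Off-diagonal entries are nonnegative because, when $z\neq x$, the only terms that survive have $z^d\neq x^d$.

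It is convenient to write the forward equation in the equivalent form $\dot p_t(x)=\sum_{z}p_t(z)Q_t(z,x)$, where the sum includes $z=x$. This form is equivalent to the flux form precisely because the row sums vanish, and the marginal hypothesis likewise reads $\dot p_t^d(x^d)=\sum_{z^d\in\mc{S}}p_t^d(z^d)Q_t^d(z^d,x^d)$.

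The first step differentiates the product with the product rule:
\[
\dot p_t(x)=\sum_{d=1}^{\mc{D}}\dot p_t^d(x^d)\prod_{j\neq d}p_t^j(x^j).
\]
The second step expands the right-hand side:
\[
\sum_z p_t(z)Q_t(z,x)=\sum_{d=1}^{\mc{D}}\sum_{z}\delta_{z^{\bsl d}}(x^{\bsl d})Q_t^d(z^d,x^d)\prod_{j}p_t^j(z^j).
\]
For fixed $d$, the delta forces $z^j=x^j$ for every $j\neq d$. The factors $p_t^j(z^j)$ with $j\ne d$ therefore become $\prod_{j\neq d}p_t^j(x^j)$, and only the sum over $z^d$ remains, namely $\sum_{z^d}p_t^d(z^d)Q_t^d(z^d,x^d)=\dot p_t^d(x^d)$ by hypothesis. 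This gives exactly the product-rule expression, which completes the argument.

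The only step requiring care is the bookkeeping of the diagonal terms. The joint diagonal satisfies $Q_t(x,x)=\sum_d Q_t^d(x^d,x^d)$, and it must split consistently across coordinates. I would handle this by working throughout in the full-sum form of the forward equation rather than the flux form, so the diagonal is absorbed automatically and no separate treatment is needed. The remaining steps are routine algebra.
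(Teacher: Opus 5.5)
Your proposal is correct. It checks the forward equation in full-sum form, applies the product rule, and lets $\delta_{z^{\bsl d}}(x^{\bsl d})$ collapse every coordinate except the $d$-th; your diagonal bookkeeping, $Q_t(x,x)=\sum_d Q_t^d(x^d,x^d)$ with zero row sums, is exactly what makes the full-sum form valid. The paper states this proposition as a standard background fact and gives no written proof, so there is no separate route to compare against; yours is the standard verification it implicitly relies on.
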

\begin{remark}[Parallel samplers]
    \cref{prop:independent-coupling} is useful for constructing parallel samplers. To be specific, if the transition rate of a sampler has the form of $Q_t(x,z;x_{k-1})=\sum_{d=1}^\mc{D}\delta_{x^{\bsl d}}(z^{\bsl d})Q_t^d(x^d,z^d;x_{k-1})$ in the time interval $[t_{k-1},t_k]$, then each token can be updated independently via \cref{prop:independent-coupling}, where $x_{k-1}$ is the initial state in this interval (e.g., transition rate (\eqref{eq:tau-leaping rate}) of tau-leaping).
\end{remark}

In this work, we focus mainly on the widely used probability path and the associated conditional transition rate from prior work \citep{campbell2024generative,gat2024discrete}:
\begin{equation}\label{eq:mixture path}
    \begin{aligned}
    p_{{t|1}}^d(x^d|x_1^d)=&~(1-\kappa_t)p_0^d(x^d)+\kappa_t\delta_{x_1^d}(x^d) ;\\
    Q_t^{d}(x^d,z^d|x_1^d)=&~\frac{\dot{\kappa}_t}{1-\kappa_t}(\delta_{x_1^d}(z^d)-\delta_{x^d}(z^d)),
\end{aligned}
\end{equation}
where $\kappa_t:[0,1]\to[0,1]$ is a non-decreasing and continuous function satisfying $ \kappa_0=0$ and $\kappa_1=1$. Note that the conditional transition rate will blow up as $t\to1^-$. Thus, in the sampling stage, we employ the early stopping technique; that is, we only consider the time interval $[0,1-\delta]$ for a sufficiently small positive parameter $\delta$. 

After defining the conditional path and rate, the (oracle) unconditional transition rate is given by
\begin{equation}\label{eq:oracle transition rate}
\begin{aligned}
    Q_t(x,z)=&~\sum_{d=1}^\mc{D}\delta_{x^{\bsl d}}(z^{\bsl d})\sum_{x_1^d}Q_t^{d}(x^d,z^d|x_1^d)p^d_{1|t}(x_1^d|x)\\
    \overset{\triangle}{=}&~\sum_{d=1}^\mc{D}\delta_{x^{\bsl d}}(z^{\bsl d})Q_t^{d}(x,z^d),
\end{aligned}
\end{equation}
where $p_{1|t}^d(x_1^d|x)=\sum_{x^{\bsl d}_1}p_{1|t}(x_1|x)$. Therefore, it suffices to consider a sparse rate matrix $Q_t$ satisfying $Q_t(x,z)=0$ for any $z,x\in\mc{S}^\mc{D}$ with Hamming distance $\dd^H(z,x)> 1$. 
In our analysis, unlike \citet{chen2024convergence,zhang2024convergence,pham2025discrete,wan2025error}, we do not restrict the source distribution $p_0$ to be of a uniform distribution.

\noindent{\bf Training.} To learn the transition rate, a common approach is to minimize the Bregman divergence between the conditional generator and the estimated generator \citep{holderrieth2024generator,lipman2024flow,shaul2024flow,wan2025error}:
\begin{align}\label{eq:training-rate}
    \E\bracBig{\sum_{z\neq X(\rvt)}D_F\parenBig{Q_\rvt(X(\rvt),z|X(1))\Big\|Q_\rvt(X(\rvt),z)}},
\end{align}
where $D_F$ is the Bregman divergence induced by $F(x)=x\log x$. In practice, one can train a logits model by minimizing a training objective in terms of posterior (\Cref{eq:training-posterior} in Appendix), which is \textit{equal} to the above formula.

\vspace{-6pt}
\subsection{Two-Stage Jump Processes}
\vspace{-6pt}
In this subsection, we introduce a class of jump processes, which includes CTMCs as a special case. Such a broader class allows transition rates to depend on history information (e.g. a stopping time), enabling refinement of samplers.

In general, we can consider the following (two-stage) transition rate:
\begin{equation}\label{eq:two-stage rate}
   \begin{aligned}
    Q_t(x,z)=&~\mathbbm{1}(T^X\ge t)\underbrace{Q_{1,t}(x,z)}_{\text{first-stage rate}}\\
    &~\quad+\mathbbm{1}(T^X< t)\underbrace{Q_{2,t}(x,z;X(T^X))}_{\text{second-stage rate}},
\end{aligned} 
\end{equation}
where $T^X=\inf\set{t>0:\Delta X(t)\neq 0}$ is the exit time of the process $X(t)$, $Q_{1,t}$ is a deterministic transition rate, and $Q_{2,t}$ is a transition rate depending on the location $X(T^X)$ after the first jump. Both transition rates $Q_{1,t}$ and $Q_{2,t}$ satisfy the rate properties defined in \cref{def:CTMC} given $X(T^X)$. In particular, $Q_t(x,z)$ is a transition rate for CTMC if $Q_{2,t}(x,z;X(T^X))=Q_{1,t}(x,z)$.

Similar to CTMCs, we can also derive several theoretical results for two-stage jump processes, including uniformization and a Girsanov-type theorem; see \cref{sec:results for jump processes} for details.


\noindent{\bf Exit time.} Conditioning on the initial point $X(0)$, one can directly calculate the distribution of the exit time of the jump process $X(t)$, which is crucial for our analysis and for deriving algorithms with different transition structures.
\begin{proposition}[Exit time]\label{prop:exit time}
Under the assumptions in \cref{prop:uniformization}, assume the distribution of $X(0)$ is a point mass at $x$ and define the exit time $T^X=\min\set{t>0:\Delta X(t)\neq 0}$. Denote $\mc{Q}(x,t)=\int_{0}^{t} Q_{1,s}(x,x)\dd s$. Then we have
\begin{align*}    \P(T^X>t)=\exp\parenBig{\mc{Q}(x,t)}.
\end{align*}
\end{proposition}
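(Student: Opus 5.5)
Before the first jump the process sits at $x$, so on the event $\{T^X\ge t\}$ only the first-stage rate $Q_{1,t}$ is active. The plan is to derive a Kolmogorov-type ODE for the survival function $S(t):=\P(T^X>t)$ and solve it. The cleanest route uses the uniformization construction of \cref{prop:uniformization}. Under its assumptions the jump process is realized from a Poisson clock of rate $\lambda\ge\sup_{s}\abs{Q_{1,s}(x,x)}$ (together with the second-stage bound, which plays no role before the first jump). At a clock ring at time $s$, the process leaves $x$ with probability $-Q_{1,s}(x,x)/\lambda$ and stays put otherwise. Hence $T^X$ is the first ring time that is accepted, i.e.\ a thinned Poisson process.

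First, I will identify $T^X$ as the first point of the thinned process. By the thinning property of Poisson processes with time-dependent acceptance probability $p(s)=-Q_{1,s}(x,x)/\lambda\in[0,1]$, the accepted rings form a Poisson process with intensity $\lambda p(s)=-Q_{1,s}(x,x)$. Then
\[
\P(T^X>t)=\P(\text{no accepted ring in }[0,t])=\exp\Big(-\int_0^t -Q_{1,s}(x,x)\,\dd s\Big)=\exp\big(\mc{Q}(x,t)\big).
\]
Here $Q_{1,s}(x,x)=-\sum_{z\ne x}Q_{1,s}(x,z)\le 0$, so the exponent is nonpositive, as it should be.

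As a backup route that avoids thinning, I can argue infinitesimally. By the rate definition applied to the first stage (valid on $\{T^X>t\}$, where $X(t)=x$ and the history is trivial),
\[
\P(T^X>t+h\mid T^X>t)=1+Q_{1,t}(x,x)h+o(h).
\]
Multiplying gives $S(t+h)=S(t)\big(1+Q_{1,t}(x,x)h+o(h)\big)$, hence $S'(t)=Q_{1,t}(x,x)S(t)$ with $S(0)=1$. Continuity of $t\mapsto Q_{1,t}$ lets me solve this linear ODE uniquely, which yields the claim.

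The main obstacle is justifying that only $Q_{1}$ matters before the first jump. In the two-stage rate \eqref{eq:two-stage rate}, the indicator $\mathbbm{1}(T^X\ge t)$ depends on the path, so I must check that on $\{T^X>t\}$ the conditional jump probability in $[t,t+h]$ is exactly governed by $Q_{1,t}(x,\cdot)$, with the $o(h)$ uniform enough to differentiate. In the uniformization picture this is immediate because the acceptance rule before the first jump uses $Q_{1,s}$ by construction. That is why I would take the thinning argument as the primary proof and the ODE as a sanity check.
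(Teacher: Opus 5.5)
Your proof is correct, but your primary (thinning) route is not the paper's.

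The paper argues by a discrete product-integral. It splits $[0,t]$ into $2^n$ dyadic pieces and writes $\{T^X>t\}$ as the intersection of the events that $X$ is constant on each piece. It then applies the chain rule for conditional probabilities. Each factor comes from the local one-step estimate in the proof of \cref{prop:uniformization}, namely $1+\int Q_{1,s}(x,x)\,\mathrm{d}s+R_{n,i}$ with $|R_{n,i}|=O(4^{-n})$. Summing logarithms and letting $n\to\infty$ gives the exponential. This is essentially the time-discretized form of your backup ODE argument. It also removes your concern about differentiating $S$, since only a uniform second-order remainder is needed.

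Your thinning argument gives the exact formula in one step, with no remainder bookkeeping. It also identifies $T^X$ as the first point of an inhomogeneous Poisson process with intensity $-Q_{1,s}(x,x)$, which is exactly the sampling interpretation the corrected samplers rely on. The paper's route, in turn, uses only local first-order rate information, so it would still work in settings without an explicit Poisson clock.

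One wording fix: the accepted rings of the actual process are not Poisson after the first acceptance, because the acceptance probability then depends on the new state and on $Q_{2}$. What is Poisson is the auxiliary thinning with the frozen probability $p(s)$, and its first point coincides with $T^X$.

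To make this fully elementary, condition on $n$ rings in $[0,t]$. Sequential conditioning on the history gives the no-jump probability $\prod_{j}(1-p(s_j))$, and the ring times are uniform order statistics. Averaging then gives $\sum_{n\ge 0}e^{-\lambda t}\frac{(\lambda t)^n}{n!}\bigl(\frac{1}{t}\int_0^t(1-p(s))\,\mathrm{d}s\bigr)^n=\exp\bigl(-\int_0^t\lambda p(s)\,\mathrm{d}s\bigr)$.
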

For a small time interval $[0,\tau]$, by \cref{prop:exit time}, $X(t)$ has at least one jump in this interval with probability $1-\exp\paren{\mc{Q}(x,t)}$ and there is no jump in this interval with probability $\exp\paren{\mc{Q}(x,t)}$.

\section{Tau-Leaping and Euler Samplers}
In this section, we discuss the tau-leaping algorithm and Euler sampler, and derive non-asymptotic upper bounds for those samplers. Finally, we will establish a one-step lower bound for Euler sampler, which motivates us to construct corrected samplers.

In the sequel, let $Q_t$ be the oracle transition rate defined in \eqref{eq:oracle transition rate}. Consider the time discretization scheme: $0=t_0<t_1<\cdots<t_K=1-\delta$, where $\delta$ is the early stopping parameter. On the $k$-th time interval $[t_{k-1},t_k]$, the tau-leaping process is a CTMC with transition rate $Q^\tau_t(x,z)=Q_{t_{k-1}}(x_{k-1},z-x+x_{k-1})$, which not only freezes the time variable $t=t_{k-1}$ but also fixes the first state argument of the transition rate at $x_{k-1}$ such that only one function call is needed in this interval \citep{campbell2022continuous}. Here $x_{k-1}$ is a fixed initial point at time $t_{k-1}$. Additionally, by \cref{prop:independent-coupling} and noting that
\begin{equation}\label{eq:tau-leaping rate}
    \begin{aligned}
    Q^\tau_t(x,z)
    =\sum_{d=1}^\mc{D}\delta_{x^{\bsl d}}(z^{\bsl d})Q_{t_{k-1}}^d(x_{k-1},z^d-x^d+x_{k-1}^d),
\end{aligned}
\end{equation}
the coordinates of the tau-leaping process are independent, which make it possible to achieve parallelism. The tau-leaping algorithm is presented in \cref{alg:Tau-leaping}.

However, as discussed in \cite{campbell2022continuous}, the tau-leaping process with rate $Q^\tau_t$ is intractable if the state space is restricted in $\mc{S}^\mc{D}$ rather than $\Z^\mc{D}$, which leads to an infinite KL divergence between the path measure of the idealized process (e.g. exact simulation by uniformization) and that of tau-leaping process. A natural method is to restrict the number of jumps of tau-leaping algorithm for each dimension (e.g. Algorithm 1 in \citet{campbell2022continuous} and Algorithm 3 in \citet{liang2025discrete}) and use the following transition rate:
\begin{equation}\label{eq:Euler rate}
    \begin{aligned}
    Q^{\text{Euler}}_t(x,z)=\sum_{d=1}^\mc{D}\delta_{x^{\bsl d}}(z^{\bsl d})\delta_{x_{k-1}^d}(x^d)Q_{t_{k-1}}^d(x_{k-1},z^d),
\end{aligned}
\end{equation}
which leads to an algorithm similar to the always-valid Euler solver used in \cite{shaul2024flow,wang2025fudoki}. Since there is at most one jump in each dimension, by \cref{prop:exit time}, we can sample from Bernoulli distribution with probability $\exp((t_{k-1}-{t_k})\frac{\dot{\kappa}_{t_{k-1}}}{1-\kappa_{t_{k-1}}}\lambda_k^d)$ to decide whether there is a jump for the $d$-th dimension in the $k$-th time interval, where $\lambda_k^d=\sum_{z^d\neq x_{k-1}^d}p_{1|t_{k-1}}^d(z^d|x_{k-1})$. This Euler solver (or so-called truncated tau-leaping) is described in \cref{alg:Euler sampler}.

\begin{remark}[Masked source distribution]
    By Proposition 6 in \cite{gat2024discrete}, if the source distribution is a point mass at $\mask$, then the posterior is time-independent. Thus, if there is no jump in the $k$-th time interval, then no function evaluation is required in the $(k+1)$-th time interval. In this case, for any $K\in\N$, the number of function evaluations is not greater than $\mc{D}$.
\end{remark}

\subsection{Non-Asymptotic Error Bound}
In this subsection, we present the non-asymptotic total variation convergence results for Tau-leaping algorithm and Euler solver. The non-asymptotic upper bound identifies the sources of discretization error, which are later shown to match the one-step lower bound. We highlight that our results hold for any source distribution and data distribution without any additional assumption for the oracle transition rate and neural network class. Throughout this article, we denote $M_k=\sup_{t\in[t_{k-1},t_k]}\abs{\dot{\kappa}_t/(1-\kappa_t)}$, which is the upper bound of the conditional transition rate $Q_t(x,z|x_1)$ for $\dd^H(x,z)=1$ in the $k$-th time interval.

\begin{remark}[Target set mismatch for tau-leaping]
    The reason why we only consider the total variation as metric here is that the target set of tau-leaping process and idealized process mismatch; that is, $Q^\tau_t(X(t-),z)$ and $Q_t(X(t-),z)$ might have different supports as a function of $z$ if there are multiple jumps in the time intervals, which results in an infinite KL divergence by \cref{thm:change of measure}. Unlike KL divergence, the total variation has the trivial bound $1$.
\end{remark}

\begin{assumption}[Estimation error]\label{con:estimation error}
   \begin{equation*}
   \resizebox{\linewidth}{!}{%
   $\displaystyle
   \begin{aligned}
   &\sum_{k=1}^K\E\bracBig{\E_{\P_k^{\text{Alg}}}\parenBig{\int_{t_{k-1}}^{t_k}\setBig{\sum_{z\neq X^{\text{Alg}}_k(s)}D_F\parenBig{Q^{\text{Alg}}_s||\hat{Q}_s^{\text{Alg}}}(X_k^{\text{Alg}}(s),z)}\dd s}}\\
   &=2(\varepsilon_{\text{Est}}^{\text{Alg}})^2<\infty,
   \end{aligned}$%
   }
   \end{equation*}
    where $\hat{Q}$ is the estimated rate depending on a dataset $\mathbbm{D}_n$, $X_k^{\text{Alg}}(t)$ is a process independent of $\mathbbm{D}_n$ with the rate $Q_t^{\text{Alg}}$ and $\P_k^{\text{Alg}}$ is the path measure of $X_k^{\text{Alg}}(t)$ in $[t_{k-1},t_k]$. Notably, this term is the KL divergence between two path measures of the processes with $Q^{\text{Alg}}_t$ and $\hat{Q}^{\text{Alg}}_t$. In particular, for tau-leaping and Euler sampler, the estimation error can be bounded by $2(\varepsilon_{\text{Est}}^{\text{Alg}})^2\leq\mc{D}\abs{\mc{S}}\sum_{k=1}^K\tau_k \sup_{\dd^H(x,z)=1}\E_{\mathbbm{D}_n}\brac{D_F\paren{Q^{\text{Alg}}_{t_{k-1}}||\hat{Q}_{t_{k-1}}^{\text{Alg}}}(x,z)}$.
\end{assumption}

\begin{theorem}[Tau-leaping and Euler sampler]\label{thm:convergence-tauleaping-euler}
Assume that \(\max_{k\in\brac{K}}\set{\tau_k\mc{D}M_k}\leq 1\). The tau-leaping and Euler sampler have the following bound for the total variation between the estimated and the oracle marginal distribution at time $t=1-\delta$ (up to a logarithmic factor):
\begin{equation*}
{\begin{aligned}
    &\E\brac{TV(p_{1-\delta},\hat{p}^{\text{Alg}}_{1-\delta})}\\
    \leq&~\sqrt{\frac{5}{2}\mc{D}\abs{\mc{S}}\sum_{k=1}^K\tau_k^2H_k}+\sqrt{3\mc{D}^2\abs{\mc{S}}\sum_{k=1}^K\tau_k^2M_k^2}+\varepsilon_{\text{Est}}^{\text{Alg}},
\end{aligned}}
\end{equation*}
where $H_k=\sup_{t\in[t_{k-1},t_k]}\abs{\frac{\ddot{\kappa}_{t}(1-\kappa_t)+(\dot{\kappa}_t)^2}{(1-\kappa_{t})^2}}$, and $\varepsilon_{\text{Est}}^{\text{Alg}}$ is defined in \cref{con:estimation error}.

\end{theorem}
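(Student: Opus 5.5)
We will compare three processes on $[0,1-\delta]$: the oracle CTMC with rate $Q_t$ (marginal $p_{1-\delta}$), the idealized sampler with the exact algorithmic rate $Q^{\text{Alg}}_t$ (tau-leaping \eqref{eq:tau-leaping rate} or Euler \eqref{eq:Euler rate}), and the implemented sampler with estimated rate $\hat Q^{\text{Alg}}_t$. The triangle inequality gives
\[
TV(p_{1-\delta},\hat p^{\text{Alg}}_{1-\delta})\le TV(p_{1-\delta},p^{\text{Alg}}_{1-\delta})+TV(p^{\text{Alg}}_{1-\delta},\hat p^{\text{Alg}}_{1-\delta}).
\]
For the second term we apply data processing to the path measures, then Pinsker and the Girsanov-type formula of \cref{thm:change of measure}. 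The resulting KL is exactly the quantity in \cref{con:estimation error}, so after taking expectations over $\mathbbm D_n$ and applying Jensen it contributes $\varepsilon^{\text{Alg}}_{\text{Est}}$. Since KL blows up for tau-leaping because of the support mismatch, we cannot run Girsanov directly for the discretization term. Instead we handle it interval by interval with a telescoping or coupling argument.

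\textbf{Localizing the discretization error.} On $[t_{k-1},t_k]$ we define a good event $G_k$ on which each coordinate jumps at most once. On $G_k$ the tau-leaping and Euler rates coincide, and the supports of $Q^{\text{Alg}}_t(X(t-),\cdot)$ and $Q_t(X(t-),\cdot)$ agree. The total rate is at most $\mc D M_k$ because $\sum_{z^d\ne x^d}Q^d_t\le M_k$. With $\tau_k\mc D M_k\le1$, \cref{prop:exit time} and uniformization (\cref{prop:uniformization}) then give $\P(G_k^c)\lesssim \mc D\tau_k^2M_k^2$ (per coordinate $\tau_k^2M_k^2$) for tau-leaping. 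For Euler, the event of two jumps along the oracle path plays the same role. On $G_k$ we bound the path KL between the oracle and algorithmic processes by Girsanov:
\[
\E\int_{t_{k-1}}^{t_k}\sum_{z\ne X(s)}D_F\big(Q_s\,\|\,Q^{\text{Alg}}_s\big)(X(s),z)\,\dd s ,
\]
and use $D_F(a\|b)\le (a-b)^2/b$, or a symmetric variant with the roles switched so that the denominator is the oracle rate.

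\textbf{Rate perturbation.} We write $Q^d_s(x,z^d)=\frac{\dot\kappa_s}{1-\kappa_s}p^d_{1|s}(z^d|x)$, and control $Q_s(X(s),z)-Q_{t_{k-1}}(X(t_{k-1}),z)$ by two pieces. The first is the time drift at a fixed state. Differentiating in $s$ gives the factor $\frac{\ddot\kappa(1-\kappa)+\dot\kappa^2}{(1-\kappa)^2}$ times the posterior. The posterior derivative follows from the Kolmogorov equation, and its magnitude is again of order $\frac{\dot\kappa}{1-\kappa}$ times quantities summing to $O(1)$ over $z^d$. This piece yields a squared error $\lesssim\tau_k^2H_k$ per coordinate and per target value, summed over $\mc D|\mc S|$ targets. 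The second is the state drift, which is nonzero only if some coordinate jumped in $[t_{k-1},s]$, an event of probability at most $\mc D M_k\tau_k$. On that event the rate difference is at most $2M_k$, so this piece contributes $\mc D\cdot|\mc S|\cdot\mc D M_k\tau_k\cdot M_k\cdot\tau_k$. These give the $\mc D^2|\mc S|\tau_k^2M_k^2$ term. Summing over $k$ via the chain rule for KL and applying Pinsker turns the bound into a square root, and the constants $5/2$ and $3$ come from bookkeeping including $\P(G_k^c)$.

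\textbf{Main obstacle.} The main obstacle is the ratio form of $D_F$: the posterior can be arbitrarily small, so $(a-b)^2/b$ is unbounded. This is exactly where boundedness assumptions enter earlier work. I would first try avoiding the ratio altogether. One option is to bound TV directly through a coupling or Duhamel-type formula, $TV\le\E\int\sum_z|Q_s-Q^{\text{Alg}}_s|\,\dd s$, but that gives a linear rather than square-root rate. So I would instead use a Hellinger-type bound $D_F(a\|b)\le$ const$\cdot(\sqrt a-\sqrt b)^2$ plus a linear term, and control $|\sqrt a-\sqrt b|^2\le|a-b|$. The time-derivative estimates would then have to be carried out in this Hellinger form, exploiting that the posterior sums to one over $z^d$. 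The $|\mc S|$ factor in the final bound suggests a per-entry sup bound summed over targets is acceptable. The logarithmic factor mentioned in the theorem likely arises here, from truncating small posterior values.
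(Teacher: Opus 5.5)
\textbf{There is a genuine gap in how you bound the discretization term.}

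Your claim that on $G_k$ the supports of $Q^{\text{Alg}}_t(X(t-),\cdot)$ and $Q_t(X(t-),\cdot)$ agree is false, for three reasons.
\begin{enumerate}
\item Once coordinate $d$ has jumped, the Euler intensity on that coordinate is identically $0$, while the oracle intensity $\frac{\dot\kappa_t}{1-\kappa_t}p^d_{1|t}(z^d|X(t))$ is generally positive.
\item Tau-leaping puts intensity on shifted targets $x^d+y^d-x^d_{k-1}$, some of which lie outside $\mc{S}$.
\item In the other coordinates, the oracle posterior at the new state may vanish where the frozen posterior at $x_{k-1}$ does not, and vice versa.
\end{enumerate}
All of this already happens on $G_k$. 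The Girsanov integrand involves the intensities of all targets, not only the jumps that are realized. So neither $D_{KL}(\P_k\|\P^{\text{Alg}}_k)$ nor $D_{KL}(\P^{\text{Alg}}_k\|\P_k)$ is finite in general. This is exactly why \cref{thm:one-step analysis} has to assume $Q_t>0$ on Hamming-one pairs.

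Your fallback does not repair this. For fixed $a>0$, $D_F(a\|b)\to\infty$ as $b\to0^+$, while $(\sqrt a-\sqrt b)^2$ and $\abs{a-b}$ stay bounded. Hence no inequality $D_F(a\|b)\le C(\sqrt a-\sqrt b)^2+c\abs{a-b}$ can hold. As written, nothing controls $TV(p_{1-\delta},p^{\text{Alg}}_{1-\delta})$.

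\textbf{What the paper does instead.} The missing idea is an intermediate process. On $[t_{k-1},t_k]$, given $x_{k-1}$, the paper takes the CTMC with rate $Q^\eps_t(x,z)=Q_t(x_{k-1},z-x+x_{k-1})\vee\eps_k$ on the union of the oracle and tau-leaping target sets. This process is time-correct but location-frozen, and its rate is bounded below by $\eps_k$ on both supports. The paper then bounds $D_{KL}(\P_k\|\P^\eps_k)$ and $D_{KL}(\P^{\text{Alg}}_k\|\P^\eps_k)$ separately, and combines them through the triangle inequality for TV, not for KL.
\begin{itemize}
\item In the first divergence, the no-jump branch costs at most $\eps_k$ per target. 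The jump branch, of probability at most $\tau_k\mc{D}M_k$, uses $D_F(a\|b)\le b\vee a\log(a/\eps_k)$; this is where the hidden logarithm comes from.
\item In the second divergence (tau-leaping), only the time freeze matters. The paper uses $D_F(a\|b)\le\frac{(a-b)^2}{2\eps_k}\vee(b-a+\eps_k)$ with $\abs{a-b}\le\tau_k(H_k+\mc{D}M_k^2)$ from \cref{lemma:lipschitz}.
\end{itemize}
The choice $\eps_k=\tau_k(H_k+\mc{D}M_k^2)$ balances the two branches. That is why the bound is linear rather than quadratic in $H_k$.

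\textbf{The coupling route you discarded actually works, and gives more.}
\begin{itemize}
\item Uniformize both processes at rate $\mc{D}M_k$ and maximally couple each jump. The per-interval TV is then at most $\E\int_{t_{k-1}}^{t_k}\sum_{z}\abs{Q_s-Q^{\text{Alg}}_s}(X(s),z)\,\dd s$ along the oracle path.
\item Before the first jump, the two rates differ only by the time freeze. The integrand is therefore at most $(s-t_{k-1})(\mc{D}H_k+2\mc{D}^2M_k^2)$, using \cref{lemma:backward equation} to get $\sum_{z^d}\abs{\dot p^d_{1|t}(z^d|x)}\le2\mc{D}M_k$.
\item After the first jump, the integrand is at most $2\mc{D}M_k$, and this occurs on an event of probability at most $\mc{D}M_k(s-t_{k-1})$.
\item Telescoping over intervals, starting each interval from $p_{t_{k-1}}$, gives $TV(p_{1-\delta},p^{\text{Alg}}_{1-\delta})\le\sum_k\tau_k^2(\frac{\mc{D}}{2}H_k+2\mc{D}^2M_k^2)$.
\end{itemize}
This bound has no logarithm, no $\abs{\mc{S}}$, and no step-size condition. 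Since $\E[TV]\le1$, it implies the stated bound: whenever the stated bound is below $1$, each linear term is below its square-root counterpart. A linear rate is therefore an improvement here, not a defect.
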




\subsection{One-Step Analysis}
To better understand the discretization error in \cref{thm:convergence-tauleaping-euler}, we conduct a non-asymptotic one-step analysis for the Euler solver, and give the lower bound for the KL divergence between the path measure of the Euler solver and that of the idealized process. This error bound provides insights for us to construct efficient samplers that reduce the discretization error of Euler sampler.

\begin{theorem}[One-step lower bound for Euler sampler]\label{thm:one-step analysis}
Given the initial point $x_{k-1}$, let $\P_k$ and $\P^{\text{Euler}}_k$ be the path measures of the processes with transition rates $Q_t$ and $Q_t^{\text{Euler}}$ in $[t_{k-1},t_k]$, respectively. Assume that $Q_t(x,z)>0$ for all $x,z\in\mc{S}^\mc{D}$ with $\dd^H(x,z)=1$ and $t\in[t_{k-1},t_k]$. If $\tau_kQ_{t_{k-1}}(x_{k-1},x_{k-1})\ge -\log 2$, then we have
\begin{align}\label{eq:lower bound}
    D_{KL}(\P^{\text{Euler}}_k||\P_k)\ge\frac{1}{4}\tau_k\mc{D}\abs{\mc{S}}L_k^\prime+\frac{1}{32}\mc{D}^2\abs{\mc{S}}^2\tau_k^2\eta_k,
\end{align}
where
\begin{equation*}
    {\resizebox{\linewidth}{!}{%
    $\displaystyle\begin{aligned}
    L_k^\prime=&~\inf_{\dd^H(x,z)=1}\int_{t_{k-1}}^{t_k}\setBig{\frac{1}{\tau_k}D_F(Q_{t_{k-1}}(x,z)||Q_s(x,z))}\dd s;\\
    \eta_k=&~\inf \setBig{Q^{d}_{t_{k-1}}(x_{k-1},x^{d})D_F(Q^{d^\prime}_{t_{k-1}}(x_{k-1},z^{d^\prime})||Q^{d^\prime}_s(x,z^{d^\prime}))}.
\end{aligned}$%
}}
\end{equation*}
Here, the second infimum is taken over $d^\prime\neq d\in\brac{\mc{D}},z^{d^\prime}\neq x^{d^\prime},x^{ d}\neq x_{k-1}^{d}, \dd ^H(x,x_{k-1})=1$ and $s\in[t_{k-1},t_k]$.
\end{theorem}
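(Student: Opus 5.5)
We start from the Girsanov-type change of measure, \cref{thm:change of measure}. Since the Euler process only ever jumps to states $z$ with $Q_t(X(t-),z)>0$, the KL divergence is finite and takes the form
\[
D_{KL}(\P^{\text{Euler}}_k\|\P_k)=\E_{\P^{\text{Euler}}_k}\int_{t_{k-1}}^{t_k}\sum_{z\neq X(s)}D_F\bigl(Q^{\text{Euler}}_s(X(s),z)\,\|\,Q_s(X(s),z)\bigr)\,\dd s .
\]
The integrand is nonnegative. We split the expectation according to the location of the Euler process: the event $A_0=\{X(s)=x_{k-1}\}$ (no jump yet) and the event $A_1=\{\dd^H(X(s),x_{k-1})=1\}$ (exactly one jump so far). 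Contributions from states with two or more jumps are simply dropped. The two terms of \eqref{eq:lower bound} will come from $A_0$ and $A_1$ respectively.

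\textbf{First term (event $A_0$).} On $A_0$ we have $Q^{\text{Euler}}_s(x_{k-1},z)=Q_{t_{k-1}}(x_{k-1},z)$ for every $z$ with $\dd^H(z,x_{k-1})=1$. There are $\mc{D}(\abs{\mc{S}}-1)\ge\mc{D}\abs{\mc{S}}/2$ such neighbours. By \cref{prop:exit time},
\[
\P(A_0\text{ at time }s)=\exp\bigl((s-t_{k-1})Q_{t_{k-1}}(x_{k-1},x_{k-1})\bigr)\ge\exp\bigl(\tau_kQ_{t_{k-1}}(x_{k-1},x_{k-1})\bigr)\ge\tfrac12,
\]
using the hypothesis $\tau_kQ_{t_{k-1}}(x_{k-1},x_{k-1})\ge-\log2$. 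Bounding each summand below by the infimum defining $L_k'$ then gives
\[
\tfrac12\cdot\tfrac12\,\mc{D}\abs{\mc{S}}\int_{t_{k-1}}^{t_k}\inf D_F(Q_{t_{k-1}}\|Q_s)\,\dd s\ \ge\ \tfrac14\tau_k\mc{D}\abs{\mc{S}}L_k' .
\]
The positivity assumption on $Q_t$ ensures every Bregman term is well defined and finite.

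\textbf{Second term (event $A_1$).} Suppose the Euler process has jumped once, in coordinate $d$, to $x$ with $x^d\neq x_{k-1}^d$. Coordinate $d$ is then frozen by the indicator $\delta_{x_{k-1}^d}(x^d)$ in \eqref{eq:Euler rate}. For every other coordinate $d'\neq d$, however, the Euler rate is still $Q^{d'}_{t_{k-1}}(x_{k-1},z^{d'})$, while the true rate is $Q^{d'}_s(x,z^{d'})$. This gives $(\mc{D}-1)(\abs{\mc{S}}-1)$ Bregman terms, each bounded below by the $D_F$ factor in $\eta_k$. To lower bound the probability of $A_1$ at time $s$, we compute the density of a first jump at time $u\in[t_{k-1},s]$ into $x$. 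It equals $Q^d_{t_{k-1}}(x_{k-1},x^d)\exp(\cdot)$, and the survival factor is again at least $1/2$ under the hypothesis. We then require no further jump in $(u,s]$; the probability of this is also at least $1/2$, because the remaining Euler rate is dominated by the original total rate. Multiplying the rate factor $Q^d_{t_{k-1}}(x_{k-1},x^d)$ by the Bregman factor and bounding the product below by $\eta_k$, then summing over the $\mc{D}(\abs{\mc{S}}-1)$ possible first-jump targets, we get
\[
\int_{t_{k-1}}^{t_k}\int_{t_{k-1}}^{s}\tfrac14\,\mc{D}(\abs{\mc{S}}-1)(\mc{D}-1)(\abs{\mc{S}}-1)\,\eta_k\,\dd u\,\dd s=\tfrac18\tau_k^2\,\mc{D}(\mc{D}-1)(\abs{\mc{S}}-1)^2\eta_k .
\]
Using $(\mc{D}-1)(\abs{\mc{S}}-1)^2\ge\mc{D}\abs{\mc{S}}^2/4$ (for $\mc{D},\abs{\mc{S}}\ge2$; the degenerate cases make $\eta_k$ vacuous) yields $\tfrac1{32}\mc{D}^2\abs{\mc{S}}^2\tau_k^2\eta_k$.

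\textbf{Main obstacle.} The delicate step is the careful bookkeeping of the probability of the one-jump event. We must verify that the survival probabilities after the first jump are controlled by the same hypothesis $\tau_kQ_{t_{k-1}}(x_{k-1},x_{k-1})\ge-\log2$; this should hold because the post-jump Euler exit rate is a sub-sum of the initial exit rate. We must also justify pairing the rate factor with the Bregman factor so that the joint infimum $\eta_k$ can be used. Everything else is summation and the elementary counting inequalities.
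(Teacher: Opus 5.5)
Your overall route is the same as the paper's. You apply Girsanov, keep only the no-jump and one-jump scenarios, use $\tau_kQ_{t_{k-1}}(x_{k-1},x_{k-1})\ge-\log 2$ to get a survival factor of at least $1/2$, count terms, and pair the rate with the Bregman factor to invoke $\eta_k$. However, the second term as you wrote it does not reach the stated constant. You lower-bound the pre-jump survival and the post-jump survival by $1/2$ \emph{separately}. That only gives $\frac18\tau_k^2\mathcal{D}(\mathcal{D}-1)(|\mathcal{S}|-1)^2\eta_k$. You then close the gap with $(\mathcal{D}-1)(|\mathcal{S}|-1)^2\ge\mathcal{D}|\mathcal{S}|^2/4$, and that inequality is false. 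For $|\mathcal{S}|=2$ it reads $\mathcal{D}-1\ge\mathcal{D}$ for every $\mathcal{D}$, and at $\mathcal{D}=2$, $|\mathcal{S}|=3$ it reads $4\ge 4.5$. The elementary bounds $\mathcal{D}-1\ge\mathcal{D}/2$ and $|\mathcal{S}|-1\ge|\mathcal{S}|/2$ only give $\mathcal{D}|\mathcal{S}|^2/8$. So your argument, as written, proves the second term with $\frac{1}{64}$ instead of $\frac{1}{32}$.

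The repair is to bound the product of the two survival factors at once. Write $Q=Q_{t_{k-1}}(x_{k-1},x_{k-1})$. For $x$ differing from $x_{k-1}$ in coordinate $d$, the post-jump exit rate is $\lambda_x=\sum_{d'\neq d}\sum_{z^{d'}\neq x_{k-1}^{d'}}Q^{d'}_{t_{k-1}}(x_{k-1},z^{d'})\le -Q$, which is your own observation. Hence for $t_{k-1}\le u\le s\le t_k$,
\[
e^{(u-t_{k-1})Q}\,e^{-(s-u)\lambda_x}\ \ge\ e^{(s-t_{k-1})Q}\ \ge\ e^{\tau_kQ}\ \ge\ \tfrac12 .
\]
Therefore $\P(X(s)=x)\ge\frac12(s-t_{k-1})Q^d_{t_{k-1}}(x_{k-1},x^d)$, and the second term is at least $\frac14\tau_k^2\mathcal{D}(\mathcal{D}-1)(|\mathcal{S}|-1)^2\eta_k$. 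The correct inequality $(\mathcal{D}-1)(|\mathcal{S}|-1)^2\ge\mathcal{D}|\mathcal{S}|^2/8$ then yields $\frac1{32}\mathcal{D}^2|\mathcal{S}|^2\tau_k^2\eta_k$.

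This is precisely what the paper's bookkeeping does. It uniformizes with the constant rate $-Q$ and conditions on exactly one Poisson arrival in $[t_{k-1},t_k]$. That event has probability $-\tau_kQe^{\tau_kQ}$ and a uniform arrival time, so the factor $e^{\tau_kQ}\ge\frac12$ enters only once.

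There is also a smaller slip in your first term. The display with $\int\inf D_F\,\mathrm{d}s$ goes the wrong way, since $\int\inf\le\inf\int$ and $L_k'$ is an infimum of time averages. Instead, first swap the finite sum over $z$ with the time integral; this is allowed because the state is frozen at $x_{k-1}$ on $A_0$ and $\P(A_0)\ge\frac12$ uniformly in $s$. Then bound each $\int_{t_{k-1}}^{t_k}D_F\,\mathrm{d}s$ below by $\tau_kL_k'$.
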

The first term of the RHS in \eqref{eq:lower bound} depends on the time-Lipchitz $L^\prime_k$ of the transition rate, which is related the first term of the upper bound in \cref{thm:convergence-tauleaping-euler} when the posterior $p_{1|t}$ is time-independent. We will propose a time-corrected sampler with a time-varying transition rate in \cref{sec:time-corrected} to reduce this error. Additionally, the second term of the RHS in \eqref{eq:lower bound} is related to the event that there is at least one jump in the $k$-th time interval, which is related to the second term of the upper bound in \cref{thm:convergence-tauleaping-euler}. We will propose a location-corrected sampler in \cref{sec:location-corrected} to control this error.

\section{Time-Corrected Sampler}\label{sec:time-corrected}

As discussed in the previous section, the transition rates for tau-leaping and Euler sampler are time-homogeneous rates, which not only freeze the time variable of the posterior $p_{1|t}$ but also freeze that of the time schedule $\kappa_t$. In this section, we aim to propose a new sampler with time-varying rate that can reduce the error from the time-Lipschitz constant of the transition rate for the Euler sampler.

By \eqref{eq:mixture path} and \eqref{eq:oracle transition rate}, the oracle transition rate $Q_t$ can be written as the posterior $p_{1|t}$ weighted by a term related to the time schedule: $Q_t^d(x,z^d)=\frac{\dot{\kappa}_t}{1-\kappa_t}p_{1|t}^d(z^d|x)$, where $z^d\neq x^d$. In general, calculating the exact distribution of the exit time with a time-dependent posterior is intractable, since its survival function is related to the time integral of the transition rate (\cref{prop:exit time}). To mitigate this issue, we only freeze the time variable of the posterior and consider the following time-corrected transition rate in $t\in[t_{k-1},t_k]$, which also allows us to update all tokens in parallel:
\begin{equation}\label{eq:time-corrected rate}
    \begin{aligned}
    &~Q^{\text{TC}}_t(x,z)\\
    =&~\frac{\dot{\kappa}_t}{1-\kappa_t}\sum_{d=1}^\mc{D}\delta_{x^{\bsl d}}(z^{\bsl d})\delta_{x_{k-1}^d}(x^d)p_{1|t_{k-1}}^d(z^d|x_{k-1}).
\end{aligned}
\end{equation}
Intuitively, \eqref{eq:time-corrected rate} keeps the posterior $p_{1|t}$ evaluated at the beginning of the interval, so the model is evaluated only once in each timestep. At the same time, the schedule factor $\dot{\kappa}_t/(1-\kappa_t)$ remains time-varying, which leads to the closed-form survival function in \eqref{eq:survival function-TC}.

\subsection{Algorithm}

Let $\lambda_k^d=\sum_{z^d\neq x_{k-1}^d}p_{1|t_{k-1}}^d(z^d|x_{k-1})$. By \cref{prop:independent-coupling} and \cref{prop:exit time}, we can sample exit times in parallel for all dimensions from a distribution with the survival function
\begin{equation}\label{eq:survival function-TC}
    {\small\begin{aligned}
    \P(T_k^d>t)
    =\exp\parenBig{-\lambda_k^d\int_{t_{k-1}}^{t}\frac{\dot{\kappa}_s}{1-\kappa_s}\dd s}=\parenBig{\frac{1-\kappa_t}{1-\kappa_{t_{k-1}}}}^{\lambda_k^d}
\end{aligned}}
\end{equation}
at the $k$-th timestep to decide whether there is a jump for the $d$-th dimension; that is, if $T_k^d\ge t_k$, then we change $x_{k-1}^d$ to $z^d\neq x_{k-1}$ with probability $p_{1|t_{k-1}}(z^d|x_{k-1})/\lambda_k^d$. Similar to Euler sampler, instead of sampling $T_k^d$, we can directly sample Bernoulli distribution with probability $\parenBig{\frac{1-\kappa_{t_{k}}}{1-\kappa_{t_{k-1}}}}^{\lambda_k^d}$; the details can be found in \cref{alg:Time-corrected sampler}.

\subsection{Non-Asymptotic Error Bound}

In this subsection, we establish the non-asymptotic error bound for proposed time-corrected sampler.
\begin{theorem}[Time-corrected sampler]\label{thm:convergence-time-corrected}
Assume that $\max_{k\in\brac{K}}\set{\tau_k\mc{D}M_k}\leq 1$. The time-corrected sampler has the following bound for the total variation between the estimated and the oracle marginal distribution at time $t=1-\delta$ (up to a logarithmic factor):
\begin{equation*}
{\begin{aligned}
    \E\brac{TV(p_{1-\delta},\hat{p}^{\text{TC}}_{1-\delta})}\leq\sqrt{2\mc{D}^2\abs{\mc{S}}\sum_{k=1}^K\tau_k^2M_k^2}+\varepsilon_{\text{Est}}^\text{TC},
\end{aligned}}
\end{equation*}
where $\varepsilon_{\text{Est}}^{\text{TC}}$ is defined in \cref{con:estimation error}.
\end{theorem}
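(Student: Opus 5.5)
The plan mirrors the argument for \cref{thm:convergence-tauleaping-euler}. The total variation error splits into a discretization part and an estimation part. First, by the triangle inequality,
\[
TV(p_{1-\delta},\hat p^{\text{TC}}_{1-\delta})\le TV(p_{1-\delta},p^{\text{TC}}_{1-\delta})+TV(p^{\text{TC}}_{1-\delta},\hat p^{\text{TC}}_{1-\delta}),
\]
where $p^{\text{TC}}$ denotes the law of the sampler driven by the oracle posterior. For the estimation term, I will use data processing to bound the marginal TV by the TV between path measures. Pinsker's inequality then gives at most $\sqrt{\tfrac12 D_{KL}}$. The Girsanov-type change of measure (\cref{thm:change of measure}) together with the chain rule over the $K$ intervals identifies this KL exactly with $2(\varepsilon^{\text{TC}}_{\text{Est}})^2$ from \cref{con:estimation error}. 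Taking expectation over $\mathbbm{D}_n$ and applying Jensen yields $\varepsilon^{\text{TC}}_{\text{Est}}$.

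For the discretization term, the rate $Q^{\text{TC}}$ permits at most one jump per coordinate per interval, so its support mismatch with $Q_t$ is controlled exactly as for the Euler sampler. I will again pass to path measures and use Pinsker, bounding the per-interval KL $D_{KL}(\P^{\text{TC}}_k\|\P_k)$ via \cref{thm:change of measure}. This KL is $\E\int_{t_{k-1}}^{t_k}\sum_{z}D_F(Q^{\text{TC}}_s\|Q_s)(X(s),z)\,\dd s$, plus the contribution of the indicator $\delta_{x^d_{k-1}}(x^d)$. That contribution vanishes in the Bregman divergence because it only removes rate mass; it is the term $Q_s(x,z)$ for coordinates that have already jumped.

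The central simplification is that $Q^{\text{TC}}_s$ and $Q_s$ share the factor $\dot\kappa_s/(1-\kappa_s)$. Consequently, $D_F(Q^{\text{TC}}_s\|Q_s)=\frac{\dot\kappa_s}{1-\kappa_s}D_F\big(p^d_{1|t_{k-1}}(z^d|x_{k-1})\,\big\|\,p^d_{1|s}(z^d|x)\big)$. The time-schedule error that produced the $H_k$ term in \cref{thm:convergence-tauleaping-euler} therefore disappears entirely. What remains is a posterior mismatch arising from two sources: the time shift $t_{k-1}\to s$ and the location shift $x_{k-1}\to X(s)$.

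I would bound this remainder by splitting on the event that a jump has occurred before $s$ in the interval. On the no-jump event $X(s)=x_{k-1}$, the Kolmogorov equation for the posterior along a frozen state shows that $\partial_s p_{1|s}(\cdot|x)$ is governed by the outgoing and incoming rates, which are of order $\mc{D}M_k$. Integrating gives a KL contribution of order $\tau_k^2\mc{D}^2 M_k^2\abs{\mc{S}}$ after summing over the $\mc{D}\abs{\mc{S}}$ targets $z$. On the jump event, whose probability by \cref{prop:exit time} is at most $1-e^{-\tau_k\mc{D}M_k}\le\tau_k\mc{D}M_k$ using $\tau_k\mc{D}M_k\le1$, the Bregman divergence is bounded up to a logarithmic factor by $M_k$ per target. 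Integrating over $s$ contributes order $\tau_k^2\mc{D}^2\abs{\mc{S}}M_k^2$.

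Summing over $k$ and applying Pinsker with the constant bookkeeping should give $\sqrt{2\mc{D}^2\abs{\mc{S}}\sum_k\tau_k^2M_k^2}$. I expect the main obstacle to be the no-jump case. There, a clean bound on $D_F(p_{1|t_{k-1}}\|p_{1|s})$ must be obtained without any lower bound on the posteriors, which forces working with $D_F$ directly (the $\log$ terms being absorbed in the ``up to logarithmic factor'' clause) rather than with a $\chi^2$-type bound.
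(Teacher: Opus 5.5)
\textbf{There is a genuine gap in your discretization step.} Your estimation term and the bound $L^p_k\le\mathcal{D}M_k$ from the backward equation match the paper. The problem is that you control $TV(p_{1-\delta},p^{\text{TC}}_{1-\delta})$ by Pinsker applied to the \emph{direct} path-space KL $D_{KL}(\mathbb{P}^{\text{TC}}_k\|\mathbb{P}_k)$, and without assumptions on $p_1$ this quantity is in general $+\infty$.

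After a jump in coordinate $d'$, the oracle rate $\frac{\dot\kappa_s}{1-\kappa_s}p^d_{1|s}(z^d\mid X(s))$ in another coordinate $d$ can vanish, while the frozen TC rate $\frac{\dot\kappa_s}{1-\kappa_s}p^d_{1|t_{k-1}}(z^d\mid x_{k-1})$ does not. Here is a concrete case. Take $\mathcal{D}=2$, a masked source with mask token $\mathtt{m}$, $p_1$ uniform on $\{(a,a),(b,b)\}$, and $x_{k-1}=(\mathtt{m},\mathtt{m})$.
\begin{itemize}
\item With positive probability the TC process first reaches $(a,\mathtt{m})$.
\item From there it jumps to $(a,b)$ at rate $\frac{\dot\kappa_s}{1-\kappa_s}\cdot\frac12$.
\item The oracle rate to $(a,b)$ is $0$, because $p^2_{1|s}(b\mid(a,\mathtt{m}))=0$.
\end{itemize}
So $\mathbb{P}^{\text{TC}}_k\not\ll\mathbb{P}_k$, and the hypothesis of the change-of-measure theorem fails. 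Your claim that on the jump event $D_F$ is at most $M_k$ ``up to a logarithmic factor'' is therefore false. The logarithm there is $\log(a/b)$ with $b$ the current posterior mass, which has no lower bound and can be zero.

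Reversing the direction does not help. Under a uniform source the oracle can jump twice in the same coordinate within an interval while TC cannot, so $D_{KL}(\mathbb{P}_k\|\mathbb{P}^{\text{TC}}_k)=\infty$ as well. This is the same obstruction behind the paper's remark on target-set mismatch, and behind the positivity hypothesis in the one-step lower bound.

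\textbf{The missing device is the paper's intermediate CTMC $X^\varepsilon_k$ with floored rate}
\[ Q^\varepsilon_t(x,z)=\frac{\dot\kappa_t}{1-\kappa_t}\sum_{d}\delta_{x^{\setminus d}}(z^{\setminus d})\bigl\{[\delta_{x^d_{k-1}}(x^d)\,p^d_{1|t}(z^d\mid x_{k-1})]\vee\varepsilon_k\bigr\}. \]
The paper uses $TV(p_{1-\delta},p^{\text{TC}}_{1-\delta})\le TV(p_{1-\delta},p^\varepsilon_{1-\delta})+TV(p^\varepsilon_{1-\delta},p^{\text{TC}}_{1-\delta})$. It then applies Pinsker to $D_{KL}(\mathbb{P}_k\|\mathbb{P}^\varepsilon_k)$ and $D_{KL}(\mathbb{P}^{\text{TC}}_k\|\mathbb{P}^\varepsilon_k)$. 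Both are finite because the reference rate is floored by $\varepsilon_k$ on every Hamming-one target.

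Because $Q^\varepsilon$ uses the current time but the frozen location, each KL isolates one error source.
\begin{itemize}
\item The TC-versus-$\varepsilon$ term sees only the time shift $t_{k-1}\to s$ at $x_{k-1}$. It is bounded via $D_F(a\|b)\le\frac{(a-b)^2}{2\varepsilon_k}\vee(b-a+\varepsilon_k)$ together with $L^p_k\le\mathcal{D}M_k$.
\item The oracle-versus-$\varepsilon$ term costs only $\varepsilon_k M_k$ per target before the first jump. After the first jump it costs $M_k\log(1/\varepsilon_k)$ per target, via $D_F(a\|b)\le b\vee a\log(a/\varepsilon_k)$, and this is weighted by the jump probability $\le\tau_k\mathcal{D}M_k$.
\end{itemize}
Choosing $\varepsilon_k=\tau_k\mathcal{D}M_k$ gives the stated bound. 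The theorem's ``logarithmic factor'' is exactly $\log(1/(\tau_k\mathcal{D}M_k))$ from this truncation, not a log of inverse posterior mass. The no-jump case you flagged as the main obstacle becomes routine once the floor is in place; the real obstacle is the post-jump support mismatch.
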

Compared to the error bound of the tau-leaping and Euler sampler in \cref{thm:convergence-tauleaping-euler}, the non-asymptotic error bound of the proposed time-corrected sampler reduces the error from freezing the time schedule in each timestep (first term of the error bound in \cref{thm:convergence-tauleaping-euler}).

\vspace{-6pt}
\section{Location-Corrected Sampler}\label{sec:location-corrected}
\vspace{-1pt}
In the previous section, we reduce the error from the time-Lipschitz of the transition rate we used in our sampling procedure. In this section, we tackle the error from the event that there is at least one jump in each timestep. A natural approach is to adjust the location after the first jump in each time interval. Unlike prior samplers, we have to introduce the information of the first exit time on each time interval in our transition structure. We consider the following rates in the two-stage transition rate (\eqref{eq:two-stage rate}) for the $k$-th time interval:
\begin{equation}\label{eq:location-corrected rate}
\begin{aligned}
Q_{1,t}^{\text{LC}}(x,z)
&=\frac{\dot{\kappa}_t}{1-\kappa_t}
  \sum_{d=1}^\mc{D}\delta_{x^{\bsl d}}(z^{\bsl d})p_{1|t_{k-1}}^d(z^d|x);\\
Q_{2,t}^{\text{LC}}(x,z|X(T_k))
&=\frac{\dot{\kappa}_t}{1-\kappa_t}
  \sum_{d=1}^\mc{D}\Big[
  \delta_{x^{\bsl d}}(z^{\bsl d})\delta_{X^d(T_k)}(x^d)\\
&\qquad\qquad\qquad\quad
  \times p_{1|T_k}^d(z^d|X(T_k))\Big].
\end{aligned}
\end{equation}
where $T_k>t_{k-1}$ is the exit time of the jump process.

Intuitively, the first-stage rate $Q_{1,t}^{\text{LC}}$ uses the initial posterior at $t_{k-1}$ before the first jump. Once a jump occurs at $T_k$, the second-stage rate $Q_{2,t}^{\text{LC}}$ reevaluates the posterior at the new location $X(T_k)$ and the current time $T_k$, thereby correcting the location-dependent error suggested by the one-step lower bound in \cref{thm:one-step analysis}.

\subsection{Algorithm}

Let $Q^{\text{LC}}_t$ be the (two-stage) location-corrected transition rate with the choice in \eqref{eq:location-corrected rate}. By uniformization construction, similar to \eqref{eq:survival function-TC}, we first sample the exit time $T_k$ from a distribution with survival function
\begin{equation}\label{eq:survival function-LC}
  \begin{aligned}
    \P(T_k>t)=\parenBig{\frac{1-\kappa_t}{1-\kappa_{t_{k-1}}}}^{\lambda_k},
\end{aligned}  
\end{equation}
where $\lambda_k=\sum_{d=1}^\mc{D}\sum_{z^d\neq x_{k-1}^d}p_{1|t_{k-1}}^d(z^d|x_{k-1})$. If $T_k\ge t_{k}$, then we keep the current state and enter the next time interval. If $T_k<t_k$, we jump to the new location $X^{\text{LC}}_k(T_k)=z$ with probability $$\frac{\sum_{d=1}^\mc{D}\delta_{x_{k-1}^{\bsl d}}(z^{\bsl d})p_{1|t_{k-1}}^d(z^d|x_{k-1})}{\lambda_k}$$ in the first stage; then we reevaluate the posterior model using the current time $T_k$ and the new state $X_k^{\text{LC}}(T_k)$ and follow the same parallel procedure of the time-corrected sampler in the interval $[T_k,t_k]$. In practice, if $\kappa_t$ is a strictly increasing function, then we can sample $T_k$ by generating a random variable of exponential distribution (\cref{lemma:exp dist}). The location-corrected algorithm can be found in \cref{alg:Location-corrected sampler}.

\begin{remark}[Randomized midpoint sampler]
    Our proposed location-corrected sampler has a randomized midpoint under the event $\set{T_k<t_k}$ in the $k$-th time interval. Compared to the samplers introduced in \citet{hu2011weak,ren2025fast,monsefi2025fs} (which requires a deterministic number of function calls in each time interval), our sampler has fewer function calls with the same time-discretization scheme, since $\set{T_k\ge t_k}$ holds with high probability when $\tau_k$ is sufficiently small.
\end{remark}

\begin{table*}[t]
    \centering
    \caption{\small Summary of the iteration complexity of samplers with the linear schedule; that is, the number of steps required to guarantee a total variation error $\E\brac{TV(p_{1-\delta},\hat{p}_{1-\delta}^{\text{Alg}})}$ at most $2\varepsilon^{\text{Alg}}_{\text{Est}}$ (up to a logarithmic factor). The iteration complexity is decomposed into four quantities: the dimension $\mc{D}$, the vocabulary size $\abs{\mc{S}}$, the early-stopping parameter $\delta$, and the estimation error $\varepsilon_{\text{Est}}^{\text{Alg}}$, where $\mc{D}$ is the quantity of primary interest. The complexity of the location-corrected sampler depends on the time-Lipschitz regularity of the posterior; if posterior is time-independent, the dependence on $\mc{D}$ can be improved from $\mc{D}^2$ to $\mc{D}^{3/2}$.}
    \label{table:complexity}
 
    {\small
\renewcommand{\arraystretch}{1.5}
    \begin{tabular}{c|c}
    \hline
        Samplers & Complexity \\ \hline
        Tau-leaping and Euler sampler (\cref{thm:convergence-tauleaping-euler}) & $\mc{O}\parenBig{\frac{\mc{D}^2\abs{\mc{S}}\log^2(\frac{1}{\delta})}{(\eps_\text{Est}^\text{Alg})^2}}$\\[5pt] 
        Time-corrected sampler (\cref{thm:convergence-time-corrected}) & $\mc{O}\parenBig{\frac{\mc{D}^2\abs{\mc{S}}\log^2(\frac{1}{\delta})}{(\eps_\text{Est}^\text{Alg})^2}}$\\[5pt]
        (Two-stage) Location-corrected sampler (\cref{thm:convergence-location-corrected})& $\mc{O}\parenBig{\frac{\mc{D}\abs{\mc{S}}\log^2(\frac{1}{\delta})\max_{k=1}^K\set{L_k^p/M_k}}{(\eps_\text{Est}^\text{Alg})^2}\vee\frac{\mc{D}^{\frac{3}{2}}\abs{\mc{S}}^{\frac{1}{2}}\log^{\frac{3}{2}}(\frac{1}{\delta})}{\eps_\text{Est}^\text{Alg}}}$\\[5pt] \hline
    \end{tabular}}

\end{table*}

\subsection{Non-Asymptotic Error Bound}

In this subsection, we present the non-asymptotic error bound for the (two-stage) location-corrected sampler.

\begin{theorem}[Location-corrected sampler]\label{thm:convergence-location-corrected}
Assume that $\max_{k\in\brac{K}}\set{\tau_k\mc{D}M_k}\leq \log 2$. The location-corrected sampler has the following bound for the total variation between the estimated and the oracle marginal distribution at time $t=1-\delta$ (up to a logarithmic factor):
\begin{equation*}
{\begin{aligned}
    &~\E\brac{TV(p_{1-\delta},\hat{p}^{\text{LC}}_{1-\delta})}\\
    \leq&~\sqrt{2\mc{D}^3\abs{\mc{S}}\sum_{k=1}^K\tau_k^3M_k^3\set{ (L_k^p/(\tau_k\mc{D}^2M_k^2))\vee 1}}+\varepsilon_{\text{Est}}^\text{LC},
\end{aligned}}
\end{equation*}
where $$L_k^p=\sup_{t\in[t_{k-1},t_k],d\in\brac{\mc{D}},x\in\mc{S}^\mc{D},z^d\in\mc{S}\bsl\set{x^d}}\abs{\dot{p}_{1|t}^d(z^d|x)},$$and $\varepsilon_{\text{Est}}^{\text{LC}}$ is defined in \cref{con:estimation error}.
\end{theorem}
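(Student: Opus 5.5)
Following the proofs of \cref{thm:convergence-tauleaping-euler} and \cref{thm:convergence-time-corrected}, the bound will be split into a discretization part and an estimation part. The plan is to combine the triangle inequality for total variation with Pinsker's inequality and the data-processing inequality. The discretization part is the KL divergence between path measures: on each interval $[t_{k-1},t_k]$ we compare the ideal process (rate $Q_t$) with the process driven by the oracle two-stage rate $Q^{\text{LC}}_t$ of \eqref{eq:location-corrected rate}, both started from the same point $x_{k-1}$. The chain rule for KL across intervals, averaging over the law of $x_{k-1}$, reduces the global error to a sum of one-step terms. The estimation part is exactly $\varepsilon^{\text{LC}}_{\text{Est}}$ by \cref{con:estimation error}. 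For the one-step terms we use the Girsanov-type change of measure for two-stage jump processes (\cref{thm:change of measure}). It expresses $D_{KL}(\P^{\text{LC}}_k\|\P_k)$ as
\[
\E\int_{t_{k-1}}^{t_k}\sum_{z\neq X(s)}D_F\bigl(Q^{\text{LC}}_s\|Q_s\bigr)(X(s),z)\,\dd s .
\]
Since both rates carry the common factor $\dot\kappa_s/(1-\kappa_s)$, this is at most $M_k$ times a sum of Bregman divergences between posteriors. We then bound $D_F(a\|b)\le (a-b)^2/b$, or alternatively use the $\log$-factor convention "up to a logarithmic factor". This reduces everything to squared differences of posteriors, weighted by the number $\mc{D}\abs{\mc{S}}$ of neighbours.

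Next we split according to the two stages. Before the first jump ($s<T_k$), the state is still $x_{k-1}$ and the rate uses $p_{1|t_{k-1}}(\cdot|x_{k-1})$ against the true $p_{1|s}(\cdot|x_{k-1})$. The posterior difference is at most $L_k^p\tau_k$, so this part contributes of order $\mc{D}\abs{\mc{S}}M_k\tau_k\cdot (L_k^p\tau_k)^2/(\text{posterior})$. Bounding crudely and absorbing $L_k^p\le$ a multiple of $M_k$ via the $\vee$ gives the $\tau_k^2 M_k L_k^p$ piece, which matches $\tau_k^3\mc{D}^3M_k^3\cdot L_k^p/(\tau_k\mc{D}^2M_k^2)$ up to the $\mc{D}$ counting. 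After the first jump, the rate uses $p_{1|T_k}(\cdot|X(T_k))$, which is the correct location. The error at $s\in[T_k,t_k]$ then comes from two sources. The first is again the time drift, at most $L_k^p\tau_k$. The second is the location mismatch, since further coordinates may jump after $T_k$ so that $X(s)\neq X(T_k)$; this is nonzero only on the event of at least two jumps in the interval. Using \cref{prop:exit time} and $\tau_k\mc{D}M_k\le\log 2$, we bound $\P(T_k<t_k)\lesssim \tau_k\mc{D}M_k$ and the probability of a further jump by another factor $\tau_k\mc{D}M_k$. On that event the divergence is $O(\mc{D}\abs{\mc{S}}M_k)$, integrated over length $\tau_k$. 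This yields roughly $\mc{D}^3\abs{\mc{S}}\tau_k^3M_k^3$, one power of $\tau_k\mc{D}M_k$ better than the $\mc{D}^2\tau_k^2M_k^2$ term in \cref{thm:convergence-time-corrected}. The second-stage rate is frozen at $x^d=X^d(T_k)$ in the $\delta$ factor, as in the Euler truncation. This avoids the support mismatch, but coordinates that already moved lose their rate, which must be charged to the same two-jump event.

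Summing the one-step bounds over $k$ gives a KL bound of order $\mc{D}^3\abs{\mc{S}}\sum_k\tau_k^3M_k^3\{(L_k^p/(\tau_k\mc{D}^2M_k^2))\vee1\}$, and Pinsker yields the stated square root with constant $\sqrt2$. The main obstacle is making the two-jump estimate rigorous. It requires conditioning on $T_k$ and $X(T_k)$ through the uniformization construction for two-stage processes. It also requires controlling the Bregman divergence when the true rate $Q_s(X(s),z)$ may be small, since there is no positivity assumption. I would first try handling this via $D_F(a\|b)\le a\log(a/b)-a+b$ with the mixture structure of $p_{1|s}$ and the log-factor allowance. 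Failing that, I would fall back on bounding TV directly on the multi-jump event by its probability, which is $O(\tau_k^2\mc{D}^2M_k^2)$ and sums correctly.
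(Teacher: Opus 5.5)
Your overall architecture matches the paper: triangle inequality, chain rule plus Pinsker, Girsanov on each interval, and the two-jump probability bound $(\tau_k\mc{D}M_k)^2$ from uniformization. The core step, however, has a genuine gap, exactly at what you call the main obstacle.

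\textbf{The direct comparison fails.} In general $D_{KL}(\P^{\text{LC}}_k\|\P_k)$ is infinite, and the $\delta_{X^d(T_k)}(x^d)$ factor does not prevent this. After a second jump, the stage-two rate still uses the stale posterior $p^d_{1|T_k}(\cdot|X(T_k))$. Its support can be strictly larger than that of $p^d_{1|s}(\cdot|X(s))$.
\begin{itemize}
\item Take a masked source and data uniform on $\{(a,a,a),(a,b,b)\}$. Suppose coordinate $1$ unmasks to $a$ first, and coordinate $2$ then unmasks to $a$. The LC rate for coordinate $3$ to move to $b$ is still positive, while the ideal rate is $0$. Indeed the LC process can reach $(a,a,b)$, a state of zero probability for the ideal process.
\item The reverse direction can fail too, e.g.\ with a uniform source. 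Coordinates that moved after $T_k$ get rate $0$ under $Q^{\text{LC}}_{2,t}$ but not under $Q_t$.
\item Even when supports agree, $D_F(a\|b)\le(a-b)^2/b$ needs a lower bound on the posterior. That is the extra hypothesis of \cref{thm:convergence-location-corrected-convexity}, not of this theorem.
\end{itemize}

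\textbf{The fallback does not close the gap.} Charging the multi-jump event directly in TV costs $(\tau_k\mc{D}M_k)^2$ per step. These costs add linearly, giving $\sum_k\tau_k^2\mc{D}^2M_k^2$ with no square root. This is not dominated by $\sqrt{\mc{D}^3\abs{\mc{S}}\sum_k\tau_k^3M_k^3}$. Under \eqref{eq:time-discretization scheme} they are of order $\mc{D}^2\log^2(1/\delta)/K$ versus $\mc{D}^{3/2}\abs{\mc{S}}^{1/2}\log^{3/2}(1/\delta)/K$. So the fallback is worse by roughly $\sqrt{\mc{D}\log(1/\delta)/\abs{\mc{S}}}$, and it loses exactly the $\mc{D}^{3/2}$ improvement the theorem is about. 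The extra factor $\tau_k\mc{D}M_k$ comes from integrating a \emph{bounded} Bregman divergence over an interval of length $\tau_k$, weighted by the two-jump probability. It is available only if the multi-jump event stays inside a finite KL computation.

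\textbf{The missing idea is the paper's auxiliary two-stage process $X^\eps_k$.} It enters through
$TV(p,\hat p^{\text{LC}})\le TV(p,p^\eps)+TV(p^\eps,p^{\text{LC}})+\E\,TV(p^{\text{LC}},\hat p^{\text{LC}})$.
Its rates copy the location structure of $Q^{\text{LC}}$: the current state before $T_k$, and the frozen location $X(T_k)$ afterwards. They differ in using the current-time posterior $p_{1|t}$, floored at $\eps_k$, which keeps both KL divergences finite.
\begin{itemize}
\item $\P^{\text{LC}}_k$ and $\P^\eps_k$ then differ only through the time drift of the posterior. The second inequality of \cref{lemma:bregman} bounds each term by $\frac{(L_k^p\tau_k)^2}{2\eps_k}\vee(L_k^p\tau_k+\eps_k)$, with no positivity assumption.
\item $\P_k$ and $\P^\eps_k$ agree up to the floor $\eps_k$ outside the event of at least two jumps. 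On that event, the first inequality of \cref{lemma:bregman} gives the crude bound $\tau_k\mc{D}\abs{\mc{S}}M_k[1\vee\log(1/\eps_k)]$. This is the source of the logarithmic factor.
\item Balancing these two costs with $\eps_k=\max\{\tau_k^2\mc{D}^2M_k^2,\tau_kL^p_k\}$ produces the factor $\{(L_k^p/(\tau_k\mc{D}^2M_k^2))\vee1\}$. It does not come from absorbing $L^p_k$ into a multiple of $M_k$, as you suggest.
\end{itemize}
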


The convergence rate of the location-corrected sampler depends on the time-Lipschitz constant of the posterior $p_{1|t}$ case-by-case. If the posterior is time-independent (e.g. one chooses a masked source distribution and mixture path), then the total variation error bound reduces to $\sqrt{2\mc{D}^3\abs{\mc{S}}\sum_{k=1}^K\tau_k^3M_k^3}+\varepsilon_{\text{Est}}^\text{LC}$. Moreover, the time-Lipschitz constant also relies on the dependence of data distribution. For instance, if $p_1(x)=\prod_{d=1}^\mc{D}p_1^d(x^d)$, then $L_k^p\leq M_k$ by Kolmogorov backward equation (\cref{lemma:backward equation}); in this case, the error bound is $\sqrt{2\mc{D}\abs{\mc{S}}\sum_{k=1}^K\tau_k^2M_k^2}+\varepsilon_{\text{Est}}^\text{LC}$. In the worst case, $L_k^p\leq \mc{D}M_k$, and the error bound is $\sqrt{2\mc{D}^2\abs{\mc{S}}\sum_{k=1}^K\tau_k^2M_k^2}+\varepsilon_{\text{Est}}^\text{LC}$. With a lower-boundedness condition for the posterior $p_{1|t}$, we can obtain an elegant upper bound which does not depend on $L_k^p$; see \cref{discussion:lower-bounded} for details.

\subsection{Choosing Time-Discretization Scheme}

In this subsection, we aim to find a suitable time-discretization scheme $\set{t_k^*}_{k\in\brac{K}}$ by optimizing the discretization error bound. Notice that the error bounds established in \cref{thm:convergence-time-corrected,thm:convergence-location-corrected} have the following form (we ignore the logarithmic factor and view $L_k^p=0$ or $\asymp\mc{D}M_k$):
\begin{align}\label{eq:general bound}
    \E\brac{TV(p_{1-\delta},\hat{p}_{1-\delta}^{\text{Alg}})}\leq\underbrace{\sqrt{2\mc{D}^{J+1}\abs{\mc{S}}\sum_{k=1}^K\tau_k^{J+1}M_k^{J+1}}}_{\text{discretization error}}+\varepsilon_{\text{Est}}^{\text{Alg}},
\end{align}
where $J\in\set{1,2}$. By Jensen's inequality, the discretization error bound achieves the minimum at $\set{\tau_k^*}_{k\in\brac{K}}$ that satisfies $\sum_{k=1}^K\tau_k^*=1-\delta$ and $\tau_k^*M_k=c^*$ for all $k\in\brac{K}$. 

\noindent{\bf Optimal scheme for linear schedule.} Consider the linear schedule $\kappa_t=t$. In this case, $M_k=1/(1-t_k^*)=1/(\delta+\sum_{i=k+1}^K\tau_i^*)$. Thus, we have $\tau_K^*=\delta c^*$ and $\sum_{i=k}^K\tau_i^*=\delta c^*+(c^*+1)\sum_{i=k+1}^K\tau_i^*$. By induction, we get $1-\delta=\sum_{i=k}^K\tau_k^*=\delta c^*\sum_{k=1}^K(c^*+1)^{k-1}=\delta\set{(c^*+1)^K-1}$, which implies that $c^*=\delta^{-1/K}-1$. Consequently, the time-discretization scheme for linear schedule is
\begin{align}\label{eq:time-discretization scheme}
    \tau_k^*=\delta^{\frac{k-1}{K}}-\delta^{\frac{k}{K}}.
\end{align}

\noindent{\bf Iteration complexity for linear schedule.} If we choose the time-discretization scheme in \eqref{eq:time-discretization scheme} for the linear schedule, then the discretization error in \eqref{eq:general bound} can be bounded by $\sqrt{2\mc{D}^{J+1}\abs{\mc{S}}K(\frac{2}{K}\log\frac{1}{\delta})^{J+1}}$ provided that $\frac{1}{K}\log\frac{1}{\delta}\leq\log 2$. \cref{table:complexity} summarizes the iteration complexity of each sampler (up to a logarithmic factor); that is, the number of steps required to guarantee a total variation at most $2\varepsilon^{\text{Alg}}_{\text{Est}}$.


\begin{remark}[Multistage location-corrected sampler]
    Similar to the two-stage location-corrected sampler, one can construct a $J$-stage location-corrected sampler, which can be viewed as an $J$-order approximation of uniformization algorithm if the posterior is time-independent. In this scenario, the total variation bound would be $\sqrt{2\mc{D}^{J+1}\abs{\mc{S}}\sum_{k=1}^K\tau_k^{J+1}M_k^{J+1}}+\varepsilon_{\text{Est}}^\text{LC}$ using similar arguments in the proof of \cref{thm:convergence-location-corrected}; the discretization error would go to zero as $J\to\infty$ for fixed $K\in\Z^+$. In this case, the iteration complexity is given by $\mc{O}\parenBig{\mc{D}^{\frac{J+1}{J}}\abs{\mc{S}}^{\frac{1}{J}}\log^{\frac{J+1}{J}}(\frac{1}{\delta})/(\varepsilon_\text{Est}^\text{Alg})^{\frac{2}{J}}}$, which is linear in dimension $\mc{D}$ for sufficiently large $J$ (say, $J\asymp \mc{D}$). However, if the posterior is time-dependent, although a multistage sampler is used, the error from the time-Lipschitz constant of the posterior might dominate the discretization error.
\end{remark}

\section{Experiments}

In this section, we evaluate our proposed corrected samplers on low-dimensional dataset and text-to-image generation task. Additional experiments and implementation details can be found in \cref{sec:additional experiments}.

\subsection{Low-Dimensional Simulations}
\begin{figure*}[h!]
    \centering
    \includegraphics[width=1\linewidth]{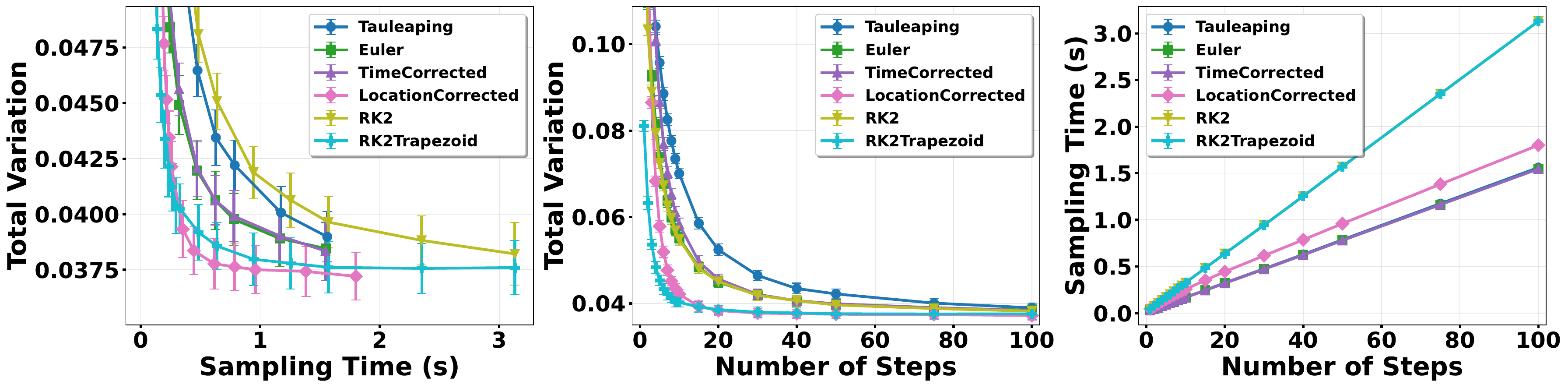}
    \includegraphics[width=1\linewidth]{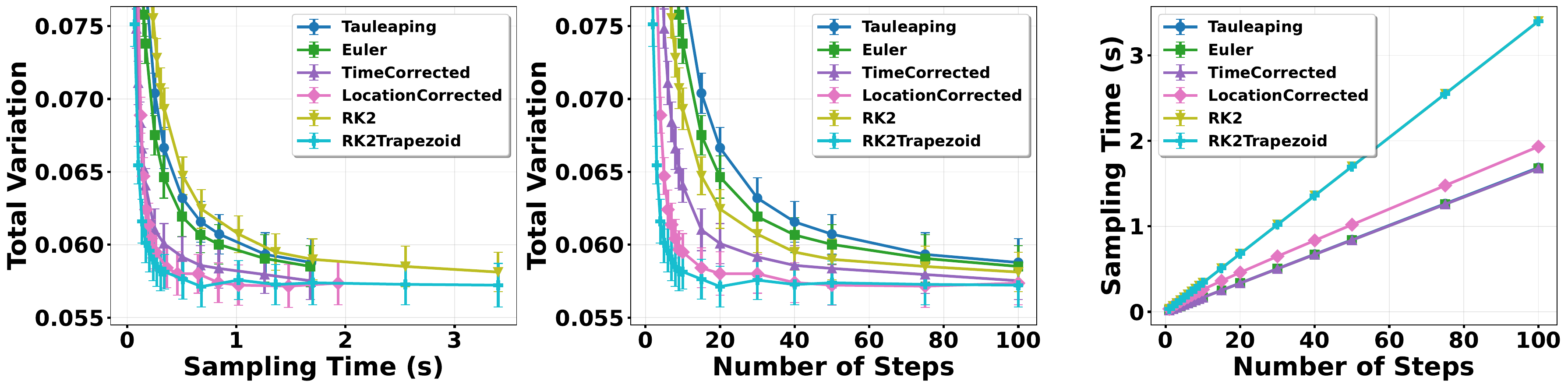}
    \caption{\small Performance comparison of different samplers with $9$-dimensional data distribution (vocabulary size is $8$). (Top) Masked source distribution; (Bottom) uniform source distribution. Under a fixed sampling time, the location-corrected sampler outperforms other samplers for masked source distribution, and remains competitive with RK2-Trapezoid sampler for uniform source distribution. For a fixed number of steps, the time-corrected sampler has the same sampling time as the tau-leaping and Euler samplers, and the location-corrected sampler has negligible additional cost relative to the tau-leaping and Euler samplers.}
    \label{fig:simulation-3}
\end{figure*}

We conduct simulations on low-dimensional data set with $\mc{D}=9$ and $\abs{\mc{S}}=8$ (the size of the state space is $8^9$). In this experiments, we consider both masked and uniform source distributions, and use the linear schedule $\kappa_t=t$. We trained our logits models using objective in \eqref{eq:training-posterior}. We compared the corrected samplers (\cref{alg:Time-corrected sampler,alg:Location-corrected sampler}) with tau-leaping, Euler samplers (\cref{alg:Tau-leaping,alg:Euler sampler}) and (deterministic midpoint) high-order samplers \citep{ren2025fast}. For a fair comparison, we record the sampling time of each sampler in each setting. The results are provided in \cref{fig:simulation-3}, demonstrating that the proposed corrected samplers have a faster convergence rate than tau-leaping and Euler samplers. Specifically, the corrected samplers achieves lower total variation when the sampling time is fixed. Furthermore, the corrected samplers have negligible additional cost relative to the Euler sampler when the number of steps is greater than dimension $\mc{D}$. Compared with (deterministic midpoint) high-order samplers, the corrected samplers consistently outperform the RK2 sampler. Moreover, under a fixed sampling time, the location-corrected sampler achieves lower total variation than the RK2-Trapezoid sampler for the masked source distribution, and remains competitive with the RK2-Trapezoid sampler for the uniform source distribution. The observed advantage of the location-corrected sampler under the masked source distribution is consistent with our theoretical result that it has lower iteration complexity if the posterior is time-independent.
\begin{figure}[h!]
    \centering
    \includegraphics[width=0.7\linewidth]{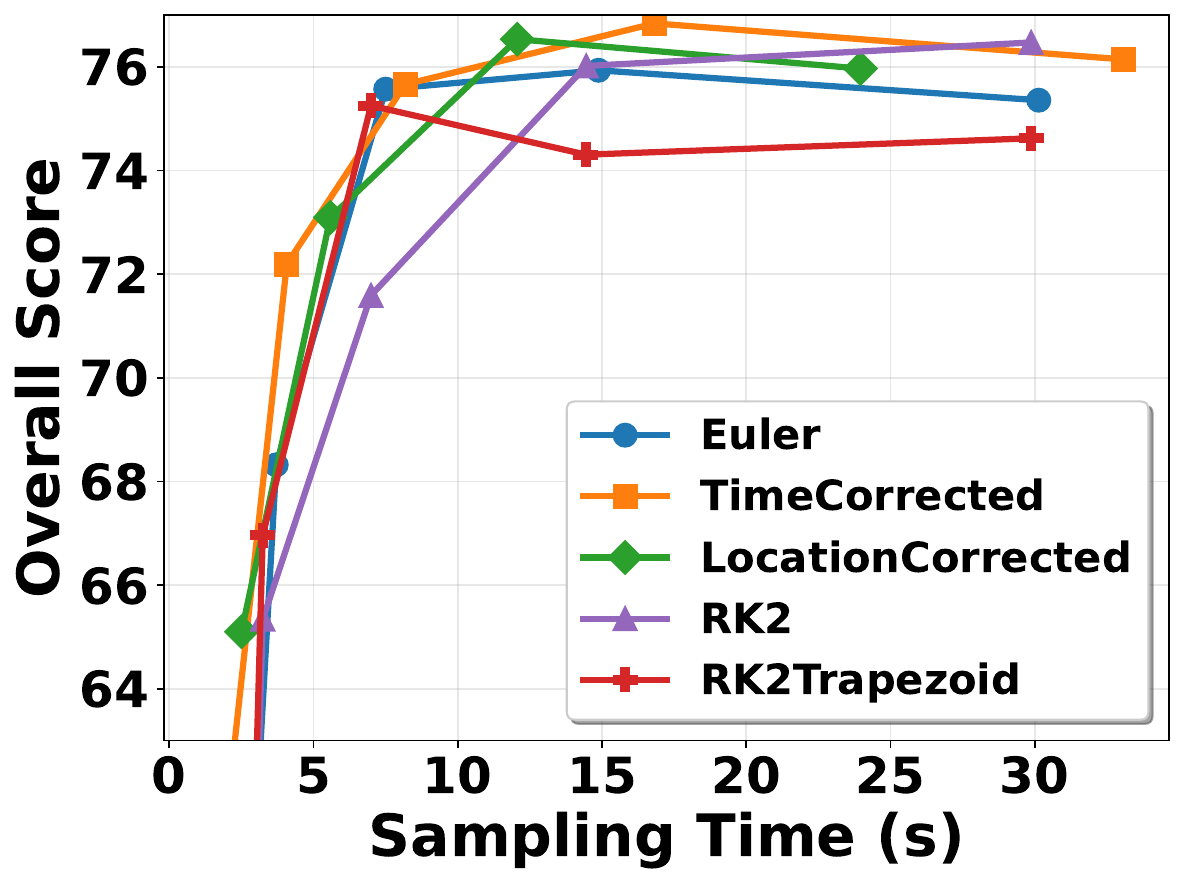}
    \caption{\small GenEval scores of different samplers in text-to-image generation tasks.}
    \label{fig:geneval-overall}
    \vspace{-20pt}
\end{figure}

\subsection{Text-to-Image Generation}

On the image generation task, we evaluate corrected samplers in a high-dimensional setting ($\mc{D}=576$). In such high-dimensional scenario, using one function evaluation to update one token with location-corrected sampler is computationally inefficient, since the number of steps ($K\in\set{4,8,16,32,64}$ in our experiments) is typically much smaller than $\mc{D}$. To address this issue, we extend our corrected samplers to few-step regime and general conditional rates; see \cref{subsec:few-step,subsec:general rate}. In this experiment, we use FUDOKI \citep{wang2025fudoki} as our logits models and consider a metric-induced probability path introduced in \citet{shaul2024flow}. We compare our proposed samplers with the commonly used Euler solver \citep[\cref{alg:Euler sampler in general}; see also Algorithm 1 in][]{shaul2024flow} and (deterministic midpoint) high-order samplers \citep{ren2025fast}. The GenEval overall scores of different samplers are reported in \cref{fig:geneval-overall}. We observe that the corrected samplers consistently outperforms the Euler solver and RK2-Trapezoid sampler, and converges faster than RK2 sampler, demonstrating that the corrected samplers achieves higher accuracy and sampling efficiency in high-dimensional tasks. In particular, the time-corrected sampler performs well on this task, achieving a higher GenEval score than all other samplers under the same time budget at a sampling time comparable to that of the Euler sampler with $K=8$.

\vspace{-6pt}
\section{Related Work}
\vspace{-1pt}

{\bf Discrete diffusion models and discrete flow models.}
Discrete diffusion models \citep{austin2021structured,hoogeboom2021autoregressive,campbell2022continuous,sun2022score} have become increasingly popular in recent years. \citet{meng2022concrete} and \citet{lou2023discrete} provide a unified framework for modeling the time-reversal of a forward continuous-time Markov chain (CTMC) by learning a concrete score, parallel to the score matching in continuous diffusion models \citep{song2020score}. Instead of learning concrete score in diffusion-based models, discrete flow-based models \citep{gat2024discrete,shaul2024flow,campbell2024generative} offers a flexible framework to train a transition rate that can generate a probability flow from a simple distribution to the data distribution, which incurs zero initialization error compared to concrete score matching. Our work focuses mainly on the discretization error in the sampling stage of discrete flow models.

\noindent{\bf Theoretical analysis of discrete diffusion models and discrete flow models.} There are some works on the theoretical results for uniformization algorithm \citep{chen2024convergence,huang2025almost,ren2024discrete,wan2025error}, which enables us to simulate CTMC without discretization error. \citet{zhang2024convergence} establishes error bounds for uniformization algorithm using a piece-wise time-homogeneous transition rate. However, the uniformization algorithm is not efficient since the number of functions evaluation might be large in some time interval. Tau-leaping algorithm \citep{gillespie2001approximate,campbell2022continuous} enables all coordinates to update in parallel by freezing both time and location, which leads to a Lévy process \citep{ren2024discrete} and requires only one function call in each time interval. Some theoretical results of tau-leaping are established in the discrete diffusion framework \citep{ren2024discrete,liang2025absorb,liang2025discrete}. \citet{ren2025fast} investigates high-order tau-leaping methods \citep{hu2011weak} in discrete diffusion models. For discrete flow models, \citet{su2025theoretical} and \citet{wan2025error} derive the estimation errors for different empirical risk minimizers and network classes. However, existing works often impose conditions on the support of data distribution and boundedness of concrete score and transition rate, or focus only on a specific source distribution. In our work, we establish a non-asymptotic error bound of tau-leaping and Euler sampler for discrete flow models in a unified framework by constructing an auxiliary process in our analysis. Additionally, we propose two corrected samplers to reduce the discretization error from the Euler sampler by analyzing its one-step lower bound.

\vspace{-9pt}
\section{Conclusion and Future Work}
\vspace{-3pt}
In this work, we study the corrected samplers by considering a two-stage transition rate for jump processes. Compared to tau-leaping and the Euler sampler, the time-corrected sampler reduces the error caused by freezing the time variable of the time schedule in the transition rate, while the location-corrected sampler reduces the iteration complexity of the time-corrected sample by reevaluating network function after the first jump in each timestep. The proposed samplers are more accurate than tau-leaping and the Euler solver and incur almost no additional computational cost. We have rigorously established the non-asymptotic error bounds for the proposed samplers under the assumption of finite estimation error.

As mentioned in \cref{sec:location-corrected}, when the posterior is time-dependent, the theoretical error bound may be dominated by the time-Lipschitz constant of the posterior. We believe that it would be a promising direction to model the \textit{average transition rate} $\frac{1}{\tau_k}\int_{t_{k-1}}^{t_k}Q_{s}(x,z)\dd s$ \citep[similar to the MeanFlow framework of][]{geng2025mean} instead of modeling the \textit{instantaneous} posterior $p_{1|t}$, to control the error by approximating the distribution of the exit time (\cref{prop:exit time}). On the other hand, the error bounds derived in existing works focus only on the scenario where the number of steps $K$ is sufficiently large; it would be interesting to investigate theoretical results of the proposed samplers in a few-step regime (see, \cref{subsec:few-step}).

\section*{Acknowledgements}
Hongyuan Zha’s research is supported in part by Shenzhen Stability Science Program 2023, and
National Natural Science Foundation of China (72495131). Fang Fang gratefully acknowledges research support from National Natural Science Foundation of China (72331005). Guang Cheng gratefully acknowledges financial support through a gift from Cisco.

\section*{Impact Statement}
This paper presents work whose goal is to advance the field of machine learning. There are many potential societal consequences of our work, none of which we feel must be specifically highlighted here.
\clearpage





\bibliography{example_paper}
\bibliographystyle{icml2026}

\newpage
\appendix
\onecolumn
\section*{Appendix}
The appendix is organized as follows. We introduce some notations in \cref{sec:notation}. \cref{sec:algorithms} presents several sampling algorithms discussed in this paper. \cref{sec:results for jump processes} provides some theoretical results for (two-stage) jump processes. Some discussions and extension of the corrected samplers are presented in \cref{sec:discussions}. We present auxiliary lemmas and technical proofs in \cref{sec:lemmas,sec:proof for jump processes,sec:proof for convergence}. The implementation details for experiments and additional results are reported in \cref{sec:additional experiments}.

\section{Notation}\label{sec:notation}
Let $\brac{N}=\set{1,2,\dots,N}$ for a positive integer $N$. We use $\dot{\kappa}_t$ to denote the time derivative of a function $\kappa_t$ of $t$. For a $\mc{D}$-dimensional vector $z$, denote $z^d$ and $z^{\bsl d}$ as the $d$-th element of the vector $z$ and the $(\mc{D}-1)$-dimensional vector $(z^1,\dots,z^{d-1},z^{d+1},\dots,z^\mc{D})^\top$. We use $\mathbbm{1}(\cdot)$ to denote an indicator function. We denote the Hamming distance of two vectors $z,x$ by $\dd^H(z,x)$. For two quantities $x,z$, denote $\delta_x(z)$ as the Kronecker delta that satisfies $\delta_x(z)=1$ if $x=z$ and $\delta_x(z)=0$ if $x\neq z$. We say $a=\mc{O}(b)$ if $a\leq cb$ for a positive constant $c$. We use $a\vee b$ to denote $\max\set{a,b}$.

\section{Algorithms}\label{sec:algorithms}

In this section, we present several sampling algorithms in our analysis. \cref{alg:uniformization} describes the uniformization algorithm, which enables us to simulate CTMC in an exact way \citep{chen2024convergence}. The tau-leaping algorithm \citep{gillespie2001approximate} for categorical data is presented in \cref{alg:Tau-leaping}. We present the Euler sampler, as well as our proposed time-corrected and location-corrected samplers in \cref{alg:Euler sampler,alg:Time-corrected sampler,alg:Location-corrected sampler}, respectively.

\begin{algorithm}[htbp]
\caption{Uniformization}
\label{alg:uniformization}
\begin{algorithmic}[1]
\Require A transition rate $Q_t$, an early stopping parameter $\delta>0$, time partition $0=t_0<t_1<\cdots<t_K=1-\delta$, parameters $\lambda_1,\lambda_2,\dots,\lambda_K$ satisfying $\sup_{t\in[t_{k-1},t_{k}]}\sum_{z\neq x}Q_t(x,z)\leq \lambda_{k}$ for any $k\in\brac{K}, x\in\mc{S}^\mc{D}$.
\State Draw $Y_0\sim p_0$.
\For{$k=1$ to $K$}

  \State Draw $M\sim \text{Poisson}(\tau_k\lambda_{k})$.
    \State Sample $M$ points i.i.d. from $\mc{U}([t_{k-1},t_{k}])$ and sort them as $s_1<s_2<\cdots<s_M$.
    \State Set $Z_0=Y_{k-1}$.
    \For{$j=1$ to $M$} 
    \State Set $Z_{j}=\begin{cases}
        z,&~\text{with probability } Q_{s_j}(Z_{j-1},z)/\lambda_{k}\\
        Z_{j-1},&~\text{with probability } 1-(\sum_{z\neq Z_{j-1}}Q_{s_{j}}(Z_{j-1},z)/\lambda_{k})
    \end{cases}$, where $z\neq Z_{j-1}$.
    \EndFor
    \State Set $Y_{k}=Z_M$.
  \EndFor
\State \Return $Y_K\sim p_{1-\delta}$
\end{algorithmic}
\end{algorithm}

\begin{algorithm}[htbp]
\caption{Tau-leaping}
\label{alg:Tau-leaping}
\begin{algorithmic}[1]
\Require A transition rate $Q_t$, an early stopping parameter $\delta>0$, time partition $0=t_0<t_1<\cdots<t_K=1-\delta$.
\State Draw $Y_0\sim p_0$.
\For{$k=1$ to $K$}
    \For{$d=1$ to $\mc{D}$}  \Comment{\textcolor{brown}{in parallel}}
    \For{$z^d\in\mc{S}\bsl\set{Y_{k-1}^d}$}
    \State Draw $M_{d,z^d}\sim\text{Poisson}(\tau_kQ^d_{t_{k-1}}(Y_{k-1},z^d))$.
    \EndFor
    \EndFor
    \State Set $Z=Y_{k-1}+\sum_{d=1}^\mc{D}\sum_{z^d\in\mc{S}\bsl\set{Y_{k-1}^d}}(z^d-Y_{k-1})M_{d,z^d}$.
    \State Set $Y_{k}=\mc{M}_{Y_{k-1}}(Z)$, where $\mc{M}_x(z)=\set{z^{d}\delta_{\mc{S}}(z^{d})+x^{d}(1-\delta_{\mc{S}}(z^{d}))}_{d\in\brac{\mc{D}}}$ is a mapping from $\Z^\mc{D}$ to $\mc{S}^\mc{D}$.
  \EndFor
\State \Return $Y_K\sim p_{1-\delta}$
\end{algorithmic}
\end{algorithm}

\begin{algorithm}[htbp]
\caption{Euler sampler}
\label{alg:Euler sampler}
\begin{algorithmic}[1]
\Require A transition rate $Q_t$, an early stopping parameter $\delta>0$, time partition $0=t_0<t_1<\cdots<t_K=1-\delta$.
\State Draw $Y_0\sim p_0$.
\For{$k=1$ to $K$}
    \State Set $Z=Y_{k-1}$.
    \For{$d=1$ to $\mc{D}$}  \Comment{\textcolor{brown}{in parallel}}
    \State Set $\lambda_k^d=\sum_{z^d\neq Y_{k-1}^d}Q^d_{t_{k-1}}(Y_{k-1},z^d)$
    \State Set $Z^d=\begin{cases}
        z^d,&~\text{with probability } \setBig{1-\exp\parenBig{-(t_k-t_{k-1})\lambda_k^d}}Q_{t_{k-1}}^d(Y_{k-1},z^d)/\lambda_{k}^d\\
        Z^d,&~\text{with probability } \exp\parenBig{-(t_k-t_{k-1})\lambda_k^d}
    \end{cases}$, where $z^d\neq Z^d$.
    \EndFor
    \State Set $Y_{k}=Z$.
  \EndFor
\State \Return $Y_K\sim p_{1-\delta}$
\end{algorithmic}
\end{algorithm}

\begin{algorithm}[htbp]
\caption{Time-corrected sampler}
\label{alg:Time-corrected sampler}
\begin{algorithmic}[1]
\Require A posterior $p_{1|t}$, time schedule $\kappa_t$, an early stopping parameter $\delta>0$, time partition $0=t_0<t_1<\cdots<t_K=1-\delta$.
\State Draw $Y_0\sim p_0$.
\For{$k=1$ to $K$}
    \State Set $Z=Y_{k-1}$.
    \For{$d=1$ to $\mc{D}$}  \Comment{\textcolor{brown}{in parallel}}
    \State Set $\lambda_k^d=\sum_{z^d\neq Y_{k-1}^d}p^d_{1|t_{k-1}}(z^d|Y_{k-1})$
    \State Set $Z^d=\begin{cases}
        z^d,&~\text{with probability } \setBig{1-\parenBig{\frac{1-\kappa_{t_{k}}}{1-\kappa_{t_{k-1}}}}^{\lambda_k^d}}p^d_{1|t_{k-1}}(z^d|Y_{k-1})/\lambda_{k}^d\\
        Z^d,&~\text{with probability } \parenBig{\frac{1-\kappa_{t_{k}}}{1-\kappa_{t_{k-1}}}}^{\lambda_k^d}
    \end{cases}$, where $z^d\neq Z^d$.
    \EndFor
    \State Set $Y_{k}=Z$.
  \EndFor
\State \Return $Y_K\sim p_{1-\delta}$
\end{algorithmic}
\end{algorithm}

\begin{algorithm}[htbp]
\caption{Location-corrected sampler}
\label{alg:Location-corrected sampler}
\begin{algorithmic}[1]
\Require A posterior $p_{1|t}$, time schedule $\kappa_t$, an early stopping parameter $\delta>0$, time partition $0=t_0<t_1<\cdots<t_K=1-\delta$.
\State Draw $Y_0\sim p_0$.
\For{$k=1$ to $K$}
    \State Set $\lambda_k=\sum_{d=1}^\mc{D}\sum_{z^d\neq Y_{k-1}^d}p^d_{1|t_{k-1}}(z^d|Y_{k-1})$
    \State Set $e_k\sim\text{Exp}(\lambda_k)$
    \State Set $T_k=\kappa^{-1}\parenBig{(1-\kappa_{t_{k-1}})(1-\exp(-e_k))+\kappa_{t_{k-1}}}$
    \If{$T_k\ge t_k$}
    \State Set $Y_k=Y_{k-1}$
    \Else
    \State Set $Y_{k-1}=z$ with probability $\sum_{d=1}^\mc{D}\delta_{Y_{k-1}^{\bsl d}}(z^{\bsl d})p^d_{1|t_{k-1}}(z^d|Y_{k-1})/\lambda_k$,
    \State Set $Z=Y_{k-1}$.
    \For{$d=1$ to $\mc{D}$}  \Comment{\textcolor{brown}{in parallel}}
    \State Set $\lambda_k^d=\sum_{z^d\neq Y_{k-1}^d}p^d_{1|T_k}(z^d|Y_{k-1})$
    \State Set $Z^d=\begin{cases}
        z^d,&~\text{with probability } \setBig{1-\parenBig{\frac{1-\kappa_{t_{k}}}{1-\kappa_{T_k}}}^{\lambda_k^d}}p^d_{1|T_k}(z^d|Y_{k-1})/\lambda_{k}^d\\
        Z^d,&~\text{with probability } \parenBig{\frac{1-\kappa_{t_{k}}}{1-\kappa_{T_k}}}^{\lambda_k^d}
    \end{cases}$, where $z^d\neq Z^d$.
    \EndFor
    \State Set $Y_{k}=Z$.
    \EndIf
  \EndFor
\State \Return $Y_K\sim p_{1-\delta}$
\end{algorithmic}
\end{algorithm}

\newpage
\section{Results for Jump Processes}\label{sec:results for jump processes}

We first define the random measure associated with jump processes as follows.
\begin{definition}[Random Measure]\label{def:random measure}
    Define the random measure associated with jump process $X(t)$ on the finite state space $\mc{S}^\mc{D}$ as
    \begin{align*}
        N((t_1,t_2],A)=&~\#\set{t_1<s \leq t_2; \Delta X(s)\neq 0, X(s)\in A},
    \end{align*}
    where $\Delta X(t)=X(t)-X(t-)$ and $A\subseteq\mc{S}^\mc{D}$. We denote $N(t,A)\overset{\triangle}{=}N((0,t],A)$.
\end{definition}

In the rest of this section, we introduce some basic tools required for our analysis.


\subsection{Uniformization} 
Note that this two-stage transition rate $Q_t(x,z)$ is a predictable process for fixed $x,z\in\mc{S}^\mc{D}$; that is, $Q_t(x,z)\in\mc{F}_{t-}$. Similar to CTMC, we can use the following uniformization \cite{wan2025error,chen2024convergence} technique to simulate the jump process $X(t)$ in an exact way.
\begin{proposition}[Uniformization]\label{prop:uniformization}
Suppose that $T_1,T_2,\dots$ are the arrival times of a Poisson process $N(t)$ with rate $M$. Let $X(t)$ be a jump process with initial distribution $p_0$ and natural filtration $\mc{F}_t$ such that at $t=T_1,T_2,\dots$, the process $X(t)$ jumps to position $z\neq X(t-)$ with probability $Q_t(X(t-),z)/M$, where $Q_t(x,z)$ is defined in \Eqref{eq:two-stage rate} and satisfies that $-Q_t(X(t-),X(t-))\leq M$ a.s..

Then the compensated random measure associated with $X(t)$ is
\begin{align*}
    \tilde{N}(t,A)=N(t,A)-\int_0^t\sum_{z\in A\bsl\set{X(s-)}}Q_s(X(s-),z)\dd s.
\end{align*}
\end{proposition}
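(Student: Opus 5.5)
We prove \cref{prop:uniformization} by showing that $M_A(t)\overset{\triangle}{=}\tilde{N}(t,A)$ is an $\mc{F}_t$-martingale, i.e.\ that the predictable compensator of $N(t,A)$ is $\int_0^t\sum_{z\in A\bsl\set{X(s-)}}Q_s(X(s-),z)\dd s$. The idea is to write $N(t,A)$ as a thinning of the Poisson process $N(t)$ and compensate that process.

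\textbf{Step 1: write $N(t,A)$ as a sum over Poisson arrivals.} Let $N^P(t)$ be the Poisson clock with rate $M$ and arrival times $T_1,T_2,\dots$. Every jump of $X$ happens at some $T_j$, but an arrival $T_j$ may be a ``null'' jump that leaves $X$ unchanged. Hence
\[
N(t,A)=\sum_{j:T_j\le t}\mathbbm{1}\paren{X(T_j)\neq X(T_j-),\,X(T_j)\in A}.
\]
Put $\mc{G}_t=\mc{F}_t\vee\sigma(N^P(s):s\le t)$. Under $\mc{G}$, $N^P(t)-Mt$ is a martingale. Conditionally on $\mc{G}_{T_j-}$ and on an arrival at $T_j$, the mark $X(T_j)$ equals $z\neq X(T_j-)$ with probability $Q_{T_j}(X(T_j-),z)/M$.

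\textbf{Step 2: show the conditional jump probability is predictable.} This step is the crux for the two-stage rate \eqref{eq:two-stage rate}. We must check that $s\mapsto Q_s(X(s-),z)$ is $\mc{G}$-predictable. The factor $\mathbbm{1}(T^X\ge s)=\mathbbm{1}(T^X>s-)$ is left-continuous and adapted. On the event $\set{T^X<s}$, the location $X(T^X)$ is $\mc{F}_{s-}$-measurable. Finally, $Q_{1,s}$ and $Q_{2,s}(\cdot;y)$ are continuous in $s$. So the integrand $H_s=\sum_{z\in A\bsl\set{X(s-)}}Q_s(X(s-),z)/M\in[0,1]$ is predictable. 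Boundedness follows from $-Q_t(X(t-),X(t-))\le M$ a.s.

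\textbf{Step 3: apply the marked-point-process compensator formula.} For bounded predictable $H$, the standard result gives that
\[
\sum_{T_j\le t}\xi_j-\int_0^t H_s M\,\dd s
\]
is a martingale when $\E[\xi_j\mid\mc{G}_{T_j-}]=H_{T_j}$. One way to see this is to write
\[
\sum_j(\xi_j-H_{T_j})+\int_0^t H_s\,\dd(N^P(s)-Ms),
\]
where the first sum is a martingale-difference sum and the second is a stochastic integral of a bounded predictable process against a martingale. Here we take $\xi_j=\mathbbm{1}(X(T_j)\in A\bsl\set{X(T_j-)})$. This yields the claimed compensator with respect to $\mc{G}$. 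Since the compensator is $\mc{F}$-adapted and predictable, projecting onto $\mc{F}_t$ preserves the martingale property, so $\tilde N(t,A)$ is also an $\mc{F}_t$-martingale.

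The main obstacle is Step 2 together with the conditioning in Step 3. The rate depends on the history through the random location $X(T^X)$, so $X$ is not Markov. We therefore cannot invoke the CTMC theory and must argue directly that the thinning probabilities are predictable, carefully separating the times before and after the exit time $T^X$. Everything else is the classical Poisson-thinning computation.
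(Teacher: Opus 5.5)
Your proof is correct, but it takes a different route from the paper's.

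\textbf{The paper's route.} The paper never leaves the natural filtration $\mathcal{F}_t$. It uses only that the clock increments on $(t,t+h]$ are independent of $\mathcal{F}_t$, and conditions on the number of rings in $(t,t+h]$: exactly one ring has probability $Mhe^{-Mh}$ and a uniform location. This gives $\mathbb{E}[N((t,t+h],A)\mid\mathcal{F}_t]=\int_t^{t+h}\sum_{z\in A,\,z\neq X(t)}Q_s(X(t),z)\,\mathrm{d}s+O(M^2h^2)$, with the first- or second-stage rate selected according to whether $T^X>t$. It then telescopes this estimate over a partition of $[t_1,t_2]$ of mesh $(t_2-t_1)/n$. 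Finally it lets $n\to\infty$, using a.s.\ finiteness of jumps and dominated convergence to pass from the frozen left-endpoint states to $X(s-)$.

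\textbf{Your route.} You enlarge to $\mathcal{G}_t$ and write $N(t,A)$ as a thinning of the Poisson clock with marking probability $H_s$. You verify that $H_s$ is predictable, apply the marked-point-process compensator formula, and project back to $\mathcal{F}$.

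\textbf{What each buys.}
\begin{itemize}
\item Your argument is shorter and more structural. The history dependence of the two-stage rate enters only through the predictability check in your Step~2. The same proof therefore covers the $j$-th-arrival rates $Q_t^{\text{LC},(j)}$, or any predictable rate, without the $O(h^2)$ remainder bookkeeping or the limit interchange.
\item The paper's argument is more elementary, needing no stochastic integration and no filtration enlargement. Its one-interval estimate \eqref{eq:proof-uniformization-1} is also reused directly in the proof of \cref{prop:exit time}.
\end{itemize}

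\textbf{Details to make explicit.}
\begin{itemize}
\item The sum $\sum_j(\xi_j-H_{T_j})$ must run over $T_j\le t$.
\item Its martingale property uses exactly that $H_{T_j}$ is $\mathcal{G}_{T_j-}$-measurable, which is where Step~2 is needed.
\item Interchanging sum and expectation uses $\mathbb{E}N^P(t)<\infty$.
\item For $N^P(t)-Mt$ to be a $\mathcal{G}$-martingale, the marking randomness at past arrivals must be independent of future clock increments. This is part of the construction and plays the role of the paper's use of independent Poisson increments relative to $\mathcal{F}_t$.
\end{itemize}
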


By uniformization, the process $X(t)$ has only finite jump times on $[0,t]$ almost surely since $N(t,\mc{S}^\mc{D})\leq N(t)<\infty$ a.s.. Then $X(t)$ can be written as a stochastic integral with the integrator $N(\dd t,\dd z)$:
\begin{equation}\label{eq:CTMC representation}
    \begin{aligned}
    X(t)=X(0)+\int_{0}^t\int_{\mc{S}^\mc{D}}(z-X(s-))N(\dd s,\dd z),
\end{aligned}
\end{equation}


\subsection{Change of Measure}
Next, we will present a Girsanov-type theorem for jump processes, which allows us to derive the KL divergence of path measures of two jump processes with different transition structure.

Suppose that $X(t)$ is a jump process with compensated random measure $$\tilde{N}^\P(t,A)=N(t,A)-\int_0^t\sum_{z\in A\bsl\set{X(s-)}}Q^\P_s(X(s-),z)\dd s$$
under the probability space $(\Omega,\mc{F},\P)$, where $Q^\P_t$ is a two-stage transition rate defined in \Eqref{eq:two-stage rate} satisfying $-Q^\P_t(X(t-),X(t-))\leq M$ $\P$-a.s.. We aim to find a probability measure $\Q$ such that $X(t)$ is a jump process with compensated random measure
$$\tilde{N}^\Q(t,A)=N(t,A)-\int_0^t\sum_{z\in A\bsl\set{X(s-)}}Q^\Q_s(X(s-),z)\dd s$$ under the probability space $(\Omega,\mc{F},\Q)$, where $\Q$ is a two-stage transition rate. Similar to section 4 in \citet{wan2025error}, we define $Q\ll\P$ such that $\frac{\Q_{0:t}}{\P_{0:t}}=\exp(W(t))$, where the path measure $\P_{0:t}$ is the restriction of $\P$ on $(\Omega,\mc{F}_t)$ and
\begin{align*}
    W(t)=\int_0^t\sum_{z\neq X(s-)} \log \frac{Q_s^\Q(X(s-),z)}{Q_s^\P(X(s-),z)}N(\dd s,z)+\int_0^t\sum_{z\neq X(s-)}\brac{Q_s^\P(X(s-),z)-Q_s^\Q(X(s-),z)}\dd s.
\end{align*}
\begin{theorem}[Change of Measure]\label{thm:change of measure}
    Assume that $Q_s^\P(X(s-),z)=0$ implies $Q_s^\Q(X(s-),z)=0$ for any $s\in[0,\tau]$ and $z\in\mc{S^\mc{D}}$ $\P$-a.s.. Then $\exp(W(t))$ is a $\P$-martingale on $[0,\tau]$, and $\tilde{N}^Q(t,A)$ is the compensated random measure associated with $X(t)$ under probability space $(\Omega,\mc{F},\Q)$. Moreover, we have
    \begin{align*}
        D_{\text{KL}}(\Q_{1:t}||\P_{1:t})=\E_{\Q}\bracBig{\int_0^{t}\sum_{z\neq X(s)}D_F\parenBig{Q_s^\Q(X(s),z)||Q_s^\P(X(s),z)}\dd s},
    \end{align*}
    where $D_F$ is the Bregman divergence induced by $F(x)=x\log x$.
\end{theorem}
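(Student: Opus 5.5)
We would prove \cref{thm:change of measure} by the standard stochastic-exponential route for point processes, reading the uniformization representation of \cref{prop:uniformization} as the underlying construction. Since $-Q^\P_t(X(t-),X(t-))\le M$, the process $X$ has finitely many jumps on $[0,\tau]$ almost surely. Moreover, the jump count is dominated by a Poisson process of rate $M$, so exponential moments of $N(\tau,\mc{S}^\mc{D})$ are finite.

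\textbf{Step 1 (martingale property).} Write $Z(t)=\exp(W(t))$. We apply Itô's formula for finite-variation jump processes. Between jumps, $\dd Z=Z\sum_{z\neq X(s-)}(Q^\P_s-Q^\Q_s)\dd s$. At a jump to $z$, $Z$ is multiplied by $Q^\Q_s/Q^\P_s$; the absolute-continuity hypothesis ensures this ratio is well defined wherever $N$ charges, with the convention $0\log 0=0$. Together these give
\[
Z(t)=1+\int_0^t\sum_{z\neq X(s-)} Z(s-)\Big(\frac{Q^\Q_s(X(s-),z)}{Q^\P_s(X(s-),z)}-1\Big)\tilde N^\P(\dd s,z).
\]
This makes $Z$ a local martingale, because the integrand is predictable: two-stage rates are $\mc{F}_{t-}$-measurable since $T^X$ is a stopping time. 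To upgrade to a true martingale, we need a uniform bound on the ratio. If the $\Q$-rates are bounded by some $M'$ (e.g.\ by taking the common uniformization constant to dominate both generators), then $Z(t)\le \exp(\tau M)\,(\text{ratio bound})^{N(\tau)}$, and this is integrable thanks to the Poisson exponential moments. Dominated convergence along a localizing sequence then yields $\E_\P Z(t)=1$. \textbf{This is the step I expect to be the main obstacle.} The ratio $Q^\Q/Q^\P$ can be unbounded where $Q^\P$ is small, so if no direct bound is available we would instead verify $\E_\P Z(\tau)=1$ via the explicit product form of the path density. That density is computed jump by jump using \cref{prop:exit time}, applied separately to each stage of the two-stage rate.

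\textbf{Step 2 (compensator under $\Q$).} We would show that $\tilde N^\Q(t,A)$ is a $\Q$-local martingale by checking that $Z(t)\tilde N^\Q(t,A)$ is a $\P$-local martingale. Integration by parts, using the covariation $[Z,N(\cdot,A)]$ that collects the jumps of $Z$ at $A$-jumps, produces exactly the drift $\int\sum_{z\in A}Q^\Q\,\dd s$. This is the usual Girsanov computation: the $\P$-compensator of $N(\cdot,z)$, multiplied by the ratio $Q^\Q/Q^\P$, gives the $\Q$-compensator.

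\textbf{Step 3 (KL formula).} By definition, $D_{KL}(\Q_{0:t}\|\P_{0:t})=\E_\Q[W(t)]$. Replace $N(\dd s,z)$ by its $\Q$-compensator $Q^\Q_s(X(s-),z)\dd s$; this is justified by Step 2 together with integrability of $\log$-ratio terms, which holds whenever the right-hand side is finite. The result is
\[
\E_\Q\int_0^t\sum_{z\neq X(s-)}\Big[Q^\Q\log\frac{Q^\Q}{Q^\P}-Q^\Q+Q^\P\Big]\dd s.
\]
The integrand is exactly $D_F(Q^\Q\|Q^\P)$ for $F(x)=x\log x$. Finally, since $X(s-)=X(s)$ for Lebesgue-a.e.\ $s$, we can replace $X(s-)$ by $X(s)$, which gives the stated formula.
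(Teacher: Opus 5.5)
Your proposal is correct and follows the paper's proof: Itô's formula writes $\exp(W(t))$ as a stochastic integral against $\tilde N^\P$; a product-formula computation shows $\tilde N^\Q(t,A)\exp(W(t))$ is a $\P$-martingale, hence $\tilde N^\Q$ is a $\Q$-martingale; and the KL identity follows from $\E_\Q[W(t)]$ by replacing $N(\dd s,z)$ with its $\Q$-compensator. Your extra care about local versus true martingales and about integrability in Step 3 goes beyond the paper, which simply asserts these points; note that in Step 1 only your path-density route (non-explosion of the bounded-rate $\Q$-process) works in general, because a bound on $Q^\Q$ alone does not bound the ratio $Q^\Q/Q^\P$.
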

By utilizing \cref{thm:change of measure} and Jensen's inequality, we derive the bound for the KL divergence of marginals of two jump processes:
\begin{align*}
        D_{\text{KL}}(p^\Q_{t}||p^\P_t)
        \leq\E_{\Q}\bracBig{\int_0^{t}\sum_{z\neq X(s)}D_F\parenBig{Q_s^\Q(X(s),z)||Q_s^\P(X(s),z)}\dd s}.
    \end{align*}

\section{Discussion}\label{sec:discussions}

\subsection{Training Objective in Terms of Posterior}
To derive the training objective in terms of the posterior $p_{1|t}$, one can plug \eqref{eq:coordinate-wise conditional path} and \eqref{eq:mixture path} and \eqref{eq:oracle transition rate} into \eqref{eq:training-rate}:
\begin{align}\label{eq:training-posterior}
    \E\Big[\frac{\dot{\kappa}_t}{1-\kappa_t}\sum_{d=1}^\mc{D}\Big\{-\parenBig{1-\delta_{X^d(1)}(X^d(\rvt))}\log p_{1|t}(X^d(1)|X(\rvt))+\delta_{X^d(1)}(X^d(\rvt))-p_{1|t}(X^d(\rvt)|X(\rvt))\Big\}\Big],
\end{align}
which is equivalent to Equation 37 in \cite{shaul2024flow}. If we use the masked source distribution with the mixture path, then we have $\delta_{X^d(1)}(X^d(\rvt))=1-\delta_\mask(X^d(\rvt))$, and the underlying posterior satisfies $p_{1|t}^d(x^d|x)=(1-\delta_\mask(x^d))$, which implies that $\delta_{X^d(1)}(X^d(\rvt))=p^d_{1|t}(X^d(\rvt)|X(\rvt))$. In this case, \eqref{eq:training-posterior} recovers the training objective for masked diffusion models \citep{shi2024simplified,sahoo2024simple,ou2024your,nie2025large}.
\begin{remark}
    In the training stage, we also use the early stopping strategy, since it is hard to control the estimation error if $\delta=0$ (the expectation $\E\brac{\frac{\dot{\kappa}_\rvt}{1-\kappa_\rvt}}$ is infinite if $\rvt\sim\mc{U}([0,1])$).
\end{remark}

\subsection{Error Bound with Lower-Boundedness Condition}\label{discussion:lower-bounded}
In this subsection, we establish the total variation error bound for our proposed location-corrected sampler under a lower-boundedness condition for the posterior. 

\begin{assumption}[Boundedness]\label{con:boundedness}
    The posterior satisfies that $p_{1|t}^d(z^d|x)>\underline{M}$ for any ${t\in[0,1-\delta],x\in\mc{S}^\mc{D},d\in\mc{D},z^d\neq x^d}$.
\end{assumption}
This assumption provides strong convexity for the Bregman divergence. Since we only impose condition on the posterior instead of the oracle rate, this assumption is weaker than Assumption 1 in \citet{wan2025error}. Specifically, this assumption could hold for time schedulers with $\lim_{t\to0^-}\dot{\kappa}_t=0$ (e.g. consine scheduler $\cos^2(\frac{\pi}{2}(1-t))$) compared to a condition on the oracle rate (Assumption 1 in \citet{wan2025error}).

\begin{theorem}\label{thm:convergence-location-corrected-convexity}
    Assume that $\max_{k\in\brac{K}}\set{\tau_k\mc{D}M_k}\leq \log 2$, where $M_k=\sup_{t\in[t_{k-1},t_k]}\abs{\dot{\kappa}_t/(1-\kappa_t)}$. The location-corrected sampler has the following bound for the total variation between the estimated and the oracle marginal distribution at time $t=1-\delta$:
\begin{equation*}
\begin{aligned}
    \E\brac{TV(p_{1-\delta},\hat{p}^{\text{LC}}_{1-\delta})}\leq\sqrt{2\mc{D}^3\abs{\mc{S}}\sum_{k=1}^K\tau_k^3M_k^3\underline{M}_k^{-1} \log(\mc{D}M_k\tau_k)^{-2}}+\varepsilon_{\text{Est}}^\text{LC}
\end{aligned}
\end{equation*}
where $\underline{M}_k=\inf_{t\in[t_{k-1},t_k],x\in\mc{S}^\mc{D},d\in\mc{D},z^d\in x^d}\set{p^d_{1|t}(z^d|x)}$, and $\varepsilon_{\text{Est}}^{\text{LC}}$ is defined in \cref{con:estimation error}. In particular, under \cref{con:boundedness}, it holds that
\begin{equation*}
\begin{aligned}
    \E\brac{TV(p_{1-\delta},\hat{p}^{\text{LC}}_{1-\delta})}\leq\sqrt{2\mc{D}^3\abs{\mc{S}}\underline{M}^{-1} \sum_{k=1}^K\tau_k^3M_k^3\log(\mc{D}M_k\tau_k)^{-2}}+\varepsilon_{\text{Est}}^\text{LC}.
\end{aligned}
\end{equation*}
\end{theorem}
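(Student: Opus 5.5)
We follow the same route as \cref{thm:convergence-location-corrected} and change only how the one-step Bregman divergence between the oracle rate and the location-corrected rate is bounded. First we split the total variation with the triangle inequality: $TV(p_{1-\delta},\hat p^{\text{LC}}_{1-\delta})\le TV(p_{1-\delta},p^{\text{LC}}_{1-\delta})+TV(p^{\text{LC}}_{1-\delta},\hat p^{\text{LC}}_{1-\delta})$. Pinsker's inequality together with \cref{con:estimation error} handles the second term and gives $\varepsilon^{\text{LC}}_{\text{Est}}$. For the first term we apply Pinsker, the chain rule over the intervals $[t_{k-1},t_k]$, and \cref{thm:change of measure}, with $\Q$ the law of the location-corrected process started from $p_{t_{k-1}}$ and $\P$ the oracle CTMC. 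This reduces the claim to showing
\[
\E_{\Q}\Big[\int_{t_{k-1}}^{t_k}\sum_{z\ne X(s)}D_F\big(Q^{\text{LC}}_s(X(s),z)\,\|\,Q_s(X(s),z)\big)\dd s\Big]\lesssim \mc{D}^3|\mc{S}|\tau_k^3M_k^3\underline{M}_k^{-1}\log(\mc{D}M_k\tau_k)^{-2}
\]
for each $k$, with the factor $2$ absorbed as in \cref{thm:convergence-location-corrected}.

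Both rates carry the common factor $\dot\kappa_s/(1-\kappa_s)\le M_k$. The Bregman divergence is therefore $\frac{\dot\kappa_s}{1-\kappa_s}D_F(a\|b)$, where $a$ is a frozen posterior value and $b=p^d_{1|s}(z^d|X(s))$. Instead of the Lipschitz argument, we use the lower bound $b\ge\underline M_k$ through the elementary bound $D_F(a\|b)\le (a-b)^2/b\le (a-b)^2/\underline M_k$, which is the $\chi^2$-type upper bound on $x\log x$ Bregman divergences.

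The key observation is where the location-corrected rate disagrees with the oracle. On the event $\{T_k\ge s\}$, i.e.\ before the first jump, the process still sits at $x_{k-1}$. Off the diagonal the rate $Q^{\text{LC}}_{1,s}$ equals the oracle with the posterior frozen at $t_{k-1}$ at the correct location. After the first jump, $Q^{\text{LC}}_{2,s}$ uses the posterior at the correct location $X(T_k)$ and time $T_k$, as long as no second jump has occurred. Discrepancies therefore come from two sources. The first is time-freezing of the posterior, giving $|a-b|\le$ (time-Lipschitz)$\times\tau_k$. The second, more important one, comes from the event of at least two jumps, where the frozen location is wrong. Here we use $|a-b|\le 1$, so the contribution is at most $1/\underline M_k$ times the probability of that event.

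By \cref{prop:exit time}, with total rate at most $\mc{D}M_k$, the probability of at least $j$ jumps in $[t_{k-1},s]$ is $O((\mc{D}M_k(s-t_{k-1}))^j)$. Summing over $\mc{D}|\mc{S}|$ neighbours and integrating in $s$ gives a two-jump term of order $\mc{D}|\mc{S}|M_k\tau_k(\mc{D}M_k\tau_k)^2/\underline M_k$. This matches the target up to the logarithmic factor. The time-freezing terms must be absorbed into the same bound without invoking $L_k^p$.

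The main obstacle is exactly this absorption, since \cref{con:boundedness} gives no control on $\dot p_{1|t}$. The plan is to avoid the time derivative entirely. By the Kolmogorov backward equation (\cref{lemma:backward equation}), $p_{1|s}(\cdot|x)$ is an average of $p_{1|t_{k-1}}(\cdot|y)$ over the transition kernel from $x$ at time $t_{k-1}$ to $y$ at time $s$. Hence $|p_{1|s}(z|x)-p_{1|t_{k-1}}(z|x)|$ is bounded by the probability of jumping in $[t_{k-1},s]$. This again reduces to a jump-count event, handled just like the location error.

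The $\log(\mc{D}M_k\tau_k)^{-2}$ factor should appear when we convert the jump probabilities, computed through $1-(1-\kappa_t)^{\lambda}/(1-\kappa_{t_{k-1}})^{\lambda}$, into the integral of the squared difference divided by the rate. Here we use $\tau_k\mc{D}M_k\le\log2$, and we may need a sharper inequality such as $D_F(a\|b)\le (a-b)^2/b$ applied with care near $a\approx 0$. If that fails, we fall back on bounding this term by the ratio of exponentials and tracking constants. Finally we sum over $k$, take square roots, and substitute $\underline M_k\ge\underline M$ to obtain the second display.
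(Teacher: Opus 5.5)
Granting \cref{lemma:backward equation} exactly as the paper does, your argument goes through, but it is organized differently from the paper's proof.

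\textbf{How the paper proceeds.} The paper never compares $\P^{\text{LC}}_k$ with the oracle directly. It reuses the auxiliary process $\P^\eps_k$ of \cref{thm:convergence-location-corrected}: posterior at the current time, location frozen at $X(T_k)$ after the first jump, truncation level $\eps_k=\min\{\underline{M}_k,\tau_k^2\mc{D}^2M_k^2\}$. It keeps that theorem's bound on $D_{KL}(\P_k\|\P^\eps_k)$. That term is the two-jump location error, bounded through $D_F(a\|b)\le b\vee a\log(a/\eps_k)$, and it is where the logarithmic factor comes from. The paper then changes only the time-freezing term $D_{KL}(\P^{\text{LC}}_k\|\P^\eps_k)$. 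It bounds this by strong convexity as $\tau_k^3\mc{D}\abs{\mc{S}}M_k(L_k^p)^2/(2\underline{M}_k)$ and then uses $L_k^p\le\mc{D}M_k$, splitting the total variation into three terms.

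\textbf{How your route differs.} You run a single Girsanov comparison of $\P^{\text{LC}}_k$ against the oracle and treat both error sources with $D_F(a\|b)\le(a-b)^2/b$. This is legitimate because every Hamming-one oracle rate is at least $\frac{\dot\kappa_s}{1-\kappa_s}\underline{M}_k$. Hence absolute continuity in your direction is automatic; the zero rates of $Q^{\text{LC}}_2$ for coordinates changed after $T_k$ only contribute $D_F(0\|b)=b$.

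\textbf{What each route buys.} Yours is shorter and needs no truncation. It even yields a log-free per-step bound of about $(\tfrac13+\tfrac{\log 2}{3}+\tfrac16)\,\mc{D}^3\abs{\mc{S}}\tau_k^3M_k^3/\underline{M}_k$; the three pieces are before $T_k$, between the first two jumps, and after the second jump. The paper's route is modular. It also keeps $\underline{M}_k^{-1}$ off the location term, which pays only a logarithm there. You can get the same by using the first inequality of \cref{lemma:bregman} with $\eps=\underline{M}_k$ on the two-jump event, instead of $\abs{a-b}\le 1$.

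\textbf{Corrections.}
\begin{itemize}
\item Your last paragraph looks for the logarithm in the wrong place. Your route produces none and needs none. Since $\tau_k\mc{D}M_k\le\log 2$ gives $\log((\mc{D}M_k\tau_k)^{-2})\ge 2\log(1/\log 2)>0.7$, any per-step constant below about $2.9$ already implies the stated bound after Pinsker.
\item The averaging identity is written backwards. The Chapman--Kolmogorov identity behind \cref{lemma:backward equation} is $p_{1|t_{k-1}}(z|x)=\sum_y\P(X(s)=y\mid X(t_{k-1})=x)\,p_{1|s}(z|y)$. It gives $\abs{p_{1|s}(z|x)-p_{1|t_{k-1}}(z|x)}\le 1-e^{-\mc{D}M_k(s-t_{k-1})}$. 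This is the integrated form of the paper's $L_k^p\le\mc{D}M_k$, so this step is the paper's, not a way around it.
\item In the chain rule, $X(t_{k-1})\sim p^{\text{LC}}_{t_{k-1}}$, not $p_{t_{k-1}}$. This is harmless, since your bound is uniform in $x_{k-1}$.
\item Be aware that this shared step identifies the interpolant posterior with the oracle chain's own conditional law of $X(1)$ given $X(t)$. That is not automatic for a Markovian projection. For $\mc{D}=1$, $\abs{\mc{S}}=2$ and $p_0=p_1$ uniform, the chain gives $\P(X(1)=x\mid X(0)=x)=(1+e^{-1})/2$, while $p_{1|0}(x|x)=1/2$. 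On this point your proof is exactly as solid as the paper's and no more.
\end{itemize}
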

This error bound does not depend on the time-Lipschitz constant $L_k^p$ used in \cref{thm:convergence-location-corrected}. Compared to the error bound for the time-independent posterior given in \cref{thm:convergence-location-corrected}, the sacrifice incurred by the time-Lipschitz constant of the posterior is $\underline{M}_k$ in each time interval. 

Consider a uniform source distribution with mixture path; that is $p_0^d(x^d)=1/\abs{\mc{S}}$ for all $d\in\mc{D}$ and $x^d\in\mc{S}$. Similar to Appendix B.2 in \citet{wan2025error}, we can further derive an explicit time-independent lower bound for the posterior
\begin{align*}
    p_{1|t}^d(z^d|x)=&~\frac{p^d_{t|1}(x^d|z^d)\sum_{z^\bsl}\prod_{i\neq d}p^i_{t|1}(x^i|z^i)p_1(z)}{\sum_{y}p^d_{t|1}(x^d|y^d)\prod_{i\neq d}p^i_{t|1}(x^i|y^i)p_1(y)}\\
    =&~\frac{\frac{1-\kappa_t}{\abs{\mc{S}}}p_1^d(z^d)\sum_{z^{\bsl d}}\prod_{i\neq d}p^i_{t|1}(x^i|z^i)p_1(z^{\bsl d}|z^d)}{\frac{1-\kappa_t}{\abs{\mc{S}}}\sum_{y^{\bsl d}}\prod_{i\neq d}p^i_{t|1}(x^i|y^i)p_1(y^{\bsl d})+\kappa_tp_1^d(x^d)\sum_{y^{\bsl d}}\prod_{i\neq d}p^i_{t|1}(x^i|y^i)p_1(y^{\bsl d}|x^d)}\\
    \ge&~\frac{(1-\kappa_t)\alpha\beta^{-1}}{(1-\kappa_t)+\kappa_t\alpha^{-1}\abs{\mc{S}}}\\
    \ge&~\frac{\delta\alpha\beta^{-1}}{1+\alpha^{-1}\abs{\mc{S}}},
\end{align*}
where $t\in[0,1-\delta]$, $\alpha=\inf_{x_1\in\mc{S}^\mc{D},d\in\brac{\mc{D}}} \min\set{\frac{p_1(x_1^{\bsl d}|x_1^d)}{p_1(x_1^{\bsl d})},\frac{p_1(x_1^{\bsl d})}{p_1(x_1^{\bsl d}|x_1^d)}}$ and $\beta=\sup_{d,z^d,x^d}(p_1^d(z^d)/p_1^d(x^d))$. Some discussions and examples of $\alpha$ and $\beta$ can be found in the Appendix B.2 in \citet{wan2025error}.

\subsection{Discussion on Few-Step Regime}\label{subsec:few-step}
When the stepsize $\tau_k$ and the dimension $\mc{D}$ are relatively large, the condition $\max_{k\in\brac{K}}\set{\tau_k\mc{D}M_k}\leq 1$ in \cref{thm:convergence-tauleaping-euler,thm:convergence-time-corrected,thm:convergence-location-corrected} does not hold. Therefore, by \cref{prop:exit time}, the probability of a jump occurring in each time interval is large. In this case, using one function evaluation to update one token of a $\mc{D}$-dimensional sequence is inefficient for the location-corrected sampler (\cref{alg:Location-corrected sampler}). To handle this case, we can consider the $j$-th arrival time in the two-stage transition rate (\eqref{eq:two-stage rate}); that is, for $j\in\Z_+$ and $z\neq x$,
\begin{equation}\label{eq:location-corrected rate-general}
   \begin{aligned}
    Q_t^{\text{LC},(j)}(x,z)=\mathbbm{1}(T^{(j)}_k\ge t)\underbrace{Q^{\text{TC}}_{1,t}(x,z)}_{\text{first-stage rate}}+\mathbbm{1}(T^{(j)}_k< t)\underbrace{Q_{2,t}(x,z;X(T^{(j)}_k),T^{(j)}_k)}_{\text{second-stage rate}},\text{  }t\in[t_{k-1},t_k]
\end{aligned} 
\end{equation}
where $T_k^{(j)}=\inf\set{t>T_k^{(j-1)}:\Delta X(t)\neq 0}$ ($T_k^{(0)}=t_{k-1}$),
\begin{align*}
     Q_{1,t}^{\text{LC}}(x,z)=&~\frac{\dot{\kappa}_t}{1-\kappa_t}\sum_{d=1}^\mc{D}\delta_{x^{\bsl d}}(z^{\bsl d})p_{1|t_{k-1}}^d(z^d|x);\\
    Q_{2,t}^{\text{LC}}(x,z|X(T_k^{(j)}),T_k^{(j)})=&~\frac{\dot{\kappa}_t}{1-\kappa_t}\sum_{d=1}^\mc{D}\delta_{x^{\bsl d}}(z^{\bsl d})\delta_{X^d(T_k^{(j)})}(x^d)p_{1|T_k^{(j)}}^d(z^d|X(T_k^{(j)})).
\end{align*}
In particular, when $j=1$, the above transition rate is equal to \eqref{eq:location-corrected rate}. The location-corrected sampler associated with the transition rate \eqref{eq:location-corrected rate-general} enables us to update $j$ tokens in parallel when the number of steps $K$ is small.
\begin{remark}
    There are various choice of the stopping time in the two-stage transition rate. For example, instead of considering the $j$-th arrival time in the interval $[t_{k-1},t_k]$, we can also use the first arrival time in the interval $[t_{k-1}+\theta\tau_k,t_k]$, where $\theta\in(0,1)$.
\end{remark}

From \eqref{eq:location-corrected rate-general}, the event that location-correction appears in the $k$-th interval depends on the choice of $j$.  Suppose that the posterior is time-independent and $T_k^{(j)}< t_k$. Then, when $t\in[t_{k-1},T_k^{(1)})\cup[T_k^{(j)},T_k^{(j+1)}\wedge t_k)$, we have $Q_t^{\text{LC}}(X(t),z)=Q_t(X(t),z)$ for $z\neq X(t)$, where $Q_t$ is the transition rate associated with the idealized process. Consequently, we can reduce the disretization error under the event $\set{T_k^{(j)}< t_k}$; see \cref{fig:location-corrected-fewstep}.

\begin{figure}[htbp]
    \centering
    \includegraphics[width=1\linewidth]{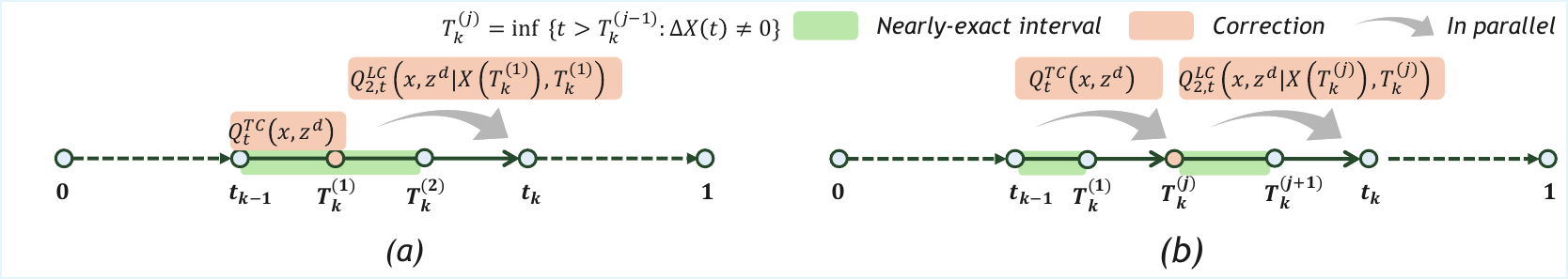}
    \caption{Location-corrected sampler for few-step regime. (a) the transition structure in \eqref{eq:location-corrected rate}; (b) the transition structure in \eqref{eq:location-corrected rate-general}.}
    \label{fig:location-corrected-fewstep}
\end{figure}


\subsection{Discussion on General Conditional Rates}\label{subsec:general rate}
In this subsection, we consider a general conditional transition rate $Q_t^d(x^d,z^d|x_1^d)$, including the kinetic optimal rate used in \citet{shaul2024flow,wang2025fudoki,wan2025discrete} and a family of conditional rates constructed by a detailed balance condition \citep[see, Proposition 3 in][]{campbell2024generative}.

\noindent{\bf Euler sampler.} Note that the unconditional rate at time $t$ can be written as $Q_t^d(x,z^d)=\E_{\rvx_1^d\sim p_{1|t}^d(\cdot|x)}\brac{Q_t^d(x,z^d|\rvx_1^d)}$ for each $d\in\brac{\mc{D}}$. To estimate this transition rate, a practical approach is to sample $\rvx_1^d\sim p_{1|t}^d(\cdot|x)$ and to use approximation $Q_t^d(x,z^d)\approx Q_t^d(x^d,z^d|\rvx_1^d)$, which results in the always-valid Euler solver introduced in \citet{shaul2024flow} with the following rate:
    \begin{align*}
    Q^{\text{Euler}}_t(x,z)=\sum_{d=1}^\mc{D}\delta_{x_{k-1}^{\bsl d}}(z^{\bsl d})\delta_{x_{k-1}^d}(x^d)Q_{t_{k-1}}^d(x_{k-1}^d,z^d|\rvx_1^d),
\end{align*}
where $t\in[t_{k-1},t_k]$ and $\rvx_1^d\sim p_{1|t_{k-1}}^d(\cdot|x_{k-1})$. The Euler sampler for a general conditional rate is described in \cref{alg:Euler sampler in general}.

\begin{algorithm}[htbp]
\caption{Euler sampler \citep[general conditional rate; Algorithm 1 in][]{shaul2024flow}}
\label{alg:Euler sampler in general}
\begin{algorithmic}[1]
\Require A conditional rate $Q_t^d(x^d,z^d|x_1^d)$, a posterior $p_{1|t}$, an early stopping parameter $\delta>0$, time partition $0=t_0<t_1<\cdots<t_K=1-\delta$.
\State Draw $Y_0\sim p_0$.
\For{$k=1$ to $K$}
    \State Set $Z=Y_{k-1}$.
    \For{$d=1$ to $\mc{D}$}  \Comment{\textcolor{brown}{in parallel}}
    \State Sample $\rvx_1^d\sim p_{1|t_{k-1}}^d(\cdot|Y_{k-1})$.
    \State Set $\lambda_k^d=\sum_{z^d\neq Y_{k-1}^d}Q^d_{t_{k-1}}(Y^d_{k-1},z^d|\rvx_1^d)$.
    \State Set $Z^d=\begin{cases}
        z^d,&~\text{with probability } \setBig{1-\exp\parenBig{-(t_k-t_{k-1})\lambda_k^d}}Q_{t_{k-1}}^d(Y_{k-1}^d,z^d|\rvx_1^d)/\lambda_{k}^d\\
        Z^d,&~\text{with probability } \exp\parenBig{-(t_k-t_{k-1})\lambda_k^d}
    \end{cases}$, where $z^d\neq Z^d$.
    \EndFor
    \State Set $Y_{k}=Z$.
  \EndFor
\State \Return $Y_K\sim p_{1-\delta}$
\end{algorithmic}
\end{algorithm}

\noindent{\bf Time-corrected sampler.} Parallel to the time-corrected sampler proposed in \cref{sec:time-corrected}, we consider the following transition rate:
\begin{align*}
    Q^{\text{TC}}_t(x,z)=\sum_{d=1}^\mc{D}\delta_{x_{k-1}^{\bsl d}}(z^{\bsl d})\delta_{x_{k-1}^d}(x^d)Q_{t}^d(x_{k-1}^d,z^d|\rvx_1^d),
\end{align*}
where $t\in[t_{k-1},t_k]$ and $\rvx_1^d\sim p_{1|t_{k-1}}^d(\cdot|x_{k-1})$. Compared to the Euler sampler, the transition rate of the time-corrected sampler does not fix the time variable in the conditional transition rate. To derive the associated algorithm, by \cref{prop:exit time}, we have to calculate the integral $\int_{t_{k-1}}^{t}Q_{s}^d(x_{k-1}^d,x_{k-1}^d|\rvx_1^d)\dd s$ for each $d\in\brac{\mc{D}}$. We can use numerical methods to compute this integral: $\int_{t_{k-1}}^{t}Q_{s}^d(x_{k-1}^d,x_{k-1}^d|\rvx_1^d)\dd s\approx \frac{\tau_k}{m}\sum_{i=1}^{l-1}Q_{s_j}^d(x_{k-1}^d,x_{k-1}^d|\rvx_1^d)$, where $m$ is the number of grid points, $s_i=t_{k-1}+\frac{i-1}{m}\tau_k$, and $l=\lfloor \frac{m(t-t_{k-1})}{\tau_k}\rfloor$. The time-corrected sampler for general transition rates can be found in \cref{alg:Time-corrected sampler in general}.

\begin{remark}
    The time-corrected sampler (\cref{alg:Time-corrected sampler in general}) requires only one function call per timestep. Given $\rvx_1^d$, the computational cost of calculating numerical integral is negligible compared to that of a network evaluation. Additionally, when $m=1$, the time-corrected sampler reduces to the Euler sampler (\cref{alg:Euler sampler in general}).
\end{remark}

\begin{algorithm}[htbp]
\caption{Time-corrected sampler (general conditional rate)}
\label{alg:Time-corrected sampler in general}
\begin{algorithmic}[1]
\Require A conditional rate $Q_t^d(x^d,z^d|x_1^d)$, a posterior $p_{1|t}$, the number of grid points for numerical integration $m$, an early stopping parameter $\delta>0$,  time partition $0=t_0<t_1<\cdots<t_K=1-\delta$.
\State Draw $Y_0\sim p_0$.
\For{$k=1$ to $K$}
    \State Set $Z=Y_{k-1}$.
    \For{$d=1$ to $\mc{D}$}  \Comment{\textcolor{brown}{in parallel}}
    \State Sample $\rvx_1^d\sim p_{1|t_{k-1}}^d(\cdot|Y_{k-1})$.
    \State Calculate $\lambda_{k,i}^d=\sum_{z^d\neq Y_{k-1}^d}Q_{s_j}^d(Y_{k-1}^d,z^d|\rvx_1^d)$, where $s_i=t_{k-1}+\frac{i-1}{m}(t_k-t_{k-1})$.
    \State Set {\small$(T^d_k,l^d_k)=\begin{cases}
        (s_l,l),&~\text{with probability } \exp\parenBig{-\frac{t_k-t_{k-1}}{m}\sum_{i=1}^{l-1}\lambda_{k,i}^d}-\exp\parenBig{-\frac{t_k-t_{k-1}}{m}\sum_{i=1}^l\lambda_{k,i}^d}\\
        (t_k,m+1),&~\text{with probability } \exp\parenBig{-(t_k-t_{k-1})\frac{1}{m}\sum_{i=1}^m\lambda_{k,i}^d}.
    \end{cases}$}
    \If{$T_k^d\neq t_k$}
    \State Set $Z^d=
        z^d$ with probability $\frac{Q_{T_k^d}^d(Y_{k-1}^d,z^d|\rvx_1^d)}{\lambda_{k,l_k^d}^d}$, where $z^d\neq Z^d$.
    \EndIf
    \EndFor
    \State Set $Y_{k}=Z$.
  \EndFor
\State \Return $Y_K\sim p_{1-\delta}$
\end{algorithmic}
\end{algorithm}

\noindent{\bf Location-corrected sampler.} Similar to the location-corrected sampler introduced in \cref{sec:location-corrected}, we consider the following (two-stage) transition rate:
\begin{equation*}\resizebox{1\linewidth}{!}{%
    $\displaystyle\begin{aligned}
    Q_t^{\text{LC}}(x,z)=\mathbbm{1}(T_k^{(j)}\ge t)\underbrace{\sum_{d=1}^\mc{D}\delta_{x_{k-1}^{\bsl d}}(z^{\bsl d})\delta_{x_{k-1}^d}(x^d)Q_{t}^d(x_{k-1}^d,z^d|\rvx_{1,(1)}^d)}_{\text{first-stage rate}}+\mathbbm{1}(T_k^{(j)}<t)\underbrace{\sum_{d=1}^\mc{D}\delta_{x^{\bsl d}}(z^{\bsl d})\delta_{X^d(T_k)}(x^d)Q_{T_k}^d(X^d(T_k),z^d|\rvx_{1,(2)}^d)}_{\text{second-stage rate}},
\end{aligned}$%
}
\end{equation*}
where $t\in[t_{k-1},t_k]$, $T_k^{(j)}$ is the $j$-th arrival time, $\rvx_{1,(1)}^d\sim p^d_{1|t_{k-1}}(\cdot|x_{k-1})$, and $\rvx_{1,(2)}^d\sim p^d_{1|T_k^{(j)}}(\cdot|X(T_k^{(j)}))$. Empirically, we can first calculate the arrival times $\set{T_k^d}_{d\in\brac{\mc{D}}}$ for all dimensions in parallel, and then choose the $j$-th smallest element of $\set{T_k^d}_{d\in\brac{\mc{D}}}$. The algorithm is presented in \cref{alg:Location-corrected sampler in general}. 

\begin{remark}
    To accelerate the algorithm, one can employ location correction after a time threshold $t_\theta\in(0,1)$, and use the time-corrected sampler before $t_\theta$; see \cref{alg:Location-corrected sampler in general}. This additional parameter would help us balance accuracy and efficiency. We compare the different choice of $t_\theta$ in our image generation task.
\end{remark}

\begin{algorithm}[htbp]
\caption{Location-corrected sampler (general conditional rate)}
\label{alg:Location-corrected sampler in general}
\begin{algorithmic}[1]
\Require A conditional rate $Q_t^d(x^d,z^d|x_1^d)$, a posterior $p_{1|t}$, the number of grid points for numerical integration $m$, an early stopping parameter $\delta>0$, a hyperparameter of location-corrected sampler $j$, a time threshold $t_\theta\in[0,1-\delta)$,  time partition $0=t_0<t_1<\cdots<t_K=1-\delta$.
\State Draw $Y_0\sim p_0$.
\For{$k=1$ to $K$}
    \For{$d=1$ to $\mc{D}$}  \Comment{\textcolor{brown}{in parallel}}
    \State Sample $\rvx_1^d\sim p_{1|t_{k-1}}^d(\cdot|Y_{k-1})$.
    \State Calculate $\lambda_{k,i}^d=\sum_{z^d\neq Y_{k-1}^d}Q_{s_j}^d(Y_{k-1}^d,z^d|\rvx_1^d)$, where $s_i=t_{k-1}+\frac{i-1}{m}(t_k-t_{k-1})$.
    \State Set {\small$(T^d_k,l^d_k)=\begin{cases}
        (s_l,l),&~\text{with probability } \exp\parenBig{-\frac{t_k-t_{k-1}}{m}\sum_{i=1}^{l-1}\lambda_{k,i}^d}-\exp\parenBig{-\frac{t_k-t_{k-1}}{m}\sum_{i=1}^l\lambda_{k,i}^d}\\
        (t_k,m+1),&~\text{with probability } \exp\parenBig{-(t_k-t_{k-1})\frac{1}{m}\sum_{i=1}^m\lambda_{k,i}^d}.
    \end{cases}$}
    \EndFor
    \State Set $T_k^{(j)}$ is the $j$-th order statistic of $\set{T^d_k}_{d\in\brac{\mc{D}}}$ in ascending order.
    \If{$T_k^{(j)}= t_k$ or $t\leq t_\theta$}
    \For{$d=1$ to $\mc{D}$}  \Comment{\textcolor{brown}{in parallel}}
    \If{$T_k^d\neq t_k$}
    \State Set $Z^d=
        z^d$ with probability $\frac{Q_{T_k^d}^d(Y_{k-1}^d,z^d|\rvx_1^d)}{\lambda_{k,l_k^d}^d}$, where $z^d\neq Z^d$.
    \EndIf
    \EndFor
    \State Set $Y_k=Z$.
    \Else
    \For{$d=1$ to $\mc{D}$}  \Comment{\textcolor{brown}{in parallel}}
    \If{$T_k^d\leq T_k^{(j)}$}
    \State Set $Z^d=
        z^d$ with probability $\frac{Q_{T_k^d}^d(Y_{k-1}^d,z^d|\rvx_1^d)}{\lambda_{k,l_k^d}^d}$, where $z^d\neq Z^d$.
    \EndIf
    \EndFor
    \State Set $Y_{k-1}=Z$.
     \For{$d=1$ to $\mc{D}$}  \Comment{\textcolor{brown}{in parallel}}
    \State Sample $\rvx_1^d\sim p_{1|T_k^{(j)}}^d(\cdot|Y_{k-1})$.
    \State Calculate $\lambda_{k,i}^d=\sum_{z^d\neq Y_{k-1}^d}Q_{s_j}^d(Y_{k-1}^d,z^d|\rvx_1^d)$, where $s_i=T_k^{(j)}+\frac{i-1}{m}(t_k-T_k^{(j)})$.
    \State Set {\small$(T^d_k,l^d_k)=\begin{cases}
        (s_l,l),&~\text{with probability } \exp\parenBig{-\frac{t_k-T_k^{(j)}}{m}\sum_{i=1}^{l-1}\lambda_{k,i}^d}-\exp\parenBig{-\frac{t_k-T_k^{(j)}}{m}\sum_{i=1}^l\lambda_{k,i}^d}\\
        (t_k,m+1),&~\text{with probability } \exp\parenBig{-(t_k-T_k^{(j)})\frac{1}{m}\sum_{i=1}^m\lambda_{k,i}^d}.
    \end{cases}$}
    \If{$T_k^d\neq t_k$}
    \State Set $Z^d=
        z^d$ with probability $\frac{Q_{T_k^d}^d(Y_{k-1}^d,z^d|\rvx_1^d)}{\lambda_{k,l_k^d}^d}$, where $z^d\neq Z^d$.
    \EndIf
    \EndFor
    \State Set $Y_{k}=Z$.
    \EndIf

  \EndFor
\State \Return $Y_K\sim p_{1-\delta}$
\end{algorithmic}
\end{algorithm}


\section{Auxiliary Lemmas}\label{sec:lemmas}


\begin{lemma}[Itô's formula]\label{lemma:ito formula}
Consider the random measure $N(t,A)$ associated with the jump process $X(t)$ in \cref{prop:uniformization}. Define a stochastic integral $W(t)=W(0)+\int_0^t\int_A K(s,z)N(ds,dz)$, where $K$ is a predictable process and $A\subseteq\mc{S}^\mc{D}$.  For any function $f:\R^\mc{D}\to \R, t\ge0$, we have 
\begin{align*}
    f(W(t))-f(W(0))=\int_0^t\int_A\brac{f(W(s-)+K(s,z))-f(W(s-))}N(ds,dz).
\end{align*}
\end{lemma}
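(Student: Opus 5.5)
The plan is to exploit the fact that, under the construction of \cref{prop:uniformization}, the random measure $N$ has almost surely finitely many atoms on $[0,t]$, so that $W$ is a piecewise constant process and the formula reduces to a telescoping sum. No martingale or compensator arguments are needed: the identity is a pathwise one, holding $\omega$ by $\omega$ on a full-measure event.

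\textbf{Step 1: finitely many jumps.} Fix $t\ge 0$. By \cref{prop:uniformization}, $N(t,\mc{S}^\mc{D})\le N(t)<\infty$ almost surely, where $N(t)$ is the dominating Poisson process. On this full-measure event I list the jump times of $X$ in $(0,t]$ as $0<s_1<s_2<\cdots<s_n\le t$ and set $z_i=X(s_i)$. The random measure is then the atomic measure
\[
N(\dd s,\dd z)=\sum_{i=1}^n \delta_{(s_i,z_i)} .
\]
Consequently, for any predictable integrand $G$,
\[
\int_0^t\int_A G(s,z)\,N(\dd s,\dd z)=\sum_{i:\,z_i\in A}G(s_i,z_i),
\]
read pathwise as a finite sum. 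Predictability of $K$ ensures that $K(s_i,z_i)$ is determined by the information strictly before $s_i$, so this pathwise definition agrees with the stochastic integral.

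\textbf{Step 2: structure of $W$.} Applying Step 1 with $G=K$ gives
\[
W(r)=W(0)+\sum_{i:\,s_i\le r,\ z_i\in A}K(s_i,z_i).
\]
Hence $W$ is constant on each interval $[s_{i-1},s_i)$, with $s_0=0$, and on $[s_n,t]$. Its only possible jumps occur at those $s_i$ with $z_i\in A$, and there
\[
W(s_i)=W(s_i-)+K(s_i,z_i).
\]
At indices with $z_i\notin A$, $W$ does not move, so $W(s_i)=W(s_i-)$.

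\textbf{Step 3: telescoping.} Since $W$ is piecewise constant,
\[
f(W(t))-f(W(0))=\sum_{i=1}^n\bigl[f(W(s_i))-f(W(s_i-))\bigr].
\]
The terms with $z_i\notin A$ vanish by Step 2. The remaining terms equal
\[
f\bigl(W(s_i-)+K(s_i,z_i)\bigr)-f\bigl(W(s_i-)\bigr),
\]
and by Step 1 (now with $G(s,z)=f(W(s-)+K(s,z))-f(W(s-))$) their sum is exactly $\int_0^t\int_A[\,\cdots]\,N(\dd s,\dd z)$. This integrand is predictable because $W(s-)$ is left-continuous and adapted, and $K$ is predictable. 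This proves the identity almost surely. No regularity of $f$ is required, since the argument only evaluates $f$ at finitely many points.

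\textbf{Main obstacle.} The one genuinely delicate point is Step 1: justifying that the integral against $N$ is the finite atomic sum, which requires the almost-sure finiteness of jumps. For the two-stage rates \eqref{eq:two-stage rate}, this finiteness holds by the uniformization bound $-Q_t(X(t-),X(t-))\le M$ assumed in \cref{prop:uniformization}. Everything after that is bookkeeping.
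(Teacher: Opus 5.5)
Your argument is correct and is essentially the paper's proof: both reduce the identity to a pathwise telescoping sum of the jumps of $f(W)$ over the almost surely finitely many jump times, following Lemma 4.4.5 of Applebaum. The only cosmetic difference is that the paper indexes the jump times $T_n^A$ of $N(\cdot,A)$ directly, whereas you list all jumps of $X$ in $(0,t]$ and note that those with $X(s_i)\notin A$ contribute zero.
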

\begin{proof}
This proof is similar to Lemma 4.4.5 in \cite{applebaum2009levy}. Let $T_0^A=0$ and $T_n^A=\inf\set{t>T_{n-1}^A:\Delta N(t,A)\neq 0}$. Note that 
    \begin{align*}
        f(W(t))-f(W(0))=&~\sum_{n=1}^\infty f(W(t\wedge T_n^A))-f(W(t\wedge T_{n}^A-))\\
        =&~\sum_{n=1}^\infty\brac{f(W(t\wedge T_n^A-)+K(t\wedge T_n^A, X(t\wedge T_n^A)))-f(W(t\wedge T_n^A-))}\\
        =&~\int_0^t\int_A\brac{f(W(s-)+K(s,z))-f(W(s-))}N(ds,dz),
    \end{align*}
     which completes the proof.
\end{proof}

\begin{lemma}[Kolmogorov backward equation]\label{lemma:backward equation}
The CTMC $X(t)$ satisfies the following equation:
\begin{align*}
    \dot{p}_{1|t}(z|x)=-\sum_yp_{1|t}(z|y)Q_t(x,y).
\end{align*}
\end{lemma}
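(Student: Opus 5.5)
The statement is the Kolmogorov backward equation for the posterior, $\dot p_{1|t}(z|x)=-\sum_y p_{1|t}(z|y)Q_t(x,y)$. The plan is to write the posterior by Bayes' rule as a transition probability divided by a marginal, differentiate the numerator with the backward equation for transition probabilities and the denominator with the forward equation, and then recombine. Concretely,
\[
p_{1|t}(z|x)=\frac{p_1(z)\,p_{t|1}(x|z)}{p_t(x)}=\frac{P_{t,1}(x,z)}{\,1\,}\quad\text{with}\quad P_{t,1}(x,z)=\P(X(1)=z\mid X(t)=x).
\]
The second equality holds because the oracle rate $Q_t$ generates the path $p_t$ with $X(1)\sim p_1$. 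So the posterior is exactly the transition kernel of the CTMC from $t$ to $1$, whenever $p_t(x)>0$.

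It therefore suffices to show $\partial_t P_{t,1}(x,z)=-\sum_y Q_t(x,y)P_{t,1}(y,z)$. First use the Chapman--Kolmogorov relation $P_{t,1}(x,z)=\sum_y P_{t,t+h}(x,y)P_{t+h,1}(y,z)$. Next insert the infinitesimal expansion $P_{t,t+h}(x,y)=\delta_x(y)+Q_t(x,y)h+o(h)$ from \cref{def:CTMC}. This gives
\[
P_{t,1}(x,z)-P_{t+h,1}(x,z)=h\sum_y Q_t(x,y)P_{t+h,1}(y,z)+o(h).
\]
Dividing by $h$ and letting $h\downarrow0$ yields the right derivative; this uses continuity of $P_{s,1}$ in $s$, which follows from the finite state space and continuity of $Q_t$ (the rates are bounded on $[0,1-\delta]$).

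For the left derivative, split at $t-h$ instead, and use the continuity of $Q$ in $t$ to replace $Q_{t-h}$ by $Q_t$ at $o(1)$ cost. The sign convention $\sum_y Q_t(x,y)=0$ then matches the stated form directly.

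Two steps need more care than the routine calculus. The first is justifying the identification of the Bayes-form posterior (built from the coordinatewise conditional path \eqref{eq:mixture path}) with the Markov transition kernel. I would do this by invoking that $Q_t$ in \eqref{eq:oracle transition rate} generates $p_t$, and that the CTMC constructed by uniformization (\cref{prop:uniformization}) has the joint law of $(X(t),X(1))$ equal to $p_t(x)p_{1|t}(z|x)$. If this coupling is not directly available, the fallback is to differentiate the Bayes formula itself, $\partial_t[p_1(z)p_{t|1}(x|z)/p_t(x)]$. In that route one uses the conditional forward equation with $Q_t(\cdot,\cdot|z)$ for the numerator and the marginal forward equation for $p_t$. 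The resulting terms should then be regrouped using $Q_t(x,y)=\sum_{x_1}Q_t(x,y|x_1)p_{1|t}(x_1|x)$ together with a detailed-balance-type identity. That regrouping is the main obstacle, so the transition-kernel route is preferred.
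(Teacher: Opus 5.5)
Your core computation, Chapman--Kolmogorov across $[t,t+h]$ with the expansion $\P(X(t+h)=y\mid X(t)=x)=\delta_x(y)+Q_t(x,y)h+o(h)$ and then $h\downarrow 0$, is exactly the paper's proof. The paper treats only the right derivative. The genuine gap is the step you yourself flag: identifying $p_1(z)p_{t|1}(x|z)/p_t(x)$ with $\P(X(1)=z\mid X(t)=x)$ for the chain driven by $Q_t$.

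Your justification does not support this. That $Q_t$ generates $p_t$ with $X(1)\sim p_1$ controls only one-time marginals. It says nothing about the joint law of $(X(t),X(1))$, and uniformization does not supply that coupling either. In fact the identification is false in general. Take $\mathcal{D}=1$, $\mathcal{S}=\{1,2\}$, $p_0=p_1$ uniform and $\kappa_t=t$. Then $p_{1|t}(1|1)=(1+t)/2$ and $Q_t(1,2)=Q_t(2,1)=1/2$. Hence $\dot p_{1|t}(1|1)=1/2$, while $-\sum_y p_{1|t}(1|y)Q_t(1,y)=t/2$. The chain's actual kernel is $\tfrac12(1+e^{-(1-t)})\neq(1+t)/2$.

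The reason is that the interpolating process is not Markov here. That process sits at $X_0$ and then jumps once to $X_1$ at a time with survival function $1-\kappa_t$. Its Markovian projection, the chain with rate $Q_t$, therefore has a different two-time law. For the same reason your fallback cannot succeed: no detailed-balance-type regrouping of the differentiated Bayes formula can produce the stated identity.

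The paper's proof makes the same identification silently. Its first equality, $p_{1|t}(z|x)=\sum_y p_{1|t+h}(z|y)\P(X(t+h)=y\mid X(t)=x)$, is Chapman--Kolmogorov and holds only for the chain's own transition kernel. So the argument, yours and the paper's, is complete in two cases. The first is when $p_{1|t}(z|x)$ is defined as $\P(X(1)=z\mid X(t)=x)$ for the CTMC itself. The second is when the interpolating process is itself Markov. An example is the masked source, where the posterior is time-independent and the right-hand side vanishes by the tower property.

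It does not establish the statement for the Bayes posterior appearing in \eqref{eq:oracle transition rate}. Rather than trying to prove the identification, restrict the claim to one of these readings and state it explicitly.
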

\begin{proof}
    By Bayes' rule, $p_{1|t}(z|x)$ is continuous in $t$. By C-K equation and the definition of CTMC, we have
    \begin{align*}
        p_{1|t+h}(z|x)-p_{1|t}(z|x)&~=p_{1|t+h}(z|x)-\sum_yp_{1|t+h}(z|y)(\delta_x(y)+Q_t(x,y)h+o(h))\\
        =&~-\sum_yp_{1|t+h}(z|y)Q_t(x,y)h+o(h),
    \end{align*}
    which completes the proof by letting $h\to0^+$.
\end{proof}

\begin{lemma}[Lipschitz regularity in time]\label{lemma:lipschitz}
The time derivative of the oracle transition rate is 
\begin{align*}
    \dot{Q}_t(x,z)=\sum_{d=1}^\mc{D}\delta_{x^{\bsl d}}(z^{\bsl d})\setBig{\frac{\ddot{\kappa}_{t}(1-\kappa_t)+(\dot{\kappa}_t)^2}{(1-\kappa_{t})^2}p^d_{1|t}(z^d|x)-\frac{\dot{\kappa}_{t}}{1-\kappa_{t}}\sum_yp_{1|t}^d(z^d|y)Q_t(x,y)},
\end{align*}
which implies that $\abs{\dot{Q}_t(x,z)}\leq\absBig{\frac{\ddot{\kappa}_{t}(1-\kappa_t)+(\dot{\kappa}_t)^2}{(1-\kappa_{t})^2}}+\mc{D}\parenBig{\frac{\dot{\kappa}_{t}}{1-\kappa_{t}}}^2$ for any $z\neq x$. In particular, if the posterior $p_{1|t}$ is not related to $t$, then $\abs{\dot{Q}_t(x,z)}\leq\absBig{\frac{\ddot{\kappa}_{t}(1-\kappa_t)+(\dot{\kappa}_t)^2}{(1-\kappa_{t})^2}}$.
\end{lemma}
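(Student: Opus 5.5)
The plan is to differentiate the closed form of the oracle rate directly. By \eqref{eq:mixture path} and \eqref{eq:oracle transition rate}, for $z\neq x$ with $\dd^H(x,z)=1$ differing in coordinate $d$, we have $Q_t(x,z)=\frac{\dot{\kappa}_t}{1-\kappa_t}p^d_{1|t}(z^d|x)$. The term $\delta_{x_1^d}(x^d)$ does not contribute, since $z^d\neq x^d$. When $\dd^H(x,z)>1$ the rate is identically zero. Hence we may write $Q_t(x,z)=\sum_{d}\delta_{x^{\bsl d}}(z^{\bsl d})\frac{\dot{\kappa}_t}{1-\kappa_t}p^d_{1|t}(z^d|x)$, and the product rule gives two pieces.

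The first piece is the derivative of the schedule factor: $\frac{d}{dt}\frac{\dot{\kappa}_t}{1-\kappa_t}=\frac{\ddot{\kappa}_t(1-\kappa_t)+\dot{\kappa}_t^2}{(1-\kappa_t)^2}$. The second piece is $\frac{\dot{\kappa}_t}{1-\kappa_t}\dot{p}^d_{1|t}(z^d|x)$. For this we invoke \cref{lemma:backward equation}, applied to the joint posterior $p_{1|t}(x_1|x)$ and then marginalized over $x_1^{\bsl d}$. Summation commutes with the derivative on the finite state space, so $\dot{p}^d_{1|t}(z^d|x)=-\sum_y p^d_{1|t}(z^d|y)Q_t(x,y)$. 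Substituting these two pieces yields the stated formula for $\dot{Q}_t(x,z)$. Differentiability in $t$ follows from Bayes' rule: the posterior is a ratio of polynomials in $\kappa_t$ with positive denominator, and $\kappa$ is assumed twice differentiable where $\ddot\kappa$ appears.

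For the bound, fix $z\neq x$; only one index $d$ survives. The first term is bounded by $\bigl|\frac{\ddot{\kappa}_t(1-\kappa_t)+\dot{\kappa}_t^2}{(1-\kappa_t)^2}\bigr|$ because $p^d_{1|t}\le 1$. For the second term, the main point is to control $\sum_y p^d_{1|t}(z^d|y)Q_t(x,y)$. We split off $y=x$, using $Q_t(x,x)=-\sum_{y\neq x}Q_t(x,y)$, to rewrite it as $\sum_{y\neq x}\bigl(p^d_{1|t}(z^d|y)-p^d_{1|t}(z^d|x)\bigr)Q_t(x,y)$. Each difference lies in $[-1,1]$, so its absolute value is at most $\sum_{y\neq x}Q_t(x,y)=\frac{\dot{\kappa}_t}{1-\kappa_t}\sum_{d'}\sum_{y^{d'}\neq x^{d'}}p^{d'}_{1|t}(y^{d'}|x)\le \mc{D}\frac{\dot{\kappa}_t}{1-\kappa_t}$. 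Multiplying by the prefactor $\frac{\dot{\kappa}_t}{1-\kappa_t}$ gives $\mc{D}(\dot{\kappa}_t/(1-\kappa_t))^2$.

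In the special case where the posterior does not depend on $t$, the second term vanishes identically, because $\dot{p}^d_{1|t}=0$. The bound then reduces to the first term. The only delicate step is the rewriting of the backward-equation sum via the row-sum-zero property. Without it, a naive bound would double count the diagonal term and produce a factor $2\mc{D}$ instead of $\mc{D}$.
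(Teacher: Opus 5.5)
Your route is the paper's: apply the product rule to $Q_t(x,z)=\frac{\dot\kappa_t}{1-\kappa_t}p^d_{1|t}(z^d|x)$, then use \cref{lemma:backward equation} for $\dot p^d_{1|t}$. Your row-sum rewriting justifies the factor $\mathcal{D}$ more cleanly than the paper's one-line ``consequently''.

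The genuine gap is the step you and the paper both take for granted. The identity $\dot p_{1|t}(z|x)=-\sum_y p_{1|t}(z|y)Q_t(x,y)$ holds for the transition kernel $\mathbb{P}(X(1)=z\mid X(t)=x)$ of the CTMC driven by $Q_t$, because its derivation uses Chapman--Kolmogorov. The $p_{1|t}$ inside $Q_t$ is something else: it is the Bayes posterior $p_{t|1}(x|x_1)p_1(x_1)/p_t(x)$ of the coupling $X(1)\sim p_1$, $X(t)\sim p_{t|1}(\cdot|X(1))$. That coupling is not Markov in general. A coordinate that has already jumped is frozen at $x_1^d$, while one that has not jumped can still move, and $X(t)$ alone does not reveal which case holds. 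So the CTMC with rate $Q_t$ shares the one-time marginals $p_t$ but not the two-time law, and Chapman--Kolmogorov fails for the Bayes posterior. The step is fine for the masked source, where the coupling is Markov, but not in the generality claimed.

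In fact the statement itself fails. Take $\mathcal{D}=1$, $p_0$ uniform on $S$ states, $p_1=\frac12(\delta_a+\delta_b)$ with $a\neq b$, and $\kappa_t=t$.
\begin{itemize}
\item Bayes' rule gives $p_{1|t}(b|a)=\frac{1-t}{2+(S-2)t}$, hence $Q_t(a,b)=\frac{1}{2+(S-2)t}$ and $\dot Q_0(a,b)=-\frac{S-2}{4}$.
\item At $t=0$ the coupling is independent, so $p_{1|0}(b|y)=\frac12$ for every $y$, and $\sum_y p_{1|0}(b|y)Q_0(a,y)=\frac12\sum_y Q_0(a,y)=0$.
\item The claimed identity therefore returns $\dot Q_0(a,b)=\frac12$, which is wrong.
\item The claimed bound $|\dot Q_0(a,b)|\le 1+1=2$ fails as soon as $S\ge 11$.
\end{itemize}
So no sharper estimate of the backward sum can rescue the argument.

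The correct derivative is the covariance formula
$\dot p_{1|t}(x_1|x)=p_{1|t}(x_1|x)\bigl(h(x,x_1)-\sum_{x_1'}p_{1|t}(x_1'|x)h(x,x_1')\bigr)$, where $h(x,x_1)=\partial_t\log p_{t|1}(x|x_1)$. For the mixture path, $h$ contains terms $\dot\kappa_t(1-p_0^d(x^d))/\bigl((1-\kappa_t)p_0^d(x^d)+\kappa_t\bigr)$, which are of order $|\mathcal{S}|\dot\kappa_t$ near $t=0$ for a uniform source. A valid time-Lipschitz bound must therefore carry such source-dependent factors, or restrict to a Markov (e.g.\ masked) coupling.
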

\begin{proof}
    By Kolmogorov backward equation, we have
\begin{align*}
    \dot{Q}_t(x,z)=&~\frac{\dd}{\dd t}\parenBig{\sum_{d=1}^\mc{D}\delta_{x^{\bsl d}}(z^{\bsl d})\frac{\dot{\kappa}_{t}}{1-\kappa_{t}}p^d_{1|t}(z^d|x)}\\
    =&~\sum_{d=1}^\mc{D}\delta_{x^{\bsl d}}(z^{\bsl d})\frac{\ddot{\kappa}_{t}(1-\kappa_t)+(\dot{\kappa}_t)^2}{(1-\kappa_{t})^2}p^d_{1|t}(z^d|x)-\sum_{d=1}^\mc{D}\delta_{x^{\bsl d}}(z^{\bsl d})\frac{\dot{\kappa}_{t}}{1-\kappa_{t}}\sum_yp_{1|t}^d(z^d|y)Q_t(x,y).
\end{align*}
Consequently, we obtain that
\begin{align*}
    \abs{\dot{Q}_t(x,z)}\leq \absBig{\frac{\ddot{\kappa}_{t}(1-\kappa_t)+(\dot{\kappa}_t)^2}{(1-\kappa_{t})^2}}+\mc{D}\parenBig{\frac{\dot{\kappa}_{t}}{1-\kappa_{t}}}^2.
\end{align*}
Moreover, if the posterior is not related to $t$, then
\begin{align*}
    \dot{Q}_t(x,z)=\sum_{d=1}^\mc{D}\delta_{x^{\bsl d}}(z^{\bsl d})\setBig{\frac{\ddot{\kappa}_{t}(1-\kappa_t)+(\dot{\kappa}_t)^2}{(1-\kappa_{t})^2}p^d_{1|t}(z^d|x)},
\end{align*}
which completes the proof.

\end{proof}

\begin{lemma}[Property of Bregman divergence]\label{lemma:bregman}
Suppose that $D_F(a||b)$ is the Bregman divergence induced by the function $F(x)=x\log x$, where $a\ge 0$ and $b\ge\eps$. Then the Bregman divergence can be bounded by
\begin{align*}
    D_F(a||b)\leq b\vee (a\log\frac{a}{\eps}).
\end{align*}
Additionally, we have
\begin{align*}
    D_F(a||b)\leq \frac{(a-b)^2}{2\eps}\vee ((b-a)+\eps)
\end{align*}
    
\end{lemma}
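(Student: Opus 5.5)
Both inequalities are elementary facts about the scalar function $D_F(a\|b)=a\log\frac{a}{b}-a+b$ (with the convention $0\log 0=0$), so I will work directly with this closed form. I will handle the two claims separately, and in each split according to whether $a\le b$ or $a>b$. I will use only the hypotheses $a\ge0$ and $b\ge\eps$, together with the standard inequality $\log u\le u-1$.

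\emph{First bound.} If $a\le b$, then $a\log\frac ab\le 0$ and $-a\le 0$, so $D_F(a\|b)\le b$. If $a>b$, then $-a+b<0$, and since $b\ge\eps$ we get $a\log\frac ab\le a\log\frac a\eps$. Hence $D_F(a\|b)\le a\log\frac{a}{\eps}$. In either case the maximum $b\vee (a\log\frac a\eps)$ dominates.

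\emph{Second bound.} If $a\le b$, the negative log term again gives $D_F(a\|b)\le b-a\le (b-a)+\eps$.

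If $a>b$, set $a=b+h$ with $h>0$. Using $\log(1+h/b)\le h/b$, I get
\[
D_F(a\|b)=(b+h)\log\Bigl(1+\frac hb\Bigr)-h\le (b+h)\frac hb-h=\frac{h^2}{b}\le\frac{(a-b)^2}{\eps}.
\]
This is off from the claimed bound by a factor of $2$.

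To recover the $\frac12$, I will Taylor-expand $F$ around $b$. By the integral form of the remainder, $D_F(a\|b)=\int_b^a\frac{a-s}{s}\,ds$. Since $s\ge b\ge\eps$ throughout this range, the integral is at most $\int_b^a\frac{a-s}{\eps}\,ds=\frac{(a-b)^2}{2\eps}$, which is exactly the claimed bound.

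The only delicate point is obtaining the sharp constant $\frac12$ in the $a>b$ case. The integral remainder handles this cleanly because $F''(s)=1/s\le 1/\eps$ on $[b,a]$. The boundary case $a=0$ is covered by the $a\le b$ branch, giving $D_F=b$.
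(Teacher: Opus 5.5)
Your proof is correct and uses the same ingredients as the paper. The first bound is split on $a\le b$ versus $a>b$ exactly as there, and the second combines dropping the nonpositive term $a\log\frac{a}{b}$ with the curvature bound $F''(s)=1/s\le 1/\eps$ via the integral remainder. The only difference is the case split for the second bound: the paper splits on $a>\eps$ versus $a\le\eps\le b$, applying the curvature bound on the segment between $a$ and $b$ in either order, whereas your split on $a$ versus $b$ needs the curvature bound only when $a>b$.
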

\begin{proof}
    Note that if $a/b\ge 1$, then we have
    \begin{align*}
        D_F(a||b)=a\log\frac{a}{b}-a+b\leq a\log\frac{a}{\eps},
    \end{align*}
If $a/b<1$, then we have $D_F(a||b)\leq b$, which proves the first inequality. 

On the other hand, by the smoothness of $F(x)$; that is, $F^{\prime\prime}(x)=1/x$, if $a>\eps$, we have $D_F(a||b)\leq(a-b)^2/(2\eps)$; if $a\leq\eps\leq b$, then $D_F(a||b)\leq (b-a)+\eps$.
\end{proof}


\begin{lemma}[Change of variable]\label{lemma:exp dist}
    If $e_k\sim\text{Exp}(\lambda_k)$, then $T_k\overset{d}{=}\kappa^{-1}\parenBig{(1-\kappa_{t_{k-1}})(1-\exp(-e_k))+\kappa_{t_{k-1}}}$, where $T_k$ is of the distribution with survival function in \cref{eq:survival function-LC}.
\end{lemma}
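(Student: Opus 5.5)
The plan is to compute the distribution of the transformed variable directly and compare it with the survival function \eqref{eq:survival function-LC}. I assume, as in the surrounding text, that $\kappa$ is continuous and strictly increasing on $[t_{k-1},1]$, so that $\kappa^{-1}$ exists and is strictly increasing. Write $S=\kappa^{-1}\bigl((1-\kappa_{t_{k-1}})(1-e^{-e_k})+\kappa_{t_{k-1}}\bigr)$.

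\textbf{Range of $S$.} Since $e_k\in(0,\infty)$, the quantity $1-e^{-e_k}$ lies in $(0,1)$. Hence the argument of $\kappa^{-1}$ lies in $(\kappa_{t_{k-1}},1)$, and so $S\in(t_{k-1},1)$ almost surely.

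\textbf{Survival function of $S$.} Fix $t\in[t_{k-1},1)$. Because $\kappa$ is strictly increasing,
\[
\{S>t\}=\bigl\{(1-\kappa_{t_{k-1}})(1-e^{-e_k})+\kappa_{t_{k-1}}>\kappa_t\bigr\}.
\]
Rearranging the inequality inside the event gives
\[
\{S>t\}=\Bigl\{e^{-e_k}<\frac{1-\kappa_t}{1-\kappa_{t_{k-1}}}\Bigr\}=\Bigl\{e_k>-\log\frac{1-\kappa_t}{1-\kappa_{t_{k-1}}}\Bigr\}.
\]
Using $\P(e_k>u)=e^{-\lambda_k u}$ for $u\ge 0$, this yields
\[
\P(S>t)=\Bigl(\frac{1-\kappa_t}{1-\kappa_{t_{k-1}}}\Bigr)^{\lambda_k}.
\]

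\textbf{Conclusion.} The expression above is exactly \eqref{eq:survival function-LC}. A survival function determines the law of a random variable, so $S$ and $T_k$ have the same distribution.

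\textbf{Edge cases.} If $\lambda_k=0$, I read $\text{Exp}(0)$ as $e_k=\infty$ and set $S=1$; then both survival functions are identically $1$ on $[t_{k-1},1)$. For $t<t_{k-1}$, both survival functions equal $1$. If $\kappa$ is only non-decreasing, I would replace $\kappa^{-1}$ by the generalized inverse; the same computation goes through up to null sets, but for the paper's setting strict monotonicity suffices.

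\textbf{Expected difficulty.} There is no real obstacle. The only care needed is monotonicity when inverting $\kappa$, and keeping track of the direction of the inequalities.
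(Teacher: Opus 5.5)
Your proposal is correct and follows essentially the same argument as the paper: the paper defines the increasing map $g$, substitutes $t=g(s)$, and reads off the exponential tail of $e_k$ at $s=-\log\frac{1-\kappa_t}{1-\kappa_{t_{k-1}}}$, which is what your direct manipulation of the event $\{S>t\}$ does to recover \eqref{eq:survival function-LC}. Your explicit handling of the range of $S$, of $t<t_{k-1}$, and of $\lambda_k=0$ is slightly more careful than the paper's, but adds nothing essential.
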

\begin{proof}
    Note that $\P(e_k>s)=\exp(-\lambda_ks)$ for $s\ge 0$. Define an increasing function $$g(x)=\kappa^{-1}\parenBig{(1-\kappa_{t_{k-1}})(1-\exp(-x))+\kappa_{t_{k-1}}},$$where $\kappa^{-1}(\cdot)$ is the inverse function of $\kappa_t$.  Let $t=g(s)$. Then $s=-\log\frac{1-\kappa_{t}}{1-\kappa_{t_{k-1}}}$. By change of variable, we have
\begin{align*}
&~\P(g(e_k)> t)=\P(e_k> s)=\exp(\lambda_k\log\frac{1-\kappa_t}{1-\kappa_{t_{k-1}}})=\P(T_k>t),
\end{align*}
which completes the proof.
\end{proof}

\section{Proof of Results for Jump Processes}\label{sec:proof for jump processes}

\subsection{Proof of \cref{prop:exit time}}
\begin{proof}
Consider a sequence of events $E_{n,i}=\set{X(s)=X(\frac{i}{2^n}t) \text{ for any } s\in[\frac{i-1}{2^n}t,\frac{i}{2^n}t]}$, where $i\in\brac{2^n}$. Then, by uniformization and \Eqref{eq:proof-uniformization-1} we have 
\begin{align*}
    \P(E_{n,i}|\cap_{j=1}^{i-1} E_{n,j})=&~\P\parenBig{N\parenBig{(\frac{i-1}{2^n}t,\frac{i}{2^n}t],\mc{S}^\mc{D}}=0\Big|\cap_{j=1}^{i-1} E_{n,j}}\\
    =&~1+\int_{\frac{i-1}{2^n}t}^{\frac{i}{2^n}t}Q_{1,s}\paren{x,x}\dd s+R_{n,i},
\end{align*}
where $\abs{R_{n,i}}\leq 3M^2(t/2^n)^2$ for large enough $n$. Consequently, we have
\begin{align*}
    \P(T^X>t)=&~\lim_{n\to\infty}\P(\cap_{i=1}^{2^n} E_{n,i})\\
    =&~\lim_{n\to\infty}\prod_{i=1}^{2^n}\parenBig{\P(E_{n,i}|\cap_{j=1}^{i-1} E_{n,j})}\\
    =&~\lim_{n\to\infty}\exp\parenBig{\sum_{i=1}^{2^n}\frac{h}{2^n}\frac{\log\P(E_{n,i}|\cap_{j=1}^{i-1} E_{n,j})-\log 1}{h/(2^n)}} \\
    =&~\lim_{n\to\infty}\exp\parenBig{\sum_{i=1}^{2^n}\frac{h}{2^n}\frac{\int_{\frac{i-1}{2^n}t}^{\frac{i}{2^n}t}Q_{1,s}\paren{x,x}\dd s+R_{n,i}}{h/(2^n)}}\\
    =&~\exp\parenBig{\int_{0}^{t} Q_{1,s}(x,x)\dd s},
\end{align*}
which completes the proof.
\end{proof}
\subsection{Proof of \cref{prop:uniformization}}

\begin{proof}
This proof is similar to Proposition 1 and 3 of \citet{wan2025error}. Note that the Poisson process has independent increments. By the conditioning argument \citep[e.g., Theorem 3.7.9 in][]{durrett2019probability} of the Poisson process $N(t)$, for any $t>0$, we have
\begin{equation}\label{eq:proof-uniformization-1}
   \resizebox{\textwidth}{!}{%
   $\displaystyle
    \begin{aligned}
    \P(N((t,t+h],A)=1|\mc{F}_t)=&~Mh(1+\exp(-Mh)-1)\P(N((t,t+h],A)=1|N(t+h)-N(t)=1,\mc{F}_t)\\
    &~+ \P(N((t,t+h],A)=1,N(t+h)-N(t)>1|\mc{F}_t)\\
    =&~Mh\int_t^{t+h}\sum_{z\in A;z\neq X(t)}\frac{1}{h}\frac{\mathbbm{1}(T^X\leq t)Q_{1,s}(X(t),z)+\mathbbm{1}(T^X> t)Q_{2,s}(X(t),z;X(T^X))}{M}\dd s+R_1\\
    =&~\int_t^{t+h}\sum_{z\in A;z\neq X(t)}\mathbbm{1}(T^X>t)Q_{1,s}(X(t),z)+\mathbbm{1}(T^X\leq t)Q_{2,s}(X(t),z;X(T^X))\dd s+R_1,
\end{aligned} $%
}
\end{equation}

where the remainder $R_1$ satisfying
\begin{align*}
    \abs{R_1}\leq&~M^2h^2+\P(N(t+h)-N(t)> 1)\\
    \leq&~M^2h^2+\exp(-Mh)\brac{\exp(Mh)-1-Mh}\\
    \leq&~ 2M^2h^2,
\end{align*}
for sufficiently small $h>0$. Note that
\begin{align*}
    E\bracBig{\setBig{N((t,t+h],A)}\mathbbm{1}(N((t,t+h],A)\ge 2)\Big|\mc{F}_{t}}\leq&~  E\bracBig{\setBig{N(t+h)-N(t)}\mathbbm{1}(N(t+h)-N(t)\ge 2)}\\
    =&~Mh-Mh\exp(-Mh)\\
    \leq&~ M^2h^2.
\end{align*}
Consequently, we have
\begin{align*}
    \E\brac{N((t,t+h],A)|\mc{F}_t}=&~\P(N((t,t+h],A)=1|\mc{F}_t)+E\bracBig{\setBig{N((t,t+h],A)}\mathbbm{1}(N((t,t+h],A)\ge 2)\Big|\mc{F}_{t}}\\
    =&~\int_t^{t+h}\sum_{z\in A;z\neq X(t)}\mathbbm{1}(T^X>t)Q_{1,s}(X(t),z)+\mathbbm{1}(T^X\leq t)Q_{2,s}(X(t),z;X(T^X))\dd s+R_2,
\end{align*}
where $\abs{R_2}\leq 3M^2h^2$ for sufficiently small $h>0$. 

Then, since $X(t)$ has only finite jumps in $[t_1,t_2]$ almost surely, by dominated convergence theorem, we have
\begin{align*}
    &~\E\bracBig{N(t_2,A)-N(t_1,A)\Big|\mc{F}_{t_1}}\\
    =&~\sum_{i=1}^n\E\bracBig{\E\brac{N(t_1+\frac{i}{n}(t_2-t_1),A)-N(t_1+\frac{i-1}{n}(t_2-t_1),A)|\mc{F}_{t_1+\frac{i-1}{n}(t_2-t_1)}}\Big|\mc{F}_{t_1}}\\
    =&~\sum_{i=1}^n\E\Big\{\int_{t_1+\frac{i-1}{n}(t_2-t_1)}^{t_1+\frac{i}{n}(t_2-t_1)}\sum_{z\in A;z\neq X(t_1+\frac{i-1}{n}(t_2-t_1))}\mathbbm{1}(T^X>t_1+\frac{i-1}{n}(t_2-t_1))Q_{1,s}(X(t_1+\frac{i-1}{n}(t_2-t_1)),z)\\
    &~+\mathbbm{1}(T^X\leq t_1+\frac{i-1}{n}(t_2-t_1))Q_{2,s}(X(t_1+\frac{i-1}{n}(t_2-t_1)),z;X(T^X))\dd s\Big|\mc{F}_{t_1}\Big\}+R_3\\
    \overset{a.s.}{\to}&~\E\bracBig{\int_{t_1}^{t_2}\sum_{z\in A}Q_s(X(s-),z)\mathbbm{1}(z\neq X(s-))\dd s\Big|\mc{F}_{t_1}},
\end{align*}
as $n\to\infty$, where $\abs{R_3}\leq 3M^2(t_2-t_1)/n$. Here we use
\begin{align*}
    &~\E\bracBig{\int_{t_1}^{t_2}\sum_{z\in A;z\neq X(t)}\mathbbm{1}(T^X>s)Q_{1,s}(X(s),z)+\mathbbm{1}(T^X\leq s)Q_{2,s}(X(s),z;X(T^X))\dd s\Big|\mc{F}_{t_1}}\\
    =&~\E\bracBig{\int_{t_1}^{t_2}\underbrace{\sum_{z\in A}Q_s(X(s-),z)\mathbbm{1}(z\neq X(s-))}_{\text{predictable process}}\dd s\Big|\mc{F}_{t_1}},
\end{align*}
since the integrator is Lebesgue measure.
Thus, the process
\begin{align*}
    \tilde{N}(t,A)=N(t,A)-\int_0^t\sum_{z\in A}Q_s(X(s-),z)\mathbbm{1}(z\neq X(s-))\dd s
\end{align*}
is the (martingale-valued) compensated random measure associated with $X(t)$. We are done.
\end{proof}


\subsection{Proof of \cref{thm:change of measure}}
\begin{proof}

By Itô's product formula \citep[e.g., Theorem 4.4.13 in][]{applebaum2009levy} and Itô's formula (\cref{lemma:ito formula}), we have
\begin{equation}\label{eq:proof-girsanov1}
    \begin{aligned}
    \dd\brac{\exp(W(t))}=&~\exp\paren{W(t-)}\sum_{z\neq X(t-)}\setBig{\bracBig{Q_t^\P(X(t-),z)-Q_t^\Q(X(t-),z)}\dd t+\parenBig{\frac{Q_t^\Q(X(t-),z)}{Q_t^\P(X(t-),z)}-1}N(\dd t,z)}\\
    =&~\exp(W(t-))\sum_{z\neq X(t-)}\setBig{\parenBig{\frac{Q_t^\Q(X(t-),z)}{Q_t^\P(X(t-),z)}-1}\tilde{N}^\P(\dd t,z)},
\end{aligned}
\end{equation}

which implies that $\exp(W(t))$ is a $\P$-martingale. 

By \Eqref{eq:proof-girsanov1} and Itô's product formula again, we have
\begin{equation*}\resizebox{\textwidth}{!}{%
   $\displaystyle\begin{aligned}
    \dd\brac{\tilde{N}^\Q(t,A)\exp(W(t))}=&~\tilde{N}^\Q(t-,A)\exp(W(t-))\sum_{z\neq X(t-)}\setBig{\parenBig{\frac{Q_t^\Q(X(t-),z)}{Q_t^\P(X(t-),z)}-1}\tilde{N}^\P(\dd t,z)}+\exp(W(t-))\tilde{N}^\Q(\dd t,A)\\
    &~+\underbrace{\exp(W(t-))\sum_{z\in A\bsl X(t-)}\setBig{\parenBig{\frac{Q_t^\Q(X(t-),z)}{Q_t^\P(X(t-),z)}-1}N(\dd t,z)}}_{\text{quadratic variation}}\\
    =&~\tilde{N}^\Q(t-,A)\exp(W(t-))\sum_{z\neq X(t-)}\setBig{\parenBig{\frac{Q_t^\Q(X(t-),z)}{Q_t^\P(X(t-),z)}-1}\tilde{N}^\P(\dd t,z)}+\exp(W(t-))\tilde{N}^\P(\dd t,A)\\
    &~+\exp(W(t-))\sum_{z\in A\bsl X(t-)}\setBig{\parenBig{\frac{Q_t^\Q(X(t-),z)}{Q_t^\P(X(t-),z)}-1}\tilde{N}^\P(\dd t,z)}\\
    =&~\tilde{N}^\Q(t-,A)\exp(W(t-))\sum_{z\neq X(t-)}\setBig{\parenBig{\frac{Q_t^\Q(X(t-),z)}{Q_t^\P(X(t-),z)}-1}\tilde{N}^\P(\dd t,z)}\\
    &~+\exp(W(t-))\sum_{z\in A\bsl X(t-)}\setBig{\parenBig{\frac{Q_t^\Q(X(t-),z)}{Q_t^\P(X(t-),z)}}\tilde{N}^\P(\dd t,z)},
\end{aligned}$%
}\end{equation*}
which implies that $\tilde{N}^\Q(t,A)\exp(W(t))$ is a $\P$-martingale. By Lemma 5.2.11 in \cite{applebaum2009levy}, $\tilde{N}^\Q(t,A)$ is a $\Q$-martingale.

Moreover, we have
\begin{align*}
        D_{\text{KL}}(\Q_{1:t}||\P_{1:t})=&~\E_\P\bracBig{\log (\frac{\dd \Q_{1:t}}{\dd \P_{1:t}})}\\
        =&~\E_\Q\setBig{\int_0^t\sum_{z\neq X(s-)} \log \frac{Q_s^\Q(X(s-),z)}{Q_s^\P(X(s-),z)}N(\dd s,z)+\int_0^t\sum_{z\neq X(s-)}\bracBig{Q_s^\P(X(s-),z)-Q_s^\Q(X(s-),z)}\dd s}\\
        =&~\E_\Q\setBig{\int_0^t\sum_{x\neq X(s)}D_F(Q_s^\Q(X(s-),z)||Q_s^\P(X(s-),z))\dd s},
    \end{align*}
which completes the proof.
\end{proof}

\section{Proof of Convergence Results}\label{sec:proof for convergence}
\subsection{Proof of \cref{thm:convergence-tauleaping-euler}}
\begin{proof}
We divide the proof of \cref{thm:convergence-tauleaping-euler} into several steps. We first consider the discretization error in the $k$-th time interval by constructing an auxiliary CTMC with a lower-bounded transition rate on $\Z^\mc{D}$. Then, we derive the KL divergence of the path measures by Girsanov's theorem. Finally, we can bound the total variation distance by utilizing the chain rule for KL divergence, data-processing inequality and Pinsker's inequality. 

Since the proof of the result for tau-leaping and that of the result for Euler solver is similar, we only prove the convergence results for tau-leaping. One can derive the same result for Euler solver by following the same way and using $Q^{\text{Euler}}_t$ instead of $Q^\tau_t$. The arguments of this technical proof can also be used in the proof of other samplers.

\noindent \underline{\bf Step 0. Introducing additional notations.} Let $X_k(t)$ and $X_k^\tau(t)$ be the idealized process and the tau-leaping process with transition rates $Q_t(x,z)$ (\eqref{eq:oracle transition rate}) and $Q_t^\tau(x,z)$ in $[t_{k-1},t_k]$ (\eqref{eq:tau-leaping rate}), respectively. Define the target set
\begin{align*}
    \mathbbm{T}_{x_0}(x)=\set{z\in\Z^\mc{D}:x_0+z-x\in\mc{S}^\mc{D},\dd^H(x,x_0+z-x)=1}.
\end{align*}
Note that in the $k$-th time interval $[t_{k-1},t_k]$, the target set of the idealized process and that of the tau-leaping process are $\mathbbm{T}_{X_k(t-)}(X_k(t-))$ and $\mathbbm{T}_{X_k^\tau(t_{k-1})}(X_k^\tau(t-))$, respectively. To construct an auxiliary CTMC, we have to consider the union of these two sets: $\Tilde{\mathbbm{T}}_{x_{k-1}}(x)\overset{\triangle}{=}(\mathbbm{T}_{x_{k-1}}(x)\cup\mathbbm{T}_{x}(x))\bsl\set{x}$. Denote the time-Lipschitz constant of the oracle transition rate in the $k$-th time interval as $L_k=\sup_{t\in[t_{k-1},t_k],\dd^H(x,z)=1}\abs{\dot{Q}_t(x,z)}$. We denote $M_k=\sup_{t\in[t_{k-1},t_k]}\abs{\dot{\kappa}_t/(1-\kappa_t)}$.

\noindent \underline{\bf Step 1. Constructing an auxiliary process.} First, we focus on the time interval $[t_{k-1},t_k]$ conditioning on $X_k(t_{k-1})=x_{k-1}$. We want to construct a CTMC $X^\eps(t)$ with the initial distribution $\delta_{x_{k-1}}$ and rate $Q_t^\eps(x,z)$ such that $Q_t^\eps(x,z)>\eps_k$ for any $z\in\Tilde{\mathbbm{T}}_{x_{k-1}}(x)$, where $\eps_k>0$ is a small positive constant that we will choose later. Then both $D_{KL}(\P_k||P^\eps_{k})$ and $D_{KL}(\P_{k}^\tau||\P^\eps_{k})$ are finite, where $\P_k$, $\P^\tau_k$ and $\P^\eps_k$ are the path measures of the idealized process, the tau-leaping process and the auxiliary process on $[t_{k-1},t_k]$, respectively. We set $Q_t^\eps(x,z)=\set{Q_{t}(x_{k-1},z-x+x_{k-1})\vee \eps_k}\delta_{\Tilde{\mathbbm{T}}_{x_{k-1}}(x)}(z)$ for $z\neq x$ and $Q_t^\eps(x,x)=-\sum_{z\in\Z^\mc{D}:z\neq x}Q_t^\eps(x,z)$. (Here, we set $Q_t(x,z)=0$ for any $z\notin \mc{S}^\mc{D}$.)

\noindent \underline{\bf Step 2. Bounding KL divergence by Girsanov's theorem.} We consider the exit times
\begin{align*}
    T_k=\min\set{h>0:\Delta X_k(t_{k-1}+h)\neq 0}.
\end{align*}
Define the following event
\begin{align*}
    \mc{E}_{k}=\setBig{T_k\leq \tau_k},
\end{align*}
where $\tau_k=t_k-t_{k-1}$. Then by \cref{prop:exit time}, we have \begin{align*}
    \P_k(\mc{E}_{k}|X_k(t_{k-1})=x_{k-1})\leq&~ 1-\exp\parenBig{-\tau_k\mc{D}M_k},
\end{align*}
where we use
\begin{align*}
    -Q_t(x,x)\leq\sum_{z\neq x}\sum_{d=1}^\mc{D}\frac{\dot{\kappa}_t}{1-\kappa_t}\delta_{x^{\bsl d}}(z^{\bsl d})p^d(z^d|x)\leq\mc{D}M_k,
\end{align*}
for any $x\in\mc{S}^\mc{D}$ and $t\in[t_{k-1},t_k]$.

By Girsanov's theorem (\cref{thm:change of measure}), we can bound the KL divergence $D_{KL}(\P_k||\P^\eps_k)$ and $D_{KL}(\P^\tau_k||\P^\eps_k)$. Note that
\begin{align*}
    &~D_{KL}(\P_k||\P^\eps_k)\\
    =&~\E_{\P_k}\bracBig{\int_{t_{k-1}}^{t_k}\setBig{\sum_{z\in\Tilde{\mathbbm{T}}_{x_{k-1}}(X(s))}D_F(Q_s(X_k(s),z)||Q_s^\eps(X_k(s),z))}\dd s}\\
    =&~\E_{\P_k}\bracBig{\underbrace{\int_{t_{k-1}}^{t_k}\setBig{\sum_{z\in\Tilde{\mathbbm{T}}_{x_{k-1}}(X(s))}Q_s(X_k(s),z)\log \frac{Q_s(X_k(s),z)}{Q_s^\eps(X_k(s),z)} - Q_s(X_k(s),z)+Q_s^\eps(X_k(s),z)}\dd s}_{\mc{I}_1}}.
\end{align*}
Conditioning on $\mc{E}_{k}$, since $\abs{\Tilde{\mathbbm{T}}_{x_{k-1}}(x)}\leq2\mc{D}\abs{\mc{S}}$, the conditional expectation of quantity $\mc{I}_1$ is bounded by \begin{align*}
    \E_{\P_k}(\mc{I}_1|\mc{E}_{k},X_k(t_{k-1})=x_{k-1})\leq2\tau_k\mc{D}\abs{\mc{S}}M_k\bracBig{1\vee\log\frac{M_k}{\eps_k}},
\end{align*}
where we use the first inequality of \cref{lemma:bregman}. Note that when $Q_t(x_{k-1},z)>\eps_k$, $D_F(Q_t(x_{k-1},z)||Q_t(x_{k-1},z)\vee \eps_k)=0$; when $Q_t(x_{k-1},z)\in[0,\eps_k]$, $D_F(Q_t(x_{k-1},z)||Q_t(x_{k-1},z)\vee \eps_k)\leq \eps$ for $t\in[t_{k-1},t_k]$. Conditioning on $\mc{E}_{k}^c$, we have $\Tilde{\mathbbm{T}}_{x_{k-1}}(X(s))=\mathbbm{T}_{x_{k-1}}(x_{k-1})$ and
\begin{align*}
\E_{\P_k}(\mc{I}_1|\mc{E}_{k}^c,X_k(t_{k-1})=x_{k-1})\leq\tau_k\mc{D}\abs{\mc{S}}\eps_k.
\end{align*}
Consequently, we have
\begin{align*}
&~D_{KL}(\P_k||\P^\eps_k)\\
\leq&~\E_{\P_k}(\mc{I}_1|\mc{E}_{k}^c,X_k(t_{k-1})=x_{k-1})\P_k(\mc{E}_{k}^c|X_k(t_{k-1})=x_{k-1})+\E_{\P_k}(\mc{I}_1|\mc{E}_{k},X_k(t_{k-1})=x_{k-1})\P_k(\mc{E}_{k}|X_k(t_{k-1})=x_{k-1})\\
    \leq&~\tau_k\mc{D}\abs{\mc{S}}\eps_k+2\tau_k^2\mc{D}^2\abs{\mc{S}}M_k^2\bracBig{1\vee\log\frac{M_k}{\eps_k}},
\end{align*}

By the second inequality of \cref{lemma:bregman}, we have $D_F(Q_{t_{k-1}}(x_{k-1},z)||Q_t(x_{k-1},z)\vee \eps_k)\leq\frac{L_k^2\tau_k^2}{2\eps_k}\vee (L_k\tau_k+\eps_k)$ for $z\neq x_{k-1}$ and $t\in[t_{k-1},t_k]$.  Then, by the construction of $Q^\eps_t$, we have
\begin{align*}
    &~D_{KL}(\P_k^\tau||\P^\eps_k)\\
    =&~\E_{\P_k^\tau}\bracBig{\int_{t_{k-1}}^{t_k}\setBig{\sum_{z\in\Tilde{\mathbbm{T}}_{x_{k-1}}(X_k^\tau(s))}D_F(Q^\tau_s(X_k^\tau(s),z)||Q_s^\eps(X_k^\tau(s),z))}\dd s}\\
    \leq&~2\tau_k\mc{D}\abs{\mc{S}}\bracBig{\frac{L_k^2\tau_k^2}{2\eps_k}\vee (L_k\tau_k+\eps_k)}.
\end{align*}

For estimation error, we have
\begin{align*}
    \E_{\mathbbm{D}_n}\brac{D_{KL}(\P_k^\tau||\hat{\P}_k^\tau)}=\E_{\mathbbm{D}_n}\bracBig{\E_{\P_k^\tau}\parenBig{\int_{t_{k-1}}^{t_k}\setBig{\sum_{z\neq X_k^\tau(s)}D_F(Q^\tau_s(X_k^\tau(s),z)||\hat{Q}_s^\tau(X_k^\tau(s),z))}\dd s}}.
\end{align*}

\noindent \underline{\bf Step 3. Choosing $\eps_k$ and deriving the final bound.} By \cref{lemma:lipschitz}, we have $L_k\leq H_k+\mc{D}M_k^2$, where $H_k=\sup_{t\in[t_{k-1},t_k]}\abs{\frac{\ddot{\kappa}_{t}(1-\kappa_t)+(\dot{\kappa}_t)^2}{(1-\kappa_{t})^2}}$. Choosing $\eps_k=\tau_k(H_k+\mc{D}M_k^2)$, since $\mc{M}_{x_{k-1}}:\Z^\mc{D}\to\mc{S}^\mc{D}$ is a deterministic function defined in \cref{alg:Tau-leaping} for given $x_{k-1}$, then by the chain rule for KL divergence \citep[e.g. Theorem 2.5.3 of][]{cover1999elements}, Pinsker's inequality \citep[e.g. Lemma 15.2 of][]{wainwright2019high} and data-processing inequality, we have
\begin{align*}
    &~\E\brac{TV(p_{1-\delta},\hat{p}_{1-\delta}^\tau)}\\
    \leq&~TV(p_{1-\delta},p_{1-\delta}^\eps)+TV(p_{1-\delta}^\eps,p_{1-\delta}^\tau)+\E\brac{TV(p^\tau_{1-\delta},\hat{p}^\tau_{1-\delta})}\\
    \leq&~\sqrt{\frac{1}{2}D_{KL}(p_{1-\delta}||p_{1-\delta}^\eps)}+\sqrt{\frac{1}{2}D_{KL}(p_{1-\delta}^\tau||p_{1-\delta}^\eps)}+\sqrt{\frac{1}{2}\E\brac{D_{KL}(p^\tau_{1-\delta}||\hat{p}^\tau_{1-\delta})}}\\
    \leq&~\sqrt{\frac{1}{2}\mc{D}\abs{\mc{S}}\sum_{k=1}^K\tau_k^2(H_k+\mc{D}M_k^2)+\mc{D}^2\abs{\mc{S}}\sum_{k=1}^K\tau_k^2M_k^2\log(\mc{D}M_k\tau_k)^{-1}}+\sqrt{2\mc{D}\abs{\mc{S}}\sum_{k=1}^K\tau_k^2(H_k+\mc{D}M_k^2)}+\varepsilon_{\text{Est}}^\tau\\
    \leq&~\sqrt{\frac{5}{2}\mc{D}\abs{\mc{S}}\sum_{k=1}^K\tau_k^2H_k}+\sqrt{3\mc{D}^2\abs{\mc{S}}\sum_{k=1}^K\tau_k^2M_k^2\log(\mc{D}M_k\tau_k)^{-1}}+\varepsilon_{\text{Est}}^\tau,
\end{align*}
if $\max_{k\in\brac{K}}\set{\tau_k\mc{D}M_k}\leq 1$. This completes the proof.

\noindent \underline{\it Sketch of proof for the Euler sampler.} The proof for the Euler sampler follows the same procedure as above, with $Q_t^\tau$ replaced by $Q_t^{\text{Euler}}$ (\eqref{eq:Euler rate}). Specifically, we first construct the same auxiliary rate $Q_t^\eps(x,z)=\set{Q_{t}(x_{k-1},z-x+x_{k-1})\vee \eps_k}\delta_{\Tilde{\mathbbm{T}}_{x_{k-1}}(x)}(z)$ in Step 1. Then in Step 2, we apply Girsanov's theorem and the same inequality to bound $D_{KL}(\P_k\|\P^\eps_k)$ and $D_{KL}(\P_k^{\text{Euler}}\|\P^\eps_k)$. In Step 3, we choose the same truncation parameter $\eps_k=\tau_k(H_k+\mc{D}M_k^2)$, which yields the same bound as that for tau-leaping (with $\varepsilon_{\text{Est}}^\tau$ replaced by $\varepsilon_{\text{Est}}^{\text{Euler}}$).
\end{proof}

\subsection{Proof of \cref{thm:one-step analysis}}
\begin{proof}
  In this proof, we use some notations similar to the proof of \cref{thm:convergence-tauleaping-euler}. Let $X_k^{\text{Euler}}(t)$ be the process with transition rates $Q_t^{\text{Euler}}(x,z)$ (\eqref{eq:Euler rate}) in $[t_{k-1},t_k]$. We consider the exit times 
\begin{align*}
    T_k^{\text{Euler}}=\min\set{h>0:\Delta X_k^{\text{Euler}}(t_{k-1}+h)\neq 0}.
\end{align*}
Note that $X_k^{\text{Euler}}(t)$ can be constructed by uniformization (\cref{prop:uniformization}) with the Poisson process $N(t)_{t\in[t_{k-1},t_k]}$ (with rate $-Q_{t_{k-1}}(x_{k-1},x_{k-1})$, \textit{which dominates the diagonal of the transition rate $Q_t^{\text{Euler}}$ in $[t_{k-1},t_k]$}). Since $Q_t^{\text{Euler}}$ is time-homogeneous (which means that transition structure is time-independent) in $[t_{k-1},t_k]$, the first exit time of $X^{\text{Euler}}(t)$ is equal to that of $N(t)$. Define the following events
\begin{align*}
    \mc{E}_{k}^{\text{Euler}}=\setBig{T_k^{\text{Euler}}\leq \tau_k}=\setBig{N(t_k)-N(t_{k-1})>0}.
\end{align*}
By \cref{prop:exit time}, we have
\begin{align*}
    \P_k^{\text{Euler}}(\mc{E}_{k}^{\text{Euler}}|X_k^{\text{Euler}}(t_{k-1})=x_{k-1})=&~ 1-\exp\parenBig{\tau_kQ_{t_{k-1}}(x_{k-1},x_{k-1})}.
\end{align*}
By Girsanov's theorem (\cref{thm:change of measure}), we have
\begin{align*}
    &~D_{KL}(\P_k^{\text{Euler}}||\P_k)\\
    =&~\E_{\P_k^{\text{Euler}}}\bracBig{\int_{t_{k-1}}^{t_k}\setBig{\sum_{z\neq X_k^{\text{Euler}}(s)}D_F(Q_s^{\text{Euler}}(X_k^{\text{Euler}}(s),z)||Q_s(X_k^{\text{Euler}}(s),z))}\dd s}\\
    =&~\E_{\P_k^{\text{Euler}}}\bracBig{\underbrace{\int_{t_{k-1}}^{t_k}\setBig{\sum_{z\neq X_k^{\text{Euler}}(s)}Q_{t_{k-1}}^{\text{Euler}}(X_k^{\text{Euler}}(s),z)\log \frac{Q_{t_{k-1}}^{\text{Euler}}(X_k^{\text{Euler}}(s),z)}{Q_s(X_k^{\text{Euler}}(s),z)} - Q^{\text{Euler}}_{t_{k-1}}(X_k^{\text{Euler}}(s),z)+Q_s(X_k^{\text{Euler}}(s),z)}\dd s}_{\mc{I}_2}}.
\end{align*}
By conditioning argument \citep[e.g., Theorem 3.7.9 in][]{durrett2019probability} for the Poisson process $N(t)$, we can derive the lower bound:
\begin{equation*}
  \resizebox{\textwidth}{!}{%
   $\displaystyle  \begin{aligned}
&~D_{KL}(\P_k^{\text{Euler}}||\P_k)\\
\ge&~\P_k^{\text{Euler}}((\mc{E}_{k}^{\text{Euler}})^c|X_k^{\text{Euler}}(t_{k-1})=x_{k-1})\E_{\P_k^{\text{Euler}}}(\mc{I}_2|(\mc{E}_{k}^{\text{Euler}})^c,X_k^{\text{Euler}}(t_{k-1})=x_{k-1})\\
    &~+\P_k^{\text{Euler}}(N(t_k)-N(t_{k-1})=1|X_k^{\text{Euler}}(t_{k-1})=x_{k-1})\E_{\P_k^{\text{Euler}}}(\mc{I}_2|N(t_k)-N(t_{k-1})=1,X_k^{\text{Euler}}(t_{k-1})=x_{k-1})\\
    =&~\exp\parenBig{\tau_kQ_{t_{k-1}}(x_{k-1},x_{k-1})}\int_{t_{k-1}}^{t_k}\setBig{\sum_{z\neq x_{k-1}}D_F(Q_{t_{k-1}}(x_{k-1},z)||Q_s(x_{k-1},z))}\dd s\\
    &~-\tau_kQ_{t_{k-1}}(x_{k-1},x_{k-1})\exp\parenBig{\tau_kQ_{t_{k-1}}(x_{k-1},x_{k-1})}\\
    &~\quad\times\int_{t_{k-1}}^{t_k}\int_{t_{k-1}}^{t_k}\mathbbm{1}(t\leq s)\parenBig{\sum_{x\neq x_{k-1}}\sum_{d=1}^\mc{D}\delta_{x_{k-1}^{\bsl d}}(x^{\bsl d})\frac{Q^d_{t_{k-1}}(x_{k-1},x^d)}{-\tau_kQ_{t_{k-1}}(x_{k-1},x_{k-1})}\setBig{\sum_{d^\prime\neq d}\sum_{z^{d^\prime}\neq x^{d^\prime}}D_F(Q^{d^\prime}_{t_{k-1}}(x_{k-1},z^{d^\prime})||Q^{d^\prime}_s(x,z^{d^\prime}))}}\dd t\dd s\\
    \ge&~\frac{1}{4}\tau_k\mc{D}\abs{\mc{S}}L_k^\prime+\frac{1}{32}\mc{D}^2\abs{\mc{S}}^2\tau_k^2\eta_k,
\end{aligned}$%
}
\end{equation*}
where 
\begin{align*}
    L_k^\prime=&~\inf_{\dd^H(x,z)=1}\int_{t_{k-1}}^{t_k}\setBig{\frac{1}{\tau_k}D_F(Q_{t_{k-1}}(x,z)||Q_s(x,z))}\dd s;\\
    \eta_k=&~\inf_{\substack{d^\prime\neq d\in\brac{\mc{D}},z^{d^\prime}\neq x^{d^\prime},x^{ d}\neq x_{k-1}^{d},\\ \dd ^H(x,x_{k-1})=1,s\in[t_{k-1},t_k]}\kern-\nulldelimiterspace}Q^{d}_{t_{k-1}}(x_{k-1},x^{d})D_F(Q^{d^\prime}_{t_{k-1}}(x_{k-1},z^{d^\prime})||Q^{d^\prime}_s(x,z^{d^\prime})).
\end{align*}
Here, we use $\tau_kQ_{t_{k-1}}(x_{k-1},x_{k-1})\ge -\log 2$
\end{proof}

\subsection{Proof of \cref{thm:convergence-time-corrected}}
\begin{proof}
This proof is similar to the proof of \cref{thm:convergence-tauleaping-euler}. We will use the notations defined in the proof of \cref{thm:convergence-tauleaping-euler}.

\noindent\underline{\bf Step 1. Constructing an auxiliary process.} Let $X_k^{\text{TC}}(t)$ be the process with rate $Q_t^{\text{TC}}$ in time interval $[t_{k-1},t_k]$, where $Q^{\text{TC}}_t$ is defined in \eqref{eq:time-corrected rate}. We set $Q^\eps_t(x,z)=\frac{\dot{\kappa}_t}{1-\kappa_t}\sum_{d=1}^\mc{D}\delta_{x^{\bsl d}}(z^{\bsl d})\set{\brac{\delta_{x_{k-1}^d}(x^d)p^d_{1|t}(z^d|x_{k-1})}\vee \eps_k}$ for $z\neq x$ and $Q_t^\eps(x,x)=-\sum_{z\neq x}Q_t^\eps(x,z)$. Define $X^\eps_k(t)$ as the CTMC with transition rate $Q_t^\eps$ in $[t_{k-1},t_k]$. Denote $\P^\text{TC}_k$ and $\P^\eps_k$ as the path measures of $X_k^{\text{TC}}(t)$ and $X_k^\eps(t)$, respectively.

\noindent \underline{\bf Step 2. Bounding KL divergence by Girsanov's theorem.} By Girsanov's theorem (\cref{thm:change of measure}), we can bound the KL divergence $D_{KL}(\P_k||\P^\eps_k)$ and $D_{KL}(\P^\text{TC}_k||\P^\eps_k)$. By the construction of $Q_t^\eps$, using the same arguments as the step 2 in the proof of \cref{thm:convergence-tauleaping-euler}, we have
\begin{align*}
    D_{KL}(\P_k||\P^\eps_k)=&~\E_{\P_k}\bracBig{\int_{t_{k-1}}^{t_k}\setBig{\sum_{z\neq X(s)}D_F(Q_s(X_k(s),z)||Q_s^\eps(X_k(s),z))}\dd s}\\
    \leq&~M_k\E_{\P_k}\bracBig{\int_{t_{k-1}}^{t_k}\setBig{\sum_{z\neq X(s)}\sum_{d=1}^\mc{D}\delta_{x^{\bsl d}}(z^{\bsl d})D_F\parenBig{p^d_{1|t}(z^d|X_k(s))\Big\|\bracBig{\delta_{x_{k-1}^d}(X_k(s)^d)p^d_{1|t}(z^d|x_{k-1})}\vee \eps_k}}\dd s}\\
    \leq&~\tau_k\mc{D}\abs{\mc{S}}M_k\eps_k+\tau_k^2\mc{D}^2\abs{\mc{S}}M_k^2\bracBig{1\vee\log\frac{1}{\eps_k}}.
\end{align*}

Let $$L_k^p=\sup_{t\in[t_{k-1},t_k],d\in\brac{\mc{D}},x\in\mc{S}^\mc{D},z^d\in\mc{S}\bsl\set{x^d}}\abs{\dot{p}_{1|t}^d(z^d|x)}$$ be the time-Lipschitz constant of the posterior in the $k$-th time interval. Using the same arguments as the step 2 in the proof of \cref{thm:convergence-tauleaping-euler}, it holds that
\begin{align*}
    D_{KL}(\P_k^\text{TC}||\P^\eps_k)=&~\E_{\P_k^\text{TC}}\bracBig{\int_{t_{k-1}}^{t_k}\setBig{\sum_{z\neq X_k^\text{TC}(s)}D_F(Q^\text{TC}_s(X_k^\text{TC}(s),z)||Q_s^\eps(X_k^\text{TC}(s),z))}\dd s}\\
    \leq&~\tau_k\mc{D}\abs{\mc{S}}M_k\bracBig{\frac{(L_k^p)^2\tau_k^2}{2\eps_k}\vee (L_k^p\tau_k+\eps_k)}.
\end{align*}

For estimation error, we have
\begin{align*}
    \E_{\mathbbm{D}_n}\brac{D_{KL}(\P_k^\text{TC}||\hat{\P}_k^\text{TC})}=\E_{\mathbbm{D}_n}\bracBig{\E_{\P_k^\text{TC}}\parenBig{\int_{t_{k-1}}^{t_k}\setBig{\sum_{z\neq X_k^\text{TC}(s)}D_F(Q^\text{TC}_s(X_k^\text{TC}(s),z)||\hat{Q}_s^\text{TC}(X_k^\text{TC}(s),z))}\dd s}}.
\end{align*}

\noindent \underline{\bf Step 3. Choosing $\eps_k$ and deriving the final bound.} By Kolmogorov backward equation \cref{lemma:backward equation}, we have $L_k^p\leq \mc{D}M_k$. Choosing $\eps_k=\tau_k\mc{D}M_k$, by the same arguments as the step 3 in the proof of \cref{thm:convergence-tauleaping-euler}, if $\max_{k\in\brac{K}}\set{\tau_k\mc{D}M_k}\leq 1$, we can obtain that
\begin{align*}
    &~\E\brac{TV(p_{1-\delta},\hat{p}_{1-\delta}^\text{TC})}\\
    \leq&~TV(p_{1-\delta},p_{1-\delta}^\eps)+TV(p_{1-\delta}^\eps,p_{1-\delta}^\text{TC})+\E\brac{TV(p^\text{TC}_{1-\delta},\hat{p}^\text{TC}_{1-\delta})}\\
    \leq&~\sqrt{\frac{1}{2}D_{KL}(p_{1-\delta}||p_{1-\delta}^\eps)}+\sqrt{\frac{1}{2}D_{KL}(p_{1-\delta}^\text{TC}||p_{1-\delta}^\eps)}+\sqrt{\frac{1}{2}\E\brac{D_{KL}(p^\text{TC}_{1-\delta}||\hat{p}^\text{TC}_{1-\delta})}}\\
    \leq&~\sqrt{\frac{1}{2}\mc{D}^2\abs{\mc{S}}\sum_{k=1}^K\tau_k^2M_k^2+\frac{1}{2}\mc{D}^2\abs{\mc{S}}\sum_{k=1}^K\tau_k^2M_k^2\log(\tau_k\mc{D}M_k)^{-1}}+\sqrt{\mc{D}^2\abs{\mc{S}}\sum_{k=1}^K\tau_k^2M_k^2}+\varepsilon_{\text{Est}}^\text{TC}\\
    \leq&~\sqrt{2\mc{D}^2\abs{\mc{S}}\sum_{k=1}^K\tau_k^2M_k^2\log(\tau_k\mc{D}M_k)^{-1}}+\varepsilon_{\text{Est}}^\text{TC},
\end{align*}
which completes the proof.
\end{proof}

\subsection{Proof of \cref{thm:convergence-location-corrected}}
\begin{proof}
This proof is similar to the proof of \cref{thm:convergence-tauleaping-euler}. We will use the notations defined in the proof of \cref{thm:convergence-tauleaping-euler}.

\noindent\underline{\bf Step 1. Constructing an auxiliary process.} Let $X_k^{\text{LC}}(t)$ be the process with transition rate $Q_t^{\text{LC}}$ in time interval $[t_{k-1},t_k]$, where $Q_t^{\text{LC}}$ is the two-stage transition rate with $Q_{1,t}^{\text{LC}}$ and $Q_{2,t}^{\text{LC}}$ defined in \eqref{eq:location-corrected rate}. We set $Q^\eps_t(x,z)=\mathbbm{1}(T_k\ge t)Q^\eps_{1,t}(x,z)+\mathbbm{1}(T_k< t)Q^\eps_{2,t}(x,z;X(T_k))$ with
\begin{equation}
    {\begin{aligned}
    Q_{1,t}^\eps(x,z)=&~\frac{\dot{\kappa}_t}{1-\kappa_t}\sum_{d=1}^\mc{D}\delta_{x^{\bsl d}}(z^{\bsl d})\set{p_{1|t}^d(z^d|x)\vee \eps_k};\\
    Q_{2,t}^\eps(x,z|X(T_k))=&~\frac{\dot{\kappa}_t}{1-\kappa_t}\sum_{d=1}^\mc{D}\delta_{x^{\bsl d}}(z^{\bsl d})\delta_{X^d(T_k)}(x^d)\set{p_{1|t}^d(z^d|X(T_k))\vee \eps_k},
\end{aligned}}
\end{equation}
where $z\neq x$; we let $Q_t^\eps(x,x)=-\sum_{z\neq x}Q_t^\eps(x,z)$. Define $X^\eps_k(t)$ as the CTMC with transition rate $Q_t^\eps$ in $[t_{k-1},t_k]$. Denote $\P^\text{LC}_k$ and $\P^\eps_k$ as the path measures of $X_k^{\text{LC}}(t)$ and $X_k^\eps(t)$, respectively.

\noindent \underline{\bf Step 2. Bounding KL divergence by Girsanov's theorem.} By Girsanov's theorem (\cref{thm:change of measure}), we can bound the KL divergence $D_{KL}(\P_k||\P^\eps_k)$ and $D_{KL}(\P^\text{LC}_k||\P^\eps_k)$. Define the following events
\begin{align*}
    \mc{E}_k^\text{LC}=\setBig{X_k(t)\text{ has at least two jumps in }[t_{k-1},t_k]}.
\end{align*}
By uniformization (\cref{prop:uniformization}), $X_k(t)$ can be constructed by a Poisson process $N_k(t)$ with rate $\mc{D}M_k$, where we use
\begin{align*}
    -Q_t(x,x)\leq\sum_{z\neq x}\sum_{d=1}^\mc{D}\frac{\dot{\kappa}_t}{1-\kappa_t}\delta_{x^{\bsl d}}(z^{\bsl d})p^d_{1|t}(z^d|x)\leq\mc{D}M_k,
\end{align*}
for any $x\in\mc{S}^\mc{D}$ and $t\in[t_{k-1},t_k]$. When $\tau_k\mc{D}M_k\leq \log 2$, we have
\begin{align*}
    \P_k(\mc{E}_{k}^\text{LC}|X_k(t_{k-1})=x_{k-1})\leq\P(N(t_{k})-N(t_{k-1})\ge 2)\leq (\tau_k\mc{D}M_k)^2\exp\parenBig{-\tau_k\mc{D}M_k}\leq (\tau_k\mc{D}M_k)^2.
\end{align*}
By Girsanov's theorem (\cref{thm:change of measure}), we have
\begin{align*}
    D_{KL}(\P_k||\P^\eps_k)=&~\E_{\P_k}\bracBig{\underbrace{\int_{t_{k-1}}^{t_k}\setBig{\sum_{z\neq X(s)}D_F(Q_s(X_k(s),z)||Q_s^\eps(X_k(s),z))}\dd s}_{\mc{I}_3}}.
\end{align*}
Conditioning on $\mc{E}_{k}^\text{LC}$, the conditional expectation of quantity $\mc{I}_3$ is bounded by \begin{align*}
    \E_{\P_k}(\mc{I}_3|\mc{E}_{k}^\text{LC},X_k(t_{k-1})=x_{k-1})\leq\tau_k\mc{D}\abs{\mc{S}}M_k\bracBig{1\vee\log\frac{1}{\eps_k}},
\end{align*}
where we use the first inequality of \cref{lemma:bregman}. Conditioning on $(\mc{E}_{k}^\text{LC})^c$, by the construction of $Q_t^\eps$, we have 
\begin{align*}
\E_{\P_k}(\mc{I}_3|(\mc{E}_{k}^\text{LC})^c,X_k(t_{k-1})=x_{k-1})\leq\tau_k\mc{D}\abs{\mc{S}}M_k\eps_k.
\end{align*}
Consequently, we have
\begin{align*}
&~D_{KL}(\P_k||\P^\eps_k)\\
\leq&~\E_{\P_k}(\mc{I}_3|(\mc{E}_{k}^\text{LC})^c,X_k(t_{k-1})=x_{k-1})+\E_{\P_k}(\mc{I}_3|\mc{E}_{k}^\text{LC},X_k(t_{k-1})=x_{k-1})\P_k(\mc{E}_{k}^\text{LC}|X_k(t_{k-1})=x_{k-1})\\
    \leq&~\tau_k\mc{D}\abs{\mc{S}}M_k\eps_k+(\tau_k\mc{D}M_k)^3\abs{\mc{S}}\bracBig{1\vee\log\frac{1}{\eps_k}},
\end{align*}

Let $$L_k^p=\sup_{t\in[t_{k-1},t_k],d\in\brac{\mc{D}},x\in\mc{S}^\mc{D},z^d\in\mc{S}\bsl\set{x^d}}\abs{\dot{p}_{1|t}^d(z^d|x)}$$ be the time-Lipschitz constant of the posterior in the $k$-th time interval. Using the same arguments as the step 2 in the proof of \cref{thm:convergence-tauleaping-euler}, it holds that
\begin{equation}\label{eq:proof-location-corrected}
    \begin{aligned}
    D_{KL}(\P_k^\text{LC}||\P^\eps_k)=&~\E_{\P_k^\text{LC}}\bracBig{\int_{t_{k-1}}^{t_k}\setBig{\sum_{z\neq X_k^\text{LC}(s)}D_F(Q^\text{LC}_s(X_k^\text{LC}(s),z)||Q_s^\eps(X_k^\text{LC}(s),z))}\dd s}\\
    \leq&~\tau_k\mc{D}\abs{\mc{S}}M_k\bracBig{\frac{(L_k^p)^2\tau_k^2}{2\eps_k}\vee (L_k^p\tau_k+\eps_k)}.
\end{aligned}
\end{equation}

For estimation error, we have
\begin{align*}
    \E_{\mathbbm{D}_n}\brac{D_{KL}(\P_k^\text{LC}||\hat{\P}_k^\text{LC})}=\E_{\mathbbm{D}_n}\bracBig{\E_{\P_k^\text{LC}}\parenBig{\int_{t_{k-1}}^{t_k}\setBig{\sum_{z\neq X_k^\text{LC}(s)}D_F(Q^\text{LC}_s(X_k^\text{LC}(s),z)||\hat{Q}_s^\text{LC}(X_k^\text{LC}(s),z))}\dd s}}.
\end{align*}

\noindent \underline{\bf Step 3. Choosing $\eps_k$ and deriving the final bound.} Assume that $\max_{k\in\brac{K}}\set{\tau_k\mc{D}M_k}\leq \log 2$. Choosing $\eps_k=\tau_k^2\mc{D}^2M_k^2\set{1\vee (L_k^p/(\tau_k\mc{D}^2M^2_k))}$, by the same arguments as the step 3 in the proof of \cref{thm:convergence-tauleaping-euler}, we can obtain that
\begin{align*}
    \E\brac{TV(p_{1-\delta},\hat{p}_{1-\delta}^\text{LC})}\leq&~TV(p_{1-\delta},p_{1-\delta}^\eps)+TV(p_{1-\delta}^\eps,p_{1-\delta}^\text{LC})+\E\brac{TV(p^\text{LC}_{1-\delta},\hat{p}^\text{LC}_{1-\delta})}\\
    \leq&~\sqrt{\frac{1}{2}D_{KL}(p_{1-\delta}||p_{1-\delta}^\eps)}+\sqrt{\frac{1}{2}D_{KL}(p_{1-\delta}^\text{LC}||p_{1-\delta}^\eps)}+\sqrt{\frac{1}{2}\E\brac{D_{KL}(p^\text{LC}_{1-\delta}||\hat{p}^\text{LC}_{1-\delta})}}\\
    \leq&~\sqrt{\frac{1}{2}\mc{D}^3\abs{\mc{S}}\sum_{k=1}^K\tau_k^3M_k^3\set{1\vee (L_k^p/(\tau_k\mc{D}^2M_k^2))}+\frac{1}{2}\mc{D}^3\abs{\mc{S}}\sum_{k=1}^K\tau_k^3M_k^3\log(\mc{D}M_k\tau_k)^{-2}}\\
    &~+\sqrt{\mc{D}^3\abs{\mc{S}}\sum_{k=1}^K\tau_k^3M_k^3\set{1\vee (L_k^p/(\tau_k\mc{D}^2M_k^2))}}+\varepsilon_{\text{Est}}^\text{LC}\\
    \leq&~\sqrt{2\mc{D}^3\abs{\mc{S}}\sum_{k=1}^K\tau_k^3M_k^3\set{ (L_k^p/(\tau_k\mc{D}^2M_k^2))\vee \log(\mc{D}M_k\tau_k)^{-2}}}+\varepsilon_{\text{Est}}^\text{LC},
\end{align*}
which completes the proof.
\end{proof}

\subsection{Proof of \cref{thm:convergence-location-corrected-convexity}}
\begin{proof}
It suffices to bound \eqref{eq:proof-location-corrected} in the proof of \cref{thm:convergence-location-corrected} using convexity of Bregman divergence and choose $\eps_k=\min\set{\underline{M}_k,\tau_k^2\mc{D}^2M_k^2}$. Then we have
\begin{equation*}
    \begin{aligned}
    D_{KL}(\P_k^\text{LC}||\P^\eps_k)=\E_{\P_k^\text{LC}}\bracBig{\int_{t_{k-1}}^{t_k}\setBig{\sum_{z\neq X_k^\text{LC}(s)}D_F(Q^\text{LC}_s(X_k^\text{LC}(s),z)||Q_s^\eps(X_k^\text{LC}(s),z))}\dd s}\leq\tau_k^3\mc{D}\abs{\mc{S}}M_k\bracBig{\frac{(L_k^p)^2}{2\underline{M}_k}}.
\end{aligned}
\end{equation*}
Consequently, it holds that
\begin{align*}
    \E\brac{TV(p_{1-\delta},\hat{p}_{1-\delta}^\text{LC})}\leq&~TV(p_{1-\delta},p_{1-\delta}^\eps)+TV(p_{1-\delta}^\eps,p_{1-\delta}^\text{LC})+\E\brac{TV(p^\text{LC}_{1-\delta},\hat{p}^\text{LC}_{1-\delta})}\\
    \leq&~\sqrt{\frac{1}{2}D_{KL}(p_{1-\delta}||p_{1-\delta}^\eps)}+\sqrt{\frac{1}{2}D_{KL}(p_{1-\delta}^\text{LC}||p_{1-\delta}^\eps)}+\sqrt{\frac{1}{2}\E\brac{D_{KL}(p^\text{TC}_{1-\delta}||\hat{p}^\text{TC}_{1-\delta})}}\\
    \leq&~\sqrt{\frac{1}{2}\mc{D}^3\abs{\mc{S}}\sum_{k=1}^K\tau_k^3M_k^3+\frac{1}{2}\mc{D}^3\abs{\mc{S}}\sum_{k=1}^K\tau_k^3M_k^3\log(\mc{D}M_k\tau_k)^{-2}}+\sqrt{\mc{D}\abs{\mc{S}}\sum_{k=1}^K\tau_k^3M_k\bracBig{\frac{(L_k^p)^2}{4\underline{M}_k}}}+\varepsilon_{\text{Est}}^\text{LC}\\
    \leq&~\sqrt{2\mc{D}^3\abs{\mc{S}}\sum_{k=1}^K\tau_k^3M_k^3\underline{M}_k^{-1} \log(\mc{D}M_k\tau_k)^{-2}}+\varepsilon_{\text{Est}}^\text{LC},
\end{align*}
where we use $L_k^p\leq M_k\mc{D}$. This completes the proof.
\end{proof}

\section{Additional Experiments}\label{sec:additional experiments}

\subsection{Simulation}
In this subsection, we present the simulation results in low-dimensional datasets.

\noindent{\bf Data distribution.}
 We consider the data distribution with a blockwise AR(1) structure; that is, we first sample dimension $d=1$ from $X^1(1)\sim \mc{U}(\mc{S})$ and then we sample dimension $d=2,3$ from $$X^d(1)|X^{d-1}(1)\sim\begin{cases}
    0.9~\mc{U}(X^{d-1}(1)+\set{-2,-1,\dots,2})+0.1~\mc{U}(\mc{S}),&\text{if }X(1)^{d-1}\in[3,\abs{\mc{S}}-2]\\
    \mc{U}(\mc{S}),&\text{otherwise}
\end{cases}, $$ and finally we sample $(X^{3j-2}(1),X^{3j-1}(1),X^{3j}(1))$ from distribution same as $(X^{1}(1),X^{2}(1),X^{3}(1))$ for $j=2,\dots,\mc{D}/3$ (if $\mc{D}>3$), where $\mc{U}(A)$ is the uniform distribution on the set $A\subset \mc{S}$. The vocabulary size is $8$.

\noindent{\bf Experimental setup and implementation details.} We train our logits models using the training objective in \Eqref{eq:training-posterior} for data distributions with dimension $\mc{D}\in\set{3,6,9,12,15}$. All models are MLPs with SiLU activation functions, consisting of four hidden layers with hidden dimension 256. In our simulations, we consider both uniform and masked source distributions, and we train the model for the masked source distribution without time embeddings. We use the Adam optimizer with a learning rate of 1e-3. Each model is trained for 2e5 steps. In the sampling stage, for each sampler (\cref{alg:Time-corrected sampler,alg:Euler sampler,alg:Tau-leaping,alg:Location-corrected sampler}), we draw 1e5 samples from the target distribution using a number of steps $K\in\brac{10}\cup\set{15,20,30,40,50,75,100}$. We run each setting 10 times and record the total variation between the empirical data distribution (computed using 1e6 samples) and the estimated distribution (on the first 3 dimensions) and sampling time for each sampler. All experiments are conducted on 4 NVIDIA A100 GPUs.

\noindent{\bf Performance comparison of different samplers.} The additional simulation results for masked and uniform source distributions are presented in \cref{fig:mask_source_dims} and \cref{fig:uniform_source_dims}, respectively. We first compare our corrected samplers with the tau-leaping and Euler samplers. Under the same sampling time, the location-corrected sampler has the best performance among these samplers. For simulations with uniform source distribution, the time-corrected sampler outperforms the tau-leaping and Euler samplers, while in simulations with masked source distribution, it performs comparably to the Euler sampler. Additionally, in simulations with masked source distribution, our location-corrected sampler achieves faster convergence than the tau-leaping, Euler, and time-corrected samplers. Our simulation results also indicate that the time-corrected sampler yields a sampling time comparable to that of the Euler sampler when the number of steps is fixed, and the ratio of the additional computational cost of the location-corrected sampler (relative to the Euler sampler) to the computational cost of the Euler sampler tends to decrease as the number of steps increases. Since we only consider low-dimensional settings in our simulation, the additional computational cost of the location-corrected sampler is negligible when the number of steps is greater than dimension $\mc{D}$. We further compare our samplers with the (deterministic midpoint) high-order samplers \citep{ren2025fast}. The corrected samplers consistently outperform the RK2 sampler across all settings. Moreover, under a fixed sampling time, the location-corrected sampler achieves lower total variation than the RK2-Trapezoid sampler for the masked source distribution, and remains competitive with it for the uniform source distribution; this is consistent with our theoretical result that the location-corrected sampler enjoys a lower iteration complexity in the setting with a masked source distribution.

\begin{figure}[htbp]
    \centering

    \includegraphics[width=\linewidth]{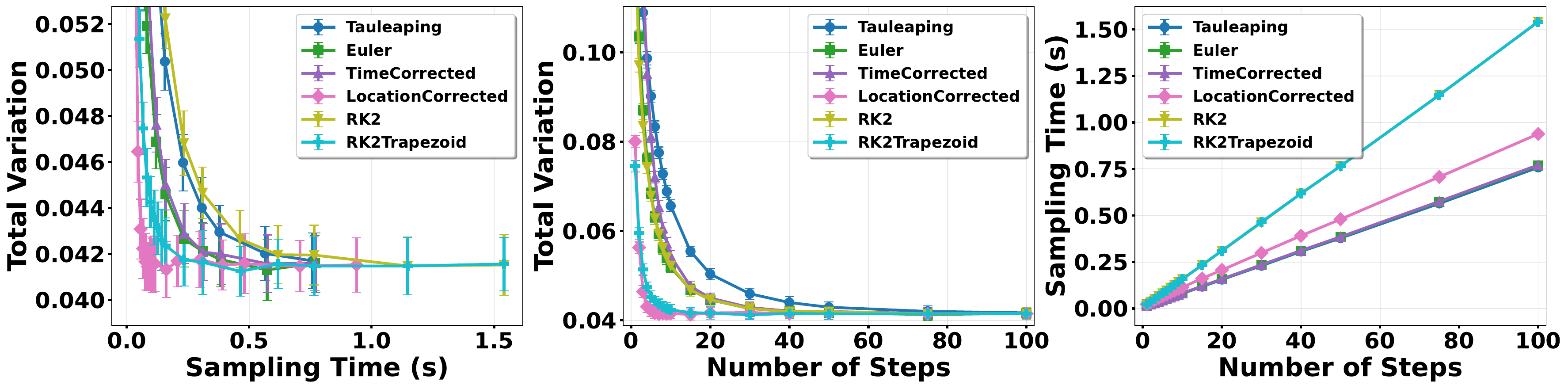}
    \vspace{-0.2cm}

    \includegraphics[width=\linewidth]{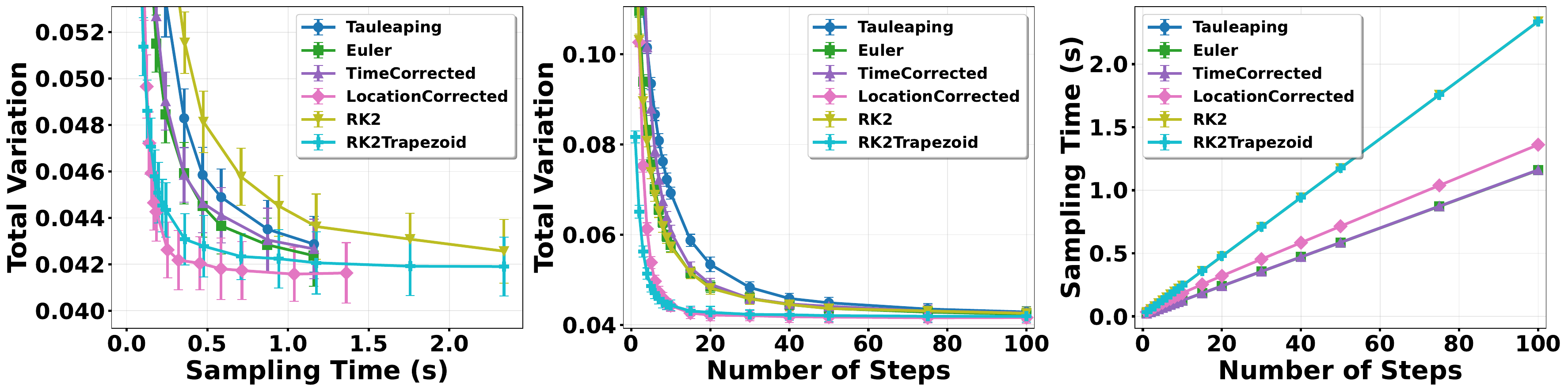}
    \vspace{-0.2cm}

    \includegraphics[width=\linewidth]{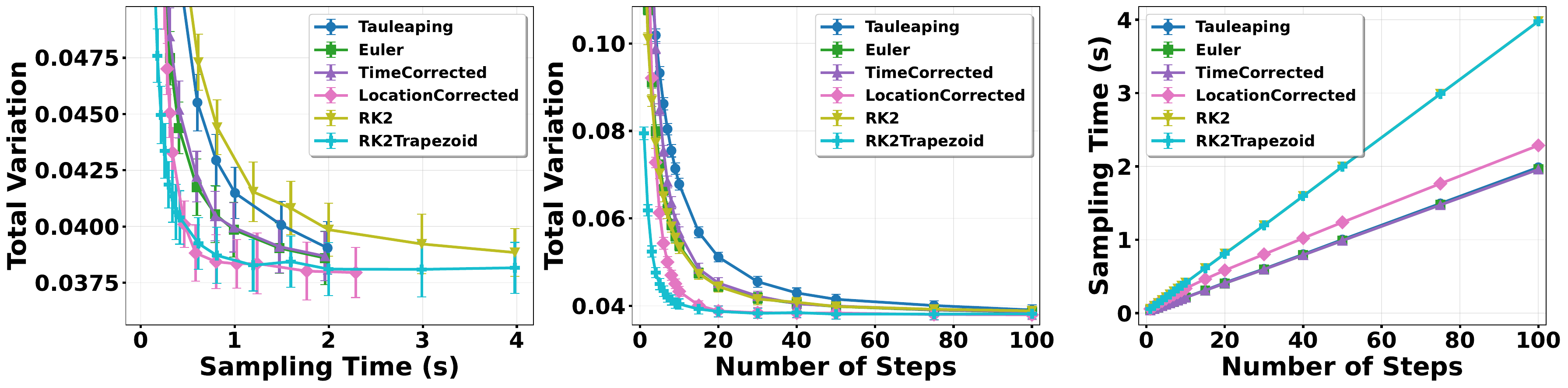}
    \vspace{-0.2cm}

    \includegraphics[width=\linewidth]{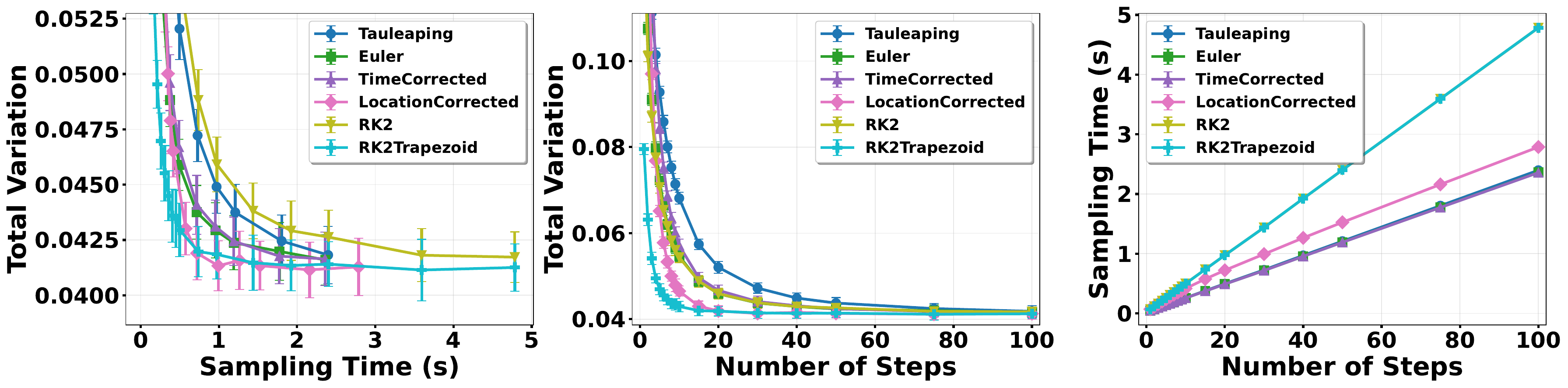}

    \caption{Performance comparison ($\mc{D}=3,6,12,15$, from top to bottom) with masked source distribution. (Left) Total variation v.s. sampling time; (Mid) Total variation v.s. number of steps; (Right) Sampling time v.s. number of steps.}
    \label{fig:mask_source_dims}
\end{figure}

\begin{figure}[htbp]
    \centering

    \includegraphics[width=\linewidth]{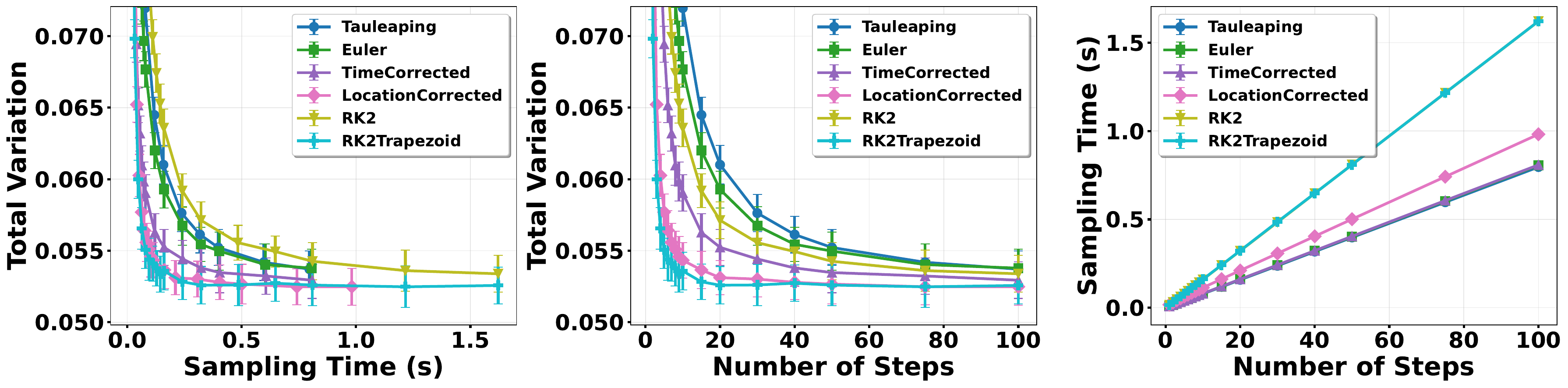}
    \vspace{-0.2cm}

    \includegraphics[width=\linewidth]{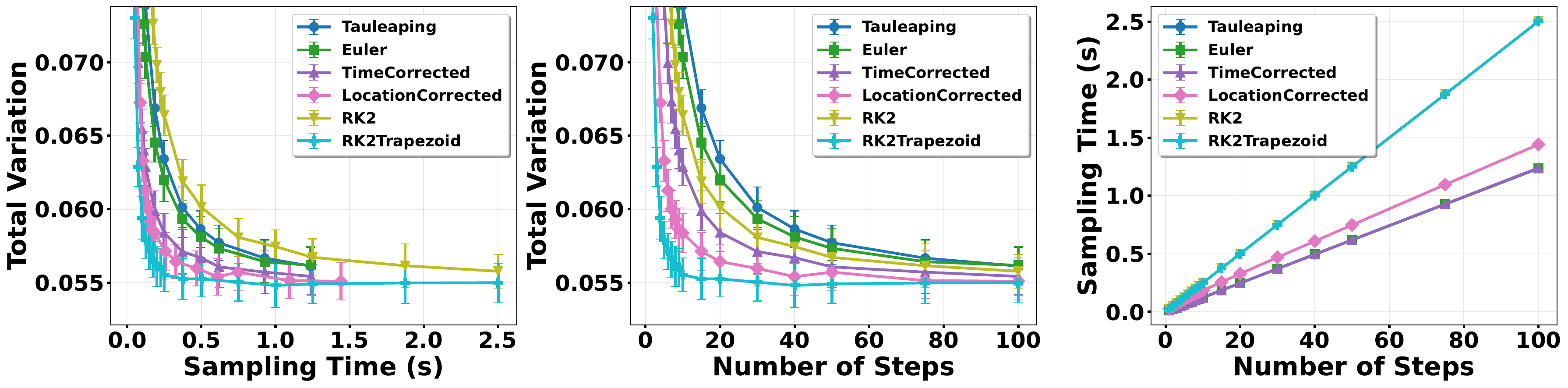}
    \vspace{-0.2cm}

    \includegraphics[width=\linewidth]{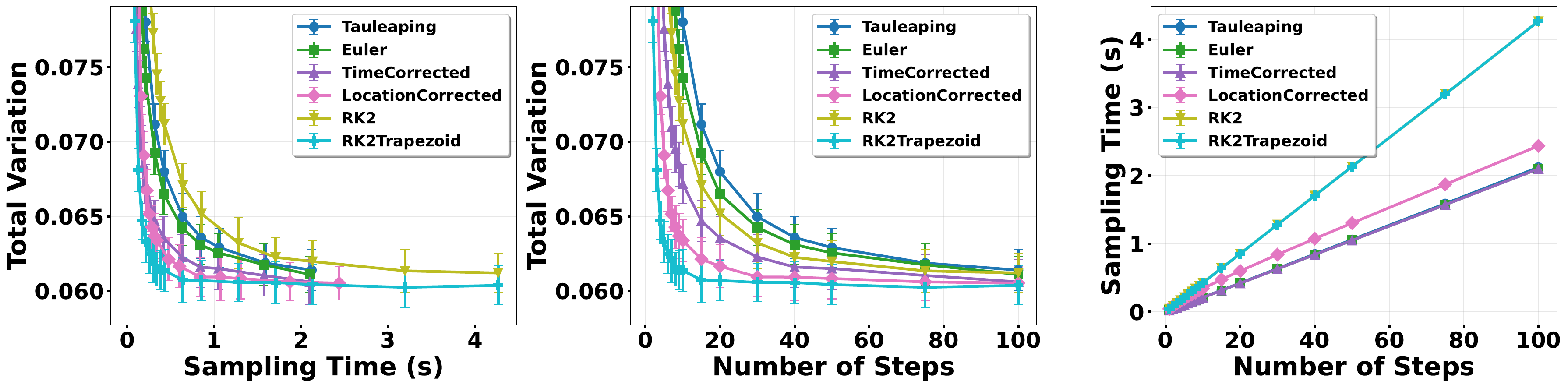}
    \vspace{-0.2cm}

    \includegraphics[width=\linewidth]{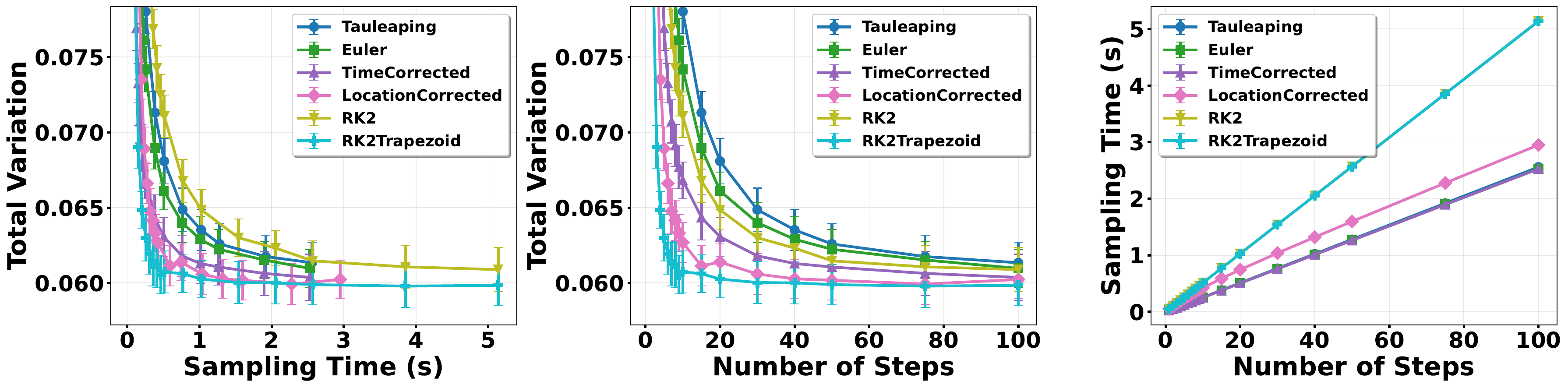}

    \caption{Performance comparison ($\mc{D}=3,6,12,15$, from top to bottom) with uniform source distribution. (Left) Total variation v.s. sampling time; (Mid) Total variation v.s. number of steps; (Right) Sampling time v.s. number of steps.}
    \label{fig:uniform_source_dims}
\end{figure}

\subsection{Text-to-Image Generation}
{\bf Experimental setup and implementation details.} We use the FUDOKI model \citep{wang2025fudoki} for image generation ($\mc{D}=576, \abs{\mc{S}}=16384$). The metric-induced probability path and conditional transition rate are computed in the same way as in their work. We consider three sampling algorithms (\cref{alg:Euler sampler in general,alg:Time-corrected sampler in general,alg:Location-corrected sampler in general}) in our experiments. For the corrected samplers, we choose $m=32$ to compute numerical integrals. For the location-corrected sampler (\cref{alg:Location-corrected sampler in general}), we adaptively choose $j=\mc{D}/K$, where $K$ is the number of steps. We consider different settings of the number of steps $K$ and time threshold for the location-corrected sampler $t_\theta$ in experiments. We record the average sampling time (second per image) for each setting and evaluate each sampler on the GenEval benchmark.

\noindent{\bf Performance comparison of different samplers.} The experimental results are reported in \cref{tab:t2i comparison}, \cref{tab:t2i location-corrected parameter}, and \cref{tab:t2i time-corrected m}. From \cref{tab:t2i comparison}, we can see that the time-corrected sampler consistently outperforms the Euler sampler using a comparable sampling time, demonstrating improved accuracy without sacrificing efficiency. The location-corrected sampler also achieves comparable performance to the Euler sampler using less sampling time. We also compare our samplers against two (deterministic midpoint) high-order samplers \citep{ren2025fast}. When the sampling time is small (e.g., comparable to Euler with $K=8$), both RK2 and RK2-Trapezoid perform clearly worse than the time-corrected sampler. From the results in \cref{tab:t2i location-corrected parameter}, we observe that when the number of steps is sufficiently small (say, 4 or 8), reducing the time threshold $t_\theta$ improves accuracy, demonstrating the effectiveness of location correction. In contrast, when the number of steps is relatively large (say, 16 or 32), the overall GenEval score is insensitive to the choice of $t_\theta$. Finally, \cref{tab:t2i time-corrected m} reports the performance of the time-corrected sampler with different numbers of grid points $m$. The results indicate that a moderate number of grid points performs well, while overly large $m$ (64 or 128) may degrade performance.

\begin{table}[htbp]
\centering
\caption{Performance of different samplers on the GenEval benchmark. We choose time threshold $t_{\theta}=0.5$ for location-corrected sampler (\cref{alg:Location-corrected sampler in general}).} 
{\small\begin{tabular}{@{}lllccccccc@{}} 
\toprule
 Sampler & \# of steps& Time & 
{\small Single Obj.} & 
{\small Two Obj.} & 
{\small Counting} & 
{\small Colors} & 
{\small Position} & 
{\small Color Attri.} & 
Overall $\uparrow$ \\
\midrule
Euler & 8 & 3.70482 & 90.00 & 81.82 & 40.00 & 85.11 & 63.00 & 50.00 &  68.321 \\
Time-corrected & 8&  4.07314 & 96.25 & 79.80 & 42.50 & 93.62 & 65.00 & 56.00 & 72.194 \\
Location-corrected &4 & 2.50190& 92.50 & 68.69 & 42.50 & 81.91 & 63.00 & 42.00 & 65.100 \\
RK2 & 4 & 3.23943 & 95.00 & 71.72 & 47.50 & 80.85 & 59.00 & 38.00 & 65.345 \\
RK2-Trapezoid & 4 & 3.23915 & 97.50 & 73.74 & 52.50 & 84.04 & 58.00 & 36.00 & 66.963 \\
\midrule
Euler & 16 & 7.50117 & 96.25 & 85.86 & 53.75 & 92.55 & 64.00 & 61.00 &  75.569 \\
Time-corrected & 16& 8.17569  & 95.00 & 84.85 & 48.75 & 89.36 & 70.00 & 66.00 & 75.660 \\
Location-corrected &8 & 5.58421& 92.50 & 76.77 & 60.00 & 88.30 & 63.00 & 58.00 & 73.094 \\
RK2 & 8 & 6.99116 & 92.50 & 80.81 & 50.00 & 87.23 & 68.00 & 51.00 & 71.590 \\
RK2-Trapezoid & 8 & 6.99089 & 92.50 & 82.83 & 53.75 & 90.43 & 68.00 & 64.00 & 75.251 \\
\midrule
Euler & 32 & 14.87659 & 93.75 & 85.86 & 52.50 & 91.49 & 67.00 & 65.00 &  75.933 \\
Time-corrected & 32&  16.82303 & 95.00 & 78.79 & 53.75 &  91.49 & 70.00& 72.00 & 76.838 \\
Location-corrected &16 &12.06032 & 93.75 & 82.83 &55.00 & 93.62 & 68.00 & 66.00 & 76.533 \\
RK2 & 16 & 14.45333 & 100.00 & 79.80 & 53.75 & 92.55 & 68.00 & 62.00 & 76.017 \\
RK2-Trapezoid & 16 & 14.45520 & 96.25 & 84.85 & 52.50 & 87.23 & 65.00 & 60.00 & 74.305 \\
\midrule
Euler & 64 & 30.14049 & 96.25 & 83.84 & 48.75 &88.30 &  72.00 & 63.00 & 75.356 \\
Time-corrected & 64& 33.08153  & 91.25 & 86.87 & 51.25&  91.49 & 66.00& 70.00 & 76.143 \\
Location-corrected &32 & 23.95799& 97.50 & 86.87 &50.00 & 90.43 & 67.00 &64.00& 75.966 \\
RK2 & 32 & 29.87733 & 92.50 & 83.84 & 56.25 & 87.23 & 65.00 & 74.00 & 76.470 \\
RK2-Trapezoid & 32 & 29.88214 & 95.00 & 79.80 & 47.50 & 90.43 & 67.00 & 68.00 & 74.621 \\

\bottomrule
\end{tabular}}\label{tab:t2i comparison}
\end{table}

\begin{table}[htbp]
\centering
\caption{Performance of location-corrected sampler with different time threshold $t_\theta$ on the GenEval benchmark.} 
{\small\begin{tabular}{@{}lllccccccc@{}} 
\toprule
Time threshold ($t_\theta$) & \# of steps&  Time & 
{\small Single Obj.} & 
{\small Two Obj.} & 
{\small Counting} & 
{\small Colors} & 
{\small Position} & 
{\small Color Attri.} & 
Overall $\uparrow$ \\
\midrule
0 &4 &3.57322 & 91.25 & 74.75 & 47.50 & 86.17 &  61.00 & 54.00 & 69.111 \\
0.25 &4 & 3.06689&  90.00 & 74.75 & 47.50 &  81.91 &  70.00 & 47.00 &  68.527 \\
\midrule
0 &8 & 7.71165& 92.50 & 82.83 & 53.75 & 89.36 & 66.00 & 63.00 & 74.573 \\
0.25 &8 &6.67486 & 92.50 & 78.79 & 48.75 & 86.17 &  66.00 & 55.00 & 71.201 \\
\midrule
0 &16 & 16.47159&  93.75 &86.87 & 52.50&86.17 &61.00 & 66.00& 74.381 \\
0.25 &16 & 13.77041 & 92.50 & 81.82 & 56.25 & 86.17 & 72.00 & 62.00 & 75.123 \\
\midrule
0 &32 &32.07903& 93.75 & 84.85 & 46.25 & 89.36 &64.00 & 68.00 & 74.368 \\
0.25 &32 & 27.70932& 93.75 & 85.86 & 52.50 & 87.23 & 64.00 & 64.00 & 74.557 \\

\bottomrule
\end{tabular}}\label{tab:t2i location-corrected parameter}
\end{table}

\begin{table}[htbp]
\centering
\caption{Performance of time-corrected sampler with different numbers of grid points $m$ on the GenEval benchmark. The number of sampling steps is $K=8$.}
{\small\begin{tabular}{@{}llccccccc@{}}
\toprule
$m$ & Time &
{\small Single Obj.} &
{\small Two Obj.} &
{\small Counting} &
{\small Colors} &
{\small Position} &
{\small Color Attri.} &
Overall $\uparrow$ \\
\midrule
4   & 3.76407 & 93.75 & 78.79 & 50.00 & 89.36 & 66.00 & 56.00 & 72.317 \\
8   & 3.79770 & 93.75 & 81.82 & 46.25 & 88.30 & 64.00 & 53.00 & 71.186 \\
16  & 3.97040 & 93.75 & 85.86 & 47.50 & 87.23 & 67.00 & 54.00 & 72.557 \\
32  & 4.07314 & 96.25 & 79.80 & 42.50 & 93.62 & 65.00 & 56.00 & 72.194 \\
64  & 4.47651 & 93.75 & 78.79 & 45.00 & 84.04 & 67.00 & 57.00 & 70.930 \\
128 & 5.24742 & 93.75 & 79.80 & 41.25 & 86.17 & 58.00 & 53.00 & 68.661 \\
\bottomrule
\end{tabular}}\label{tab:t2i time-corrected m}
\end{table}


\end{document}